\def\eqref#1{equation~\ref{#1}}
\def\Eqref#1{Equation~\ref{#1}}
\def\1{\bm{1}}
\def\eps{{\epsilon}}
\DeclareMathAlphabet{\mathsfit}{\encodingdefault}{\sfdefault}{m}{sl}
\SetMathAlphabet{\mathsfit}{bold}{\encodingdefault}{\sfdefault}{bx}{n}
\DeclareMathOperator*{\argmax}{arg\,max}
\DeclareMathOperator*{\argmin}{arg\,min}
\newcommand{\cupdot}{\dot\cup}
\newcommand{\bigcupdot}{\mathop{\dot\bigcup}}
\definecolor{dark2green}{rgb}{0.1, 0.65, 0.3}
\definecolor{dark2orange}{rgb}{0.9, 0.4, 0.}
\definecolor{dark2purple}{rgb}{0.4, 0.4, 0.8}
\newcommand{\first}[1]{\textbf{\textcolor{dark2green}{#1}}}
\newcommand{\second}[1]{\textbf{\textcolor{dark2orange}{#1}}}
\newcommand{\third}[1]{\textbf{\textcolor{dark2purple}{#1}}}
\newtheoremstyle{sig}
  {}
  {}
  {\itshape}
  {}
  {\scshape}
  {.}
  {.5em}
  {#1 #2\thmnote{\quad(#3)}}
\theoremstyle{sig}
\newtheorem{theorem}{Theorem}
\newtheorem{corollary}{Corollary}
\newtheorem{dfn}{Definition}
\newtheorem{remark}{Remark}
\newtheorem{prop}{Proposition}
\definecolor{c1}{HTML}{9B5353}
\definecolor{c2}{HTML}{9B5353}
\definecolor{c3}{HTML}{9B5353}
\definecolor{myblue}{HTML}{E6F3FC} 
\definecolor{mygray}{HTML}{DBE2E9} 
\definecolor{mygreen}{HTML}{006400}
\newcommand{\head}[1]{\vspace{1.7mm}\noindent{{\textcolor{c3}{\bf #1.}}}}
\title{Best of Both Worlds: Advantages of Hybrid Graph Sequence Models}
\author{Ali Behrouz$^1$, Ali Parviz$^2$, Mahdi Karami$^1$, Clayton Sanford$^1$, Bryan Perozzi$^1$, Vahab Mirrokni$^1$ \\
$^1$Google Research, $^2$New Jersey Institute of Technology \\
\texttt{\{alibehrouz, mahdika, chsanford, bperozzi, mirrokni\}@google.com} \\
\texttt{ap2248@njit.edu}
\vspace{-2ex}
}
\newcommand{\TODO}[1]{}
 \newcommand{\todoM}[2][]{}
\newcommand{\todoAB}[2][]{}
 \newcommand{\todoAP}[2][]{}
\newcommand{\todoV}[2][]{}
\newcommand{\todoB}[2][]{}
\newcommand{\todoC}[2][]{}
\begin{document}

\maketitle

\begin{abstract}
Modern sequence models (e.g., Transformers, linear RNNs, etc.) emerged as dominant backbones of recent deep learning frameworks, mainly due to their efficiency, representational power, and/or ability to capture long-range dependencies. Adopting these sequence models for graph-structured data has recently gained popularity as the alternative to Message Passing Neural Networks (MPNNs). There is, however, a lack of a common foundation about what constitutes a good graph sequence model, and a mathematical description of the benefits and deficiencies in adopting different sequence models for learning on graphs. To this end, we first present Graph Sequence Model (GSM), a unifying framework for adopting sequence models for graphs, consisting of three main steps: (1) Tokenization, which translates the graph into a set of sequences; (2) Local Encoding, which encodes local neighborhoods around each node; and (3) Global Encoding, which employs a scalable sequence model to capture long-range dependencies within the sequences. This framework allows us to understand, evaluate, and compare the power of different sequence model backbones in graph tasks. Our theoretical evaluations of the representation power of Transformers and modern recurrent models through the lens of global and local graph tasks show that there are both negative and positive sides for both types of models. Building on this observation, we present GSM++, a fast hybrid model that uses the Hierarchical Affinity Clustering (HAC) algorithm to tokenize the graph into hierarchical sequences, and then employs a hybrid architecture of Transformer to encode these sequences. Our theoretical and experimental results support the design of GSM++, showing that GSM++ outperforms baselines in most benchmark evaluations. 
\end{abstract}

\section{Introduction} \label{sec:intro}
Message-passing graph neural networks (MPNNs) have been the leading approach for processing graph data~\citep{kipf2016semi, Gilmer2017NeuralMP, Chami2020MachineLO, morris2020weisfeiler}. However, with the increasing popularity of Transformer architectures~\citep{Vaswani2017AttentionIA} in natural language processing and computer vision, recent research has shifted towards developing graph Transformers (GTs), which are designed to handle the complexities of graph-structured data more effectively. Graph Transformers have demonstrated compelling results, particularly by leading in tasks like molecular property prediction~\citep{ying2021transformers, hu2020open, masters2023gps}. Their advantage over traditional MPNNs is often attributed to tendency of MPNNs  to focus on local structures, making them less effective at capturing global or long-range relationships due to issues like over-smoothing~\citep{Li2018DeeperII}, over-squashing~\citep{Alon2020OnTB, di2023over, Dwivedi2022LongRG}, and restricted expressive power~\citep{Barcel2019LogicalEO}. These limitations could potentially harm the model’s performance. In contrast, GTs~\citep{GPS} can aggregate information from all nodes across the graph, reducing the local structure bias.

Traditional Transformer-based architectures, while powerful for sequence analysis, face limitations in scalability in long-context tasks due to their quadratic computational complexity. 
Various strategies have been proposed to mitigate this issue~\citep{tay2022efficient}, including sparsifying the dense attention matrix~\citep{zaheer2020big, beltagy2020longformer, roy2020efficient, kitaevreformer},  
low-rank approximations of the attention matrix~\citep{wang2020linformer}, and kernel-based attention mechanisms~\citep{choromanski2020rethinking, kacham2024polysketchformer}.
While these methods enhance computational efficiency, they often come at the expense of reduced expressiveness~\citep{mehta2022long}. 
In recent years, attention-free sequence models have emerged as a promising alternative to
Transformers for sequence modeling. Leveraging recurrent neural networks (RNNs)~\citep{orvieto2023LRU, Peng2023RWKVRR, beck2024xlstm, behrouz2024chimera} and long convolutions~\citep{poli2023hyena, karami2024orchid}, offer sub-quadratic, hardware-efficient sequence mixing operators that can capture long-range dependencies with strong generalization on sequences of varying lengths.

Given the promising potential of the sub-quadratic sequence models, there is growing interest in extending them to the graph domain as an alternative to graph Transformers~\citep{ding2023recurrent, behrouz2024graph, Huang2024WhatCW}. 
However, a significant technical challenge arises from the inherent differences between graphs and other structured data, such as text,
which is naturally causal. Graphs exhibit a complex topology and lack a natural, linear node ordering. Attempting to impose a naive tokenization strategy, such as sorting nodes into a sequence, undermines the crucial inductive bias of permutation equivariance inherent to graphs. This misrepresentation of graph structure can lead to poor generalization performance. Furthermore, there is a lack of a common foundation about what constitutes a good graph sequence model, and a mathematical description of the benefits and deficiencies of adopting different sequence models for learning on graphs.

In this study, we introduce a unified model that offers both flexibility and adaptability for designing models intended for learning on graphs. This approach facilitates the effortless creation of diverse models and allows researchers to efficiently explore and compare various architectures. Using this method, we conduct both theoretical and empirical analyses of graph sequence models by evaluating their performance on tasks such as graph connectivity and counting, as well as by investigating the role of node orderings. Our findings indicate that while the permutation equivariance of Transformer-based models is often considered a desirable property, it limits their capacity to execute counting tasks on graphs effectively. Additionally, we demonstrate that SSM/RNN based models, are not theoretically bounded when applied to color counting tasks, highlighting the power of these architectures in specific graph learning scenarios. These findings are the first steps toward better understanding of the power of graph sequence models beyond traditional metrics (e.g., WL test) and can help to answer what types of sequence models are the best, given the type of the task at hand. 

  We identify the strengths and weaknesses of different tokenization strategies, and to overcome their limitations, we present a novel hierarchical tokenization that theoretically provides advantages over existing methods. This approach, combined with a hybrid sequence model achieves improved performance across diverse graph tasks. Our experiments validate the effectiveness of this hybrid architecture, offering a more flexible and comprehensive solution than existing models. These insights contribute to a broader understanding of model capabilities and limitations, informing the development of more specialized models for graph-based learning tasks.

\head{Contributions and Roadmap} 
In \textcolor{c1}{\S}\ref{sec:framework}, we present Graph Sequence Model (GSM) framework, that can help us to systematically study the power of GSMs in different scenarios. We then in \textcolor{c1}{\S}\ref{sec:rnn-vs-transformer} aim to understand strengths and weaknesses of different types sequence models for graph tasks. To this end, in \textcolor{c1}{\S}\ref{sec:counting-theory}, we show how recurrent nature of a model can help it to perform tasks like counting more effectively, while permutation equivariance of Transformers make them unable to count. In \textcolor{c1}{\S}\ref{sec:ordering-theory}, we analyze sequence models through the lens of sensitivity. We show that while linear recurrent models (e.g., SSMs) have a better inductive bias about the nodes' distance, this advantages can cause representational collapse in deep models. Using these results, we motivate a combination of transformers and SSMs so the SSM module can enhance the inductive bias, and the permutation equivariance of Transformer can avoid representational collapse in the model. In \textcolor{c1}{\S}\ref{sec:connectivity-theory}, we evaluate the reasoning capability of graph sequence models through the lens of connectivity tasks. We show that Transformers are more effective than recurrent models in such tasks, but with a small modification of the tokens' order, recurrent models can become extremely efficient. In \textcolor{c1}{\S}\ref{sec:theory-tokenization}, we theoretically analyze the effect of tokenization methods (node or subgraph), and how it can help to improve the efficiency and solve the fundamental problem of motif counting in graphs. Given our theoretical observations and what we have learned from these results, in \textcolor{c1}{\S}\ref{sec:enhancing}, we present GSM++ that uses a novel tokenization based on the Hierarchical Affinity Clustering (HAC) tree. GSM++ further employs a hybrid sequence model with two layers of SSM followed by a transformer block. We then present a Mixture of Tokenization (MoT), allowing the combination of different sets of tokenizations that are the best for each node, to further enhance the effectiveness and efficiency of GSM++.

\begin{figure*}
    \centering
    \vspace{-7ex}
    \includegraphics[width=0.9\linewidth]{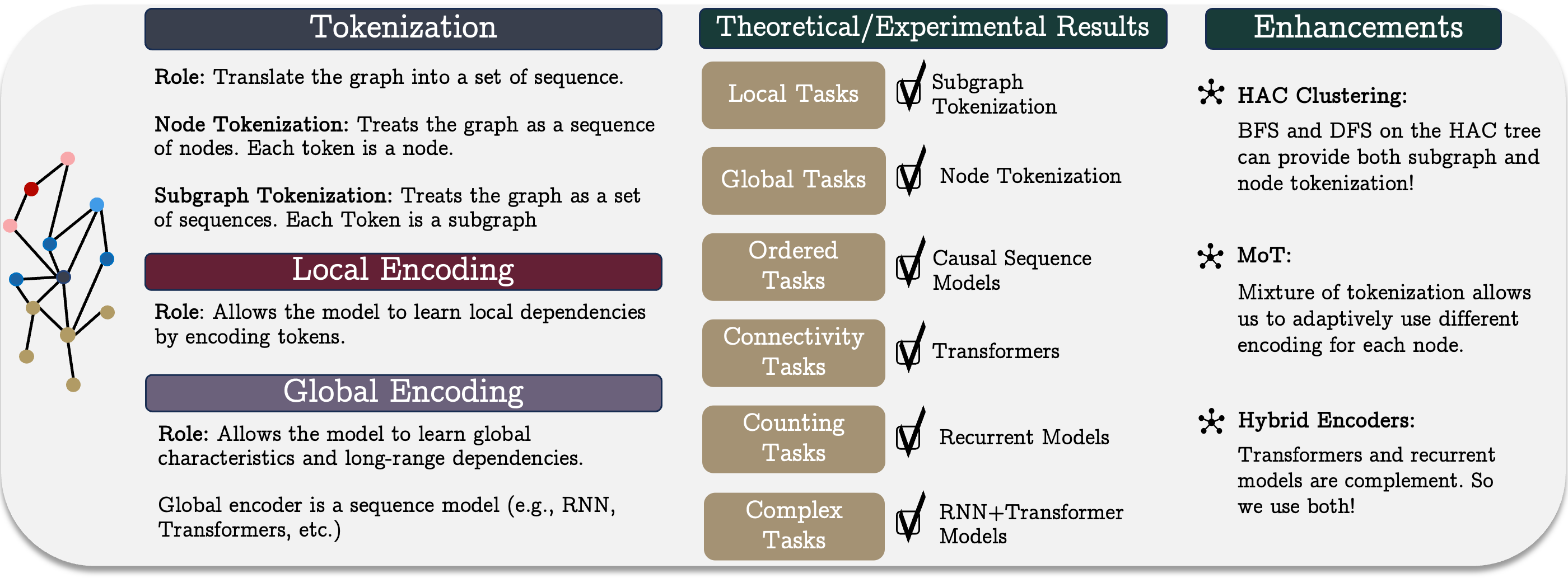}
    \caption{\footnotesize\textbf{Overview of Graph Sequence Model (GSM)}. GSM Consists of three stages: (1) Tokenization, (2) Local Encoding, and (3) Global Encoding. We provide a foundation for strengths and weaknesses of different tokenizations and sequence models. Finally, we present three methods to~enhance~the~power~of~GSMs.}
    \label{fig:GSM}
    \vspace{-7pt}
\end{figure*}

\section{Encoding Graphs to Sequences: A Unified Model} \label{sec:framework}
Despite a variety of GNNs with diverse modules that are designed based on sequence models, we find that each part of these architectures is responsible for encoding a specific characteristic of the graph. To formalize this, in this section, we present our unified model consisting of three main stages: (1) Tokenization, (2) Local Encoding, and (3) Global Encoding.

\subsection{Tokenization} \label{sec:tokenization}
Sequence models are inherently designed to process sequences of tokens. To adapt them for graph-structured data, the graph must first be translated into a set of sequences~\citep{muller2023attending, behrouz2024graph, ding2023recurrent}. These approaches can be  categorized into two main groups:

\head{Node/Edge Tokenizers} Node or edge tokenization methods treat the graph as a sequence of node/edges without considering how they are connected. Accordingly, these methods lack inductive bias about the graph structure and so they require to be augmented with positional or structural encoding to inject information about the graph structure. Let $G = (V, E)$, be a graph, $V = \{v_1, \dots, v_{|V|}\}$ is the set of nodes, and $P \in \mathbb{R}^{n \times d}$ is the positional/structural encoding matrix, whose rows encode the position of nodes. In this case, we translate the graph as a sequence of:
$
    G := P_{v_1}, P_{v_2}, \dots, P_{v_{|V|}}
$
Similarly, for edge tokenization we can replace $\{v_1, \dots, v_{|V|}\}$ with $\{e_1, \dots, e_{|E|} \}$. The main drawback of methods based on node tokenization is their computational complexity. That is, treating the graph as a sequence of nodes (resp. edges) results in having a sequence with length $|V|$ (resp. $|E|$), meaning that for quadratic models (e.g., Transformers) the training time complexity is at least $\mathcal{O}\left( |V|^2 \right)$ (resp. $\mathcal{O}\left( |E|^2 \right)$). 

\head{Subgraph Tokenizers}
To reduce the computational cost of node tokenization and incorporate inductive bias, several methods propose treating the graph as a sequence or sequences of subgraphs and then encode these sequences using a sequence model.
Formally, given a graph $G = (V, E)$, the graph can be represented as a set of sequences of subgraphs:
\begin{align}
    G:= \{S^{(1)}, \dots, S^{(T)} \}, 
    \text{ where } 
    \quad S^{(i)} = G[H^{(i)}_1], \dots, G[H^{(i)}_{\ell}] \quad \text{and} \:\: H_j^{(i)} \subseteq V. \label{eq:subgraph-token}
\end{align}
When $T < |V|$, we refer to this process as \textit{patching}. A pioneer approach in this direction is DeepWalk~\citep{deepwalk}, which uses random walks to sample from the graph and tokenize it into a set of sequences. A more recent method is the $k$-hop neighborhood tokenization used by NAGphormer~\citep{chen2023nagphormer}, where each node's hierarchical neighborhood is treated as its representative sequence. For further discussion and examples of these methods, see \textcolor{c1}{Appendix}~\ref{app:rw}. Since using $T = |V|$, $\ell = 1$, and $H^{(i)}_{1} = \{v_i\}$ for $i = \{1, \dots, |V|\}$, reduces subgraph tokenization to node tokenization, unless stated otherwise, we will use this formulation moving forward.

Although there is a variety of studies across the aforementioned categories, a common foundation is still lacking regarding what constitutes effective tokenization and what differentiates them with respect to the task. In \textcolor{c1}{Section}~\ref{sec:rnn-vs-transformer}, we theoretically show that each of node and subgraph tokenizations 
offer their own advantages and disadvantages.
Accordingly, the choice between node tokenization, subgraph tokenization, or a combination of both depends on the specific task at hand
(see \textcolor{c1}{Section}~\ref{sec:MoT}). We further validate this theoretical foundation using several experiments in \textcolor{c1}{Section}~\ref{sec:exp-tokenization}.

\subsection{Local Encoding}\label{sec:local-encoding}
Following the tokenization step, where the graph is translated into a set of sequences (representing nodes, edges, or subgraphs), the main objective of the \emph{Local Encoding} step is to capture and learn the graph’s local characteristics by vectorizing these tokens. Formally, given a graph $G = (V, E)$, let $\mathcal{G}$ denote the set of all subgraphs,
and $\phi_{\text{Local}}(.) : \mathcal{G} \rightarrow \mathbb{R}^{d_{\text{Local}}}$ represent a GNN encoder. With the graph tokenized as in \autoref{eq:subgraph-token}, we define the local encoding as:  
\begin{align}
    \phi_{\text{Local}}\left(G\right) := \{\tilde{S}^{(1)}, \dots, \tilde{S}^{(T)} \} 
    \text{ where } 
    \tilde{S}^{(i)} = \phi_{\text{Local}}\left(G[H^{(i)}_1] \right), \dots, \phi_{\text{Local}}\left(G[H^{(i)}_{\ell}]\right).
\end{align}
While the choice of encoder $\phi(.)$ is arbitrary, convolutional MPNNs are typically preferred due to their ability to effectively learn local dependencies around each node. 
As an illustrative example of this step and its goal, assume that the $k$-hop neighborhood tokenization was used in the previous step,
then each $\tilde{S}^{(i)}$  represents a sequence describing the hierarchical neighborhood around node  $v_i$ and $\phi_{\text{Local}}\left(G[H^{(i)}_{j}]\right)$ is the encoding of $j$-th hop neighborhood of $v_i$.

\subsection{Global Encoding} \label{sec:global-encoding}
As discussed, the local encoding stage serves two key roles: (1) It encodes the local characteristics of the graph, injecting inductive bias in the model; and (2) it  vectorizes the tokens, preparing them for a sequence encoder in the \emph{Global Encoding} stage. 
Here, the main objective is to learn dependencies across all tokens, enabling the model to capture long-range relationships.
Formally, let $\tilde{S}^{(i)}$s be the sequences of encodings obtained from the local encoding stage, for each $i = 1, \dots, T$, we have:
\begin{align}
    \mathbf{y}^{(i)} = \Psi_i \left( \textsc{Agg}_{i}\left(\tilde{S}^{(1)}, \tilde{S}^{(2)}, \dots, \tilde{S}^{(T)} \right)  \right), 
\end{align}
where $\Psi_i(.)$ are sequence models and $\textsc{Agg}_{i}(.)$ are aggregator functions. 
In most existing node tokenization-based methods $\textsc{Agg}_{i}(.) = \textsc{Concat}(.)$ (concatenation), while in most subgraph tokeniztion-based methods $\textsc{Agg}_{i}(.) = (.)_i$ (broadcasting $i$-th element). 
However, sequence models themselves can be used as aggregator functions, as demonstrated by \citet{behrouz2024graph}.

In \textcolor{c1}{Appendix}~\ref{app:special-instances}, we illustrate that several well-known methods for learning on graphs are special instances of this \emph{Graph Sequence Model (GSM)} framework, highlighting it universality.

\section{Choosing a Sequence Model} \label{sec:rnn-vs-transformer}
One critical question remains -- what sequence model should one use?
Following the above mentioned framework, one can simply replace different sequence encoders in the global encoding stage and combine them with different tokenization methods, resulting in hundreds of potential graph learning models. However, there is a lack of a common foundation about what constitutes a good model in each of these stages, and a mathematical description of the benefits and deficiencies of adopting different sequence models for learning on graphs. To this end, in this section, we theoretically discuss the advantages and disadvantages of different tokenizations and different sequence models in several graph tasks, providing a guideline for the future research and model developments.

\subsection{Counting Tasks on Graphs}\label{sec:counting-theory}
In the first part, we focus on \emph{counting tasks}, 
where the objective is to count the number of nodes with each particular color in a node-colored graph. 
Such tasks are analogous to the copying tasks in sequence modeling, which are common benchmarks to measure the abilities of a sequence model~\citep{copying-tasks, gu2023mamba, barbero2024transformers}, in the sense that counting tasks require considering all nodes and even missing a single node's color can  potentially result in incorrect prediction. Hence, let's first recall a proposition on the inability of Transformers in counting tasks:   

\begin{prop} [Proposition 6.1 of \citet{barbero2024transformers}]
    A Transformer model based on non-causal attention and without proper positional encodings is immediately unable to count.
\end{prop}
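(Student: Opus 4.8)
The plan is to exhibit two node-colored graphs that the counting task must separate but that a positional-encoding-free, non-causal Transformer maps to identical representations, so that no admissible decoder can recover the counts. First I would reduce to the cleanest instance: an edgeless graph $G$ with $m_c$ nodes of each color $c \in \{1,\dots,k\}$, so that the local encoding $\phi_{\text{Local}}$ assigns every color-$c$ node the same vector $x_c$ (the neighborhood of an isolated node depends only on its color). This isolates the global encoder as the only component that could perform the count, and for a negative result it suffices to find a single family of inputs on which it fails.

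Next I would write out one non-causal self-attention layer and invoke the absence of positional encodings. Since all color-$c$ tokens then share the same query, key, and value vectors $q_c, k_c, v_c$, the output of a color-$c$ token is the convex combination
\begin{equation}
    o_c \;=\; \sum_{c'} w_{c,c'}\, v_{c'}, \qquad w_{c,c'} \;=\; \frac{m_{c'}\, e^{\langle q_c, k_{c'}\rangle}}{\sum_{c''} m_{c''}\, e^{\langle q_c, k_{c''}\rangle}}.
\end{equation}
The key step is the observation that softmax normalization makes these weights homogeneous of degree zero in the multiplicities: replacing every $m_{c'}$ by $\lambda m_{c'}$ cancels the common factor $\lambda$ in numerator and denominator, leaving $w_{c,c'}$, and hence $o_c$, unchanged. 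Thus if $G'$ is the graph with doubled color counts $(2m_1,\dots,2m_k)$, every color-$c$ node receives exactly the representation $o_c$ in both $G$ and $G'$ after the first layer.

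I would then propagate this invariance through the stack by induction. After layer one, the inputs to layer two form a finite palette $\{o_c\}$ with multiplicities $m_c$ in $G$ and $2m_c$ in $G'$; the same degree-zero homogeneity applies, so layer two, and every subsequent layer, produces identical per-color outputs on $G$ and $G'$. Finally, any permutation-invariant readout the model is allowed to use, whether a per-node label map or a mean-pool, is a function of this multiset of per-color representations and of their frequency ratios alone, so it too coincides on $G$ and $G'$. But the counting task demands the answers $(m_c)$ and $(2m_c)$, which differ; hence the model cannot count, and since the distinguishing information is already destroyed after the first attention layer, the failure is immediate.

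The main obstacle I anticipate is being precise about the readout class: if one permitted an un-normalized sum-pool, doubling the tokens would double the output and counting would become possible, so the statement implicitly restricts to the normalized aggregations standard in Transformers, and the argument must make clear that it is exactly the softmax normalization, not permutation equivariance by itself, that removes the count. A secondary point to handle carefully is that same-color nodes can have different neighborhoods in a general graph, which MPNN local encoding could exploit; restricting the counterexample to the edgeless instance cleanly removes this confound while still witnessing the task's unsolvability.
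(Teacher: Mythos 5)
Your proposal is correct, and it reconstructs essentially the argument behind the cited result: the paper does not reprove this proposition but imports it from \citet{barbero2024transformers}, whose Proposition 6.1 rests on exactly the observation you identify, that softmax normalization makes non-causal attention outputs invariant under scaling all token multiplicities, so inputs with different counts collapse to identical representations. Your added care about the readout class (normalized pooling versus sum-pooling) and about isolating the edgeless instance to neutralize the local encoder is consistent with, and if anything slightly more explicit than, the cited treatment.
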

This limitation of non-causal Transformers
 in counting tasks raises an important question: \emph{Can the inherent causality of recurrent models resolve this issue, and are they better suited for such tasks?}

\begin{theorem}
    Let $\mathbf{C}$ be the number of colors, and $m$ be the width of a recurrent model, the recurrent model can count the number of nodes with each specific color iff $m \geq \mathbf{C}$.
\end{theorem}

\vskip -7pt
\head{Takeaway} 
When dealing with sequential tasks that are less dependent on the graph's topology and permutation equivariance, recurrent models are more powerful than \emph{non-causal} Transformers.

\subsection{Importance of Node Ordering} \label{sec:ordering-theory}
As discussed earlier, due to the sequential nature of some graph tasks, the permutation equivariant property of non-causal Transformers can undermine their representational power. Beyond simple counting tasks, several important and complex graph datasets and tasks—such as neural algorithmic reasoning tasks in sequential algorithms~\citep{xu2024recurrent} and CLRS dataset~\citep{Bentley1984Programming, Gavril1972Algorithms}—involve naturally ordered nodes, requiring a causal encoder to effectively capture their inherent order. 
On the other hand, most subgraph tokenizers produce sequences with an implicit order (e.g., $k$-hop neighborhoods), which requires a causal model to capture their hierarchy. Furthermore, most powerful modern sequence models are naturally causal 
and integrating them into the GSM framework requires additional considerations. Accordingly, in this section, we analyze how node ordering can impact the performance of the model, and if there is an ordering mechanism for nodes that can enhance the performance of causal sequence models. 

\vskip -5pt
\head{Sensitivity Analysis} 
Over-squashing is an undesirable phenomenon in GNNs that is related to representational collapse. One way to analyze over-squashing in a model is to study how sensitive is the final output token to an input
token at position $i$: i.e.,  $\frac{\partial \mathbf{y}_j}{\partial \mathbf{x}_i}$, where $\mathbf{y}_j$ and $\mathbf{x}_i$ are output and input of the model at position $j$ and $i$, respectively. Next, we discuss the sensitivity of SSMs (with HiPPO initialization~\citep{gu2020hippo}) along the model’s depth after $L$ layers:

\begin{theorem} \label{thm:SensitivitySSM}
    For any $k > i$ let $\mathcal{A}(k, i) = (1 - \frac{1}{k}) (1 - \frac{1}{k - 1}) \dots (1 - \frac{1}{i}) \frac{1}{i}$ and $L$ be the number of layers. For any $i < n$, the gradient norm of the HiPPO operator for the output of layer $L$ at time $n + 1$ (i.e., $\mathbf{y}^{(L)}_{n + 1}$) with respect to input at time $i$ (i.e., $\mathbf{x}_i$) satisfies:
    \begin{align}\nonumber
          \mathcal{C}_{\text{low}}^{(L)} \left|\left| \sum_{k_1 \geq i} \dots \hspace{-2ex}\sum_{k_L \geq k_{L - 1}} \hspace{-2ex} \mathcal{A}\left( n - 1, k_L \right) \prod_{\ell = 2}^{L - 1} \mathcal{A}\left( k_{\ell} - 1, k_{\ell - 1}\right)  \mathcal{A}\left( k_1 - 1, i \right) \right|\right| \leq ||\frac{\partial \: \mathbf{y}^{(L)}_{n + 1}}{\partial \: \mathbf{x}_i}|| \leq \mathcal{C}^{(L)}_{\text{up}} \left(\frac{1}{n} \right)^{L}
    \end{align}
\end{theorem}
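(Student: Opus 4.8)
The plan is to reduce the $L$-layer sensitivity to a composition of single-layer Jacobians, each governed by the telescoping structure encoded in $\mathcal{A}(\cdot,\cdot)$, and then to estimate the resulting nested sum from both sides.

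First I would write down the discretized HiPPO-LegS recurrence with its characteristic time-varying step size $1/k$, namely $\mathbf{h}_k = (\mI - \tfrac{1}{k}\mA)\,\mathbf{h}_{k-1} + \tfrac{1}{k}\mB\,\mathbf{x}_k$ with readout $\mathbf{y}_k = \mC\,\mathbf{h}_k$. For a single layer the Jacobian of the state at time $n$ with respect to the input at time $i$ telescopes into $\frac{\partial \mathbf{h}_n}{\partial \mathbf{x}_i} = \big(\prod_{k=i+1}^{n}(\mI - \tfrac{1}{k}\mA)\big)\tfrac{1}{i}\mB$. Since all factors are polynomials in the fixed matrix $\mA$ they commute and can be treated along a common eigenbasis; along the slowest HiPPO mode each scalar factor is $(1-\tfrac{1}{k})$, so the product collapses to $\prod_{k=i}^{n-1}(1-\tfrac1k)\cdot\tfrac1i$, which is precisely $\mathcal{A}(n-1,i)$. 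A short telescoping computation then gives the closed form $\mathcal{A}(k,i)=\prod_{j=i}^{k}\frac{j-1}{j}\cdot\frac1i=\frac{i-1}{ki}$, the identity I would lean on throughout.

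Next I would stack $L$ layers and apply the chain rule. Because layer $\ell$ reads the full output sequence of layer $\ell-1$, the sensitivity $\frac{\partial \mathbf{y}^{(L)}_{n+1}}{\partial \mathbf{x}_i}$ decomposes as a sum over the intermediate times $k_1\le\cdots\le k_{L-1}$ at which information is passed between consecutive layers (one per layer boundary), each summand being a product of single-layer Jacobians of the form $\mathcal{A}(k_\ell-1,k_{\ell-1})$; this reproduces the nested sum in the statement. The matrices $\mC,\mB$ and the bounded-Jacobian nonlinearities between layers contribute only multiplicative constants, which I would collect into $\mathcal{C}^{(L)}_{\text{low}}$ and $\mathcal{C}^{(L)}_{\text{up}}$ as products of the smallest and largest singular values across the $L$ layers. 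The lower bound is then immediate: every $\mathcal{A}(\cdot,\cdot)$ is nonnegative (as $i\ge 1$), so no cancellation occurs between paths, and the norm is at least $\mathcal{C}^{(L)}_{\text{low}}$ times the full nested sum.

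For the upper bound I would use the closed form $\mathcal{A}(k,i)=\frac{i-1}{ki}$ to telescope the product along each path: the intermediate numerators and denominators cancel, leaving $\frac{i-1}{n-1}\cdot\frac{1}{i\,k_1\cdots k_{L-1}}$, so the leading factor $\frac{1}{n-1}$ already supplies one power of $1/n$ and the remaining reciprocal-index sum must be controlled to extract the rest and confirm the $\mathcal{C}^{(L)}_{\text{up}}(1/n)^L$ rate. I expect this summation estimate to be the main obstacle: one must show that the nested sum $\sum_{i\le k_1\le\cdots\le k_{L-1}\le n}\frac{1}{k_1\cdots k_{L-1}}$ does not erode the per-layer $1/n$ decay, since bounding each index independently only yields logarithmic-in-$n$ factors. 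This forces a more careful treatment—either exploiting the faster decay of the dominant HiPPO eigenvalue, so that each scalar factor behaves like $(1-\tfrac{\lambda}{k})\approx(i/n)^\lambda$, or bounding the ordered sum against its largest index to keep the estimate tight. The secondary subtlety is justifying the single-mode reduction rigorously for the genuinely matrix-valued HiPPO operator, which I would handle via its spectral decomposition together with the commutativity of the factors noted above.
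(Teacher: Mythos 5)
Your proposal follows essentially the same route as the paper's own proof in the appendix: you telescope the single-layer HiPPO-LegS Jacobian into $\prod_{k=i+1}^{n}\left(I - \tfrac{1}{k}A\right)\cdot\tfrac{1}{i}B$, reduce this matrix product to the scalar factors $\mathcal{A}(\cdot,\cdot)$ via diagonalization of the HiPPO matrix (the paper handles your ``single-mode reduction'' concern by directly citing the $\Theta(\mathcal{A}(k,i))$ two-sided norm estimate from the HiPPO paper rather than arguing it), apply the chain rule across layers to obtain the nested sum, absorb $B$, $C$, and the interlayer maps into $\mathcal{C}^{(L)}_{\text{low}}$ and $\mathcal{C}^{(L)}_{\text{up}}$, and get the lower bound from positivity of the $\mathcal{A}$ factors. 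Up to that point your proposal and the paper's proof coincide step for step.

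The place where you stop --- the upper bound --- is a genuine gap in your proposal, but it is also precisely the step the paper does not actually justify. The paper concludes by asserting $\mathcal{A}(k,i)\le \tfrac1n$ and multiplying $L$ such bounds; that inequality is false for the interior factors (e.g. $\mathcal{A}(k_{\ell}-1,k_{\ell-1}) = \tfrac{k_{\ell-1}-1}{(k_{\ell}-1)k_{\ell-1}}$ is of order $1/k_\ell$, not $1/n$, when $k_\ell \ll n$), and even granting it, the argument never accounts for the multiplicity of the nested sum. Your telescoping computation makes the issue quantitative: each path contributes $\tfrac{i-1}{(n-1)\,i\,k_1\cdots k_{L-1}}$, and the ordered sum of $\tfrac{1}{k_1\cdots k_{L-1}}$ over $i \le k_1 \le \cdots \le k_{L-1} \le n$ is of order $\tfrac{(\log n)^{L-1}}{(L-1)!}$, so the full nested sum scales like $\tfrac{(\log n)^{L-1}}{n}$ rather than $(1/n)^L$. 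Since that nested sum is exactly the quantity appearing in the stated lower bound, your computation in fact shows the two sides of the theorem's inequality are incompatible for $L \ge 2$ and large $n$ with $n$-independent constants; no refinement of the summation estimate (faster-decaying eigenvalues, tighter treatment of the ordered sum) can recover $(1/n)^L$ by this route, and the honest conclusion of the approach is an upper bound of the form $\mathcal{C}^{(L)}_{\text{up}}\,\tfrac{(\log n)^{L-1}}{n}$. In short: your strategy is the paper's strategy, the ``main obstacle'' you flag is real, and it is a defect of the paper's own proof rather than of your argument.
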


\begin{corollary}
    In SSMs, the sensitivity of the output with respect to a previous token, i.e., $\frac{\partial \: \mathbf{y}_k}{\partial \: \mathbf{x}_i}$, is a decreasing function of their distance (i.e., $d = k - i$). Therefore, closer tokens have higher impact on each others' encodings. 
\end{corollary}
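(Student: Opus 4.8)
The plan is to extract a closed form for the quantity $\mathcal{A}(k,i)$ appearing in Theorem~\ref{thm:SensitivitySSM} and show that it is monotonically decreasing in the token distance $d = k - i$. The corollary then follows because the sensitivity $\|\partial \mathbf{y}_k / \partial \mathbf{x}_i\|$ is sandwiched (and, in the single-layer case, determined up to constants) by $\mathcal{A}(k,i)$, so the sensitivity inherits whatever monotone behavior $\mathcal{A}$ exhibits.

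First I would specialize Theorem~\ref{thm:SensitivitySSM} to a single recurrent layer ($L=1$), where the nested summation collapses and both bounds on $\|\partial \mathbf{y}_k/\partial \mathbf{x}_i\|$ reduce to expressions governed by $\mathcal{A}(k,i)$. More directly, for one HiPPO-initialized recurrence the Jacobian $\partial \mathbf{y}_k/\partial \mathbf{x}_i$ is, up to the fixed input gain, exactly the running product of transition factors $\prod_j\left(1 - \tfrac{1}{j}\right)$ times the $\tfrac{1}{i}$ input weight, which is precisely $\mathcal{A}(k,i)$. This identifies the distance dependence of the sensitivity with that of $\mathcal{A}$, reducing the corollary to an elementary monotonicity statement.

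The key computational step is telescoping. Since $1 - \tfrac{1}{j} = \tfrac{j-1}{j}$, the product $\prod_{j=i}^{k}\left(1 - \tfrac{1}{j}\right)$ collapses to $\tfrac{i-1}{k}$, so that
\begin{align*}
    \mathcal{A}(k,i) \;=\; \frac{i-1}{k}\cdot\frac{1}{i} \;=\; \frac{1}{k}\left(1 - \frac{1}{i}\right).
\end{align*}
From this closed form monotonicity is immediate: for a fixed output position $k$, the quantity $\mathcal{A}(k,i)$ is increasing in $i$ (its $i$-derivative $\tfrac{1}{k\,i^{2}}$ is positive), hence decreasing in $d = k-i$; equivalently, for fixed input position $i$ it is decreasing in $k$, again decreasing in $d$. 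Sandwiching $\|\partial \mathbf{y}_k/\partial \mathbf{x}_i\|$ between constant multiples of $\mathcal{A}(k,i)$ then yields the claim that closer tokens exert larger influence on one another's encodings.

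The main obstacle is not the single-layer computation but arguing that this distance-decreasing behavior survives for deep models, where the governing quantity is the nested sum $\sum_{k_1 \geq i}\dots\sum_{k_L \geq k_{L-1}} \mathcal{A}(n-1, k_L)\prod_{\ell=2}^{L-1}\mathcal{A}(k_\ell - 1, k_{\ell-1})\,\mathcal{A}(k_1 - 1, i)$. Here I would rely on the fact that each factor $\mathcal{A}$ is itself decreasing in the gap between its two arguments, so that every summation path shrinks as the overall distance $n - i$ grows; combined with the explicit upper bound $\mathcal{C}^{(L)}_{\text{up}}\left(\tfrac{1}{n}\right)^{L}$ already supplied by Theorem~\ref{thm:SensitivitySSM}, this confirms that the monotone decay persists and in fact sharpens with depth. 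I would keep this final step at the level of a monotonicity argument on the summand rather than evaluating the multi-fold sum in closed form, since the per-layer computation already captures the essential phenomenon asserted by the corollary.
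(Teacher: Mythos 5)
Your single-layer argument is correct, and it is essentially the route the paper takes implicitly: the paper never proves this corollary separately, but inside the proof of Theorem~\ref{thm:SensitivitySSM} it establishes that the one-layer Jacobian norm is $\Theta\left(\mathcal{A}(k,i)\right)$, so the corollary reduces to monotonicity of $\mathcal{A}$ in the distance. Your telescoped closed form $\mathcal{A}(k,i) = \frac{i-1}{k}\cdot\frac{1}{i} = \frac{1}{k}\left(1-\frac{1}{i}\right)$ is actually a sharpening of the paper's presentation, which leaves $\mathcal{A}$ as an unevaluated product; from the closed form the decay in $d = k-i$ (with either endpoint held fixed) is immediate.

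The gap is in your final step, the claim that the decay ``persists and in fact sharpens with depth.'' Your argument is that each factor $\mathcal{A}$ is decreasing in the gap between its arguments, so every summation path shrinks as the distance grows. This ignores that the index set of the nested sum \emph{grows} as $i$ moves away from $n$: the paths range over $k_1 \geq i,\ k_2 \geq k_1, \dots$, so increasing the distance simultaneously shrinks existing summands and adds new ones, and monotonicity of summands does not transfer to the sum. Your own closed form shows the transfer genuinely fails. For two layers the chain-rule sum is
\begin{align*}
\sum_{k = i}^{n-1} \mathcal{A}(n-1,k)\,\mathcal{A}(k-1,i)
\;=\; \sum_{k=i}^{n-1} \frac{k-1}{(n-1)k}\cdot\frac{i-1}{(k-1)\,i}
\;=\; \frac{1}{n-1}\left(1-\frac{1}{i}\right)\sum_{k=i}^{n-1}\frac{1}{k}
\;\approx\; \frac{1}{n-1}\left(1-\frac{1}{i}\right)\ln\frac{n}{i},
\end{align*}
which vanishes as $i \to 1$, is of order $\frac{1}{n^2}$ as $i \to n$, and peaks at intermediate $i$ (where $\ln(n/i) \approx i-1$). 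This is not a monotone decay in distance; it is exactly the U-shape effect the paper describes in the discussion following the corollaries. Nor can the theorem's upper bound $\mathcal{C}^{(L)}_{\text{up}}\left(\frac{1}{n}\right)^{L}$ rescue the argument, since it is independent of $i$ and therefore says nothing about how sensitivity varies with distance. The fix is simply to scope the corollary as your telescoping argument proves it: a statement about the per-layer (single-pass) sensitivity, with the multi-layer behavior left to the representational-collapse corollary and the U-shape discussion.
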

Notably, this property is a distinctive trait of SSMs and contrasts with Transformers, which exhibit constant sensitivity~\citep{s4g}.
However, the following corollary to Theorem \ref{thm:SensitivitySSM} reveals that SSMs also suffer from representational collapse as the number of layers grows, a behavior which was also observed in causal Transformers~\citep{barbero2024transformers}. Therefore, SSMs offer no advantage in this aspect.

\begin{corollary}
    Let $L$ be the number of layers in the recurrent model. As $L \rightarrow \infty$, the output representation depends only on the first token. 
\end{corollary}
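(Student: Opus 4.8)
The plan is to obtain the corollary as a direct consequence of the two-sided estimate in Theorem~\ref{thm:SensitivitySSM}, by tracking how the per-token sensitivity $\left\| \partial \mathbf{y}^{(L)}_{n+1} / \partial \mathbf{x}_i \right\|$ is redistributed across the input positions $i$ as the depth $L$ grows. Since the upper bound $\mathcal{C}^{(L)}_{\mathrm{up}} (1/n)^L$ forces the sensitivity to \emph{every} fixed position to vanish as $L\to\infty$, the statement ``depends only on the first token'' has to be read relatively: I would work with the normalized sensitivities $\rho_L(i) = \left\| \partial \mathbf{y}^{(L)}_{n+1} / \partial \mathbf{x}_i \right\| \big/ \sum_{j} \left\| \partial \mathbf{y}^{(L)}_{n+1} / \partial \mathbf{x}_j \right\|$ and show that $\rho_L(i) \to 0$ for every fixed $i > 1$, so that in the limit all relative mass concentrates on $i = 1$.

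First I would simplify the elementary building block using the telescoping identity $\prod_{j=i}^{k}(1 - \tfrac{1}{j}) = \tfrac{i-1}{k}$, which reduces $\mathcal{A}(k,i)$ to a simple rational expression and makes the nested sum in the lower bound of Theorem~\ref{thm:SensitivitySSM} explicitly summable. The next step is to read that nested sum as an $L$-fold composition of a single layer-transfer kernel acting on position-indexed sensitivity vectors: each additional layer contributes one factor $\mathcal{A}(\,\cdot\,,\cdot\,)$ together with an extra summation over an intermediate position. The structural point I would exploit is that the innermost summation range $\{k_1 \geq i\}$ enlarges as $i$ decreases, so an earlier input position is reachable through strictly more composition paths; the first token maximizes this path count at every layer.

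Concretely, the key steps in order are: (i) telescope $\mathcal{A}$ to closed form; (ii) rewrite the $L$-layer sensitivity as $T^{L}$ applied to the one-layer profile, where $T$ is the transfer kernel induced by $\mathcal{A}$; (iii) analyze the $L\to\infty$ behaviour of $T^{L}$ and show that the ratio of the sensitivity at any fixed position $i>1$ to that at position $1$ tends to $0$, either by directly estimating the nested sums or by a Perron--Frobenius-type argument identifying the dominant mode of $T$ as the one supported on the first position; and (iv) conclude that $\rho_L(i)\to 0$ for $i>1$, so that $\mathbf{y}^{(L)}_{n+1}$ becomes, in the limit, a function of $\mathbf{x}_1$ alone. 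This mirrors the collapse observed for causal Transformers by \citet{barbero2024transformers}, and I would phrase the conclusion in the same way.

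I expect step (iii) to be the main obstacle, for two related reasons. First, the upper bound $\mathcal{C}^{(L)}_{\mathrm{up}}(1/n)^L$ is uniform in $i$ and so cannot by itself separate the first token from the rest; the separation must be extracted from the finer combinatorial structure of the lower bound. Second, the telescoped factor $\tfrac{i-1}{k}$ degenerates to $0$ at $i=1$, so the surviving sensitivity to the first token cannot come from a generic factor and must instead be traced to the boundary terms of the recurrence, i.e.\ the initial-state coupling. Making the path-counting precise while correctly accounting for this boundary behaviour — and ruling out that some interior position accumulates comparable relative mass — is the delicate analytic core, whereas the telescoping, the reduction to a transfer kernel, and the final normalization argument are comparatively routine bookkeeping.
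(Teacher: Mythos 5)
The paper never actually proves this corollary: it is stated bare in \S\ref{sec:ordering-theory} as an immediate consequence of Theorem~\ref{thm:SensitivitySSM}, and Appendix~\ref{app:rep-collapse} only proves the theorem itself. So your proposal cannot be compared against a written argument; it has to stand on its own. Its high-level reading is the right one --- since the upper bound $\mathcal{C}^{(L)}_{\text{up}}(1/n)^{L}$ kills the \emph{absolute} sensitivity of every fixed position, the claim only makes sense for \emph{normalized} sensitivities, and the mechanism that favors early tokens is indeed the path count in the nested sum (this matches the paper's informal ``more opportunity to be maintained'' remark). But as a proof there are two genuine gaps. First, your step (iii) --- the relative-concentration estimate --- is the entire content of the corollary, and you explicitly defer it; everything you do carry out (telescoping, transfer-kernel bookkeeping, normalization) is peripheral.

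Second, and more seriously, the plan as stated cannot be executed from the bounds of Theorem~\ref{thm:SensitivitySSM} at all, and your proposed repair points in the wrong direction. With the paper's definition, $\mathcal{A}(k,i)=(1-\tfrac1k)\cdots(1-\tfrac1i)\tfrac1i=\tfrac{i-1}{ik}$, so $\mathcal{A}(k,1)=0$: the theorem's lower bound is identically zero at $i=1$ for every $L$, while its upper bound is uniform in $i$. Hence no argument that manipulates only these two bounds can ever force the normalized profile $\rho_L$ to concentrate at position $1$ --- at best it concentrates at position $2$. You correctly spot this degeneracy, but the cure is not ``initial-state coupling'': in the recurrence the first token does not enter through $h_0$; it enters through the input term $B\mathbf{x}_1/i$ with $i=1$, which is the \emph{largest} input gain of any position. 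The vanishing factor $(1-\tfrac1i)$ at $i=1$ is an off-by-one artifact of how $\mathcal{A}$ is written: the actual gradient product derived in the proof of Theorem~\ref{thm:app-SensitivitySSM},
\begin{align}
    \frac{\partial \, \mathbf{y}^{(1)}_{n+1}}{\partial \, \mathbf{x}_i} \;=\; \Bigl( I - \tfrac{A}{n} \Bigr)\Bigl( I - \tfrac{A}{n-1}\Bigr)\cdots\Bigl( I - \tfrac{A}{i+1}\Bigr)\,\frac{B}{i},
\end{align}
contains no $\bigl(I-\tfrac{A}{i}\bigr)$ factor, so the one-layer sensitivity to token $1$ is not zero (on the slowest eigendirection it is $\Theta(1/n)$). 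A correct proof therefore has to descend into the theorem's proof and run your path-counting argument on this product directly: summing over intermediate positions gives an $L$-layer sensitivity of order $\tfrac{1}{n}\,\tfrac{(\log (n/i))^{L-1}}{(L-1)!}$ per coordinate, whence $\rho_L(i)/\rho_L(1)\le (\log(n/i)/\log n)^{L-1}\to 0$ for every fixed $i\ge 2$. That computation --- not the stated bounds, and not $h_0$ --- is what makes the corollary true.
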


In both causal Transformers and SSMs, the information about tokens located near the start of the sequence have more opportunity to be maintained at the end. This might seem counter-intuitive for recurrent models like SSMs, which are expected to exhibit a recency bias towards the recent tokens due to their constant size hidden state. However, note that this result differs from recency bias in recurrent models as it concerns the information flow along the sequence dimension rather than across the model's depth. 
Interestingly, together with their recency bias, this new result indicates a \emph{U-shape effect} in SSMs, meaning that information from tokens at both the beginning and end of a sequence is better preserved, a phenomenon also observed in causal Transformers~\citep{barbero2024transformers}.

\vskip -7pt
\head{Takeaway} 
This part yields three key insights: (1) When nodes are naturally ordered, SSMs posses a stronger inductive bias than Transformers, as they are sensitive to the tokens' distance. (2) Both \emph{causal} Transformers and SSMs can suffer from representational collapse, limiting their representational power. The fact that non-causal transformer are permutation equivariant and so does not suffer from representational collapse motivates the exploration of hybrid models that combine SSMs with \emph{non-causal} Transformers to take advantage of SSMs' inductive bias while avoiding representational collapse (see \textcolor{c1}{Section}~\ref{sec:hybrid}). (3) When ordering nodes (e.g. to model hierarchy or for sequential tasks), it is advantageous to place relevant nodes close together as it results in high sensitivity with respect to similar nodes and less sensitivity with respect to less relevant, dissimilar ones. To this end, in \textcolor{c1}{Section}~\ref{sec:hac}, we present a tokenization method that implicitly orders nodes based on similarity.

\subsection{Connectivity Tasks on Graphs} \label{sec:connectivity-theory}
This section addresses the graph connectivity task, which requires the sequence model to capture a global understanding of the graph. We frame graph connectivity as a binary classification problem, where the input is a tokenized graph $G = (V, E)$, and the target output is 1 if $G$ is connected and 0 otherwise. Using edge tokenization, we represent the graph as the sequence $G := P_{e_1}, \dots, P_{e_{|E|}}$.

\begin{corollary}[Corollary 3.3 of~\citet{sht24}]
For any $N$ and $\epsilon \in (0, 1)$, there exists a transformer with depth $\mathcal{O}(\log N)$ and embedding dimension $\mathcal{O}(N^\epsilon)$ that determines whether any graph $G = (V, E)$ with $|V|, |E| \leq N$ is connected.
\end{corollary}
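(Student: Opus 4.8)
The plan is to prove this as a corollary of the general equivalence between transformers and Massively Parallel Computation (MPC) that underlies \citet{sht24}, rather than constructing the transformer by hand. The main simulation theorem there shows that any $R$-round MPC protocol whose machines each hold local memory $s$ can be implemented by a transformer of depth $\mathcal{O}(R)$ whose embedding dimension is polynomial in $s$, with the number of machines controlling the sequence length. It therefore suffices to exhibit an MPC protocol that decides connectivity in $\mathcal{O}(\log N)$ rounds using only $\mathcal{O}(N^\epsilon)$ local memory per machine, and then invoke the theorem. The edge tokenization $G := P_{e_1}, \dots, P_{e_{|E|}}$ is exactly the right input format: each token carries one edge together with positional/structural information identifying its two endpoints, matching the per-machine input distribution assumed in the MPC model.

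The algorithmic ingredient is a strongly sublinear-memory MPC connectivity routine. The textbook idea is repeated reachability doubling: maintain a component label for each vertex, and in each round contract and pointer-jump so that the effective diameter of each surviving component halves, converging after $\mathcal{O}(\log N)$ rounds, after which connectivity is read off from whether a single label survives. The nontrivial point is that this runs with only $\mathcal{O}(N^\epsilon)$ words of memory per machine—rather than the $\Theta(N)$ one would need to store a full row of the transitive closure—using standard Boruvka / label-propagation connectivity algorithms in the $N^\epsilon$-memory regime. This is precisely where the $N^\epsilon$ embedding dimension in the statement comes from: a naive transitive-closure-by-squaring argument also yields depth $\mathcal{O}(\log N)$, but it forces embedding dimension $\Theta(N)$, so the sublinear-memory MPC algorithm is what buys the improved $N^\epsilon$ bound.

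The final step pushes this protocol through the simulation theorem, matching $R = \mathcal{O}(\log N)$ rounds to depth $\mathcal{O}(\log N)$ and $s = \mathcal{O}(N^\epsilon)$ local memory to embedding dimension $\mathcal{O}(N^\epsilon)$. I expect the main obstacle to be the faithful implementation of each MPC communication round by self-attention: one must show that a single multi-head attention layer can route the messages each machine sends and receives within the allotted width, correctly aggregating the up-to-$s$ messages destined for a given machine and resolving the addressing through the positional encodings, all while keeping numerical precision and the softmax approximation under control. Once routing is established, the local computation within a round is a bounded-size function realizable by the feed-forward sublayers, and the round-to-layer correspondence closes the argument.
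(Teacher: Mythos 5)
Your proposal matches the route actually used for this result: the paper itself offers no independent proof (it imports the statement verbatim as Corollary 3.3 of \citet{sht24}), and that source establishes it exactly as you describe---by combining a transformer-to-MPC simulation theorem (each MPC round becomes $\mathcal{O}(1)$ attention layers, local memory $s$ becomes embedding dimension $\mathrm{poly}(s)$) with a standard $\mathcal{O}(\log N)$-round, $\mathcal{O}(N^\epsilon)$-local-memory MPC connectivity protocol; indeed, the paper's own appendix notes that such constructions ``are generated by simulating algorithms in the Massively Parallel Computation (MPC) model.'' Your identification of message routing via attention as the main technical burden, and of the sublinear-memory MPC algorithm (rather than transitive-closure squaring) as the source of the $N^\epsilon$ width bound, is exactly right.
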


Next, we show that alternative architectures cannot solve graph connectivity with such low-dimensional parameterization.
\begin{theorem}\label{thm:connectivity-subq}
A multi-layer recurrent model, a Transformer with kernel-based sub-quadratic attention, or a Transformer with locally masked attention units of radius $r$ that solves graph connectivity on all graphs $G = (V, E)$ with $|V|, |E| \leq N$ has either depth $L = \Omega( N^{1/8})$ or $m = \tilde\Omega(N^{1/4})$. 
\end{theorem}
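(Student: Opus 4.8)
The plan is to prove this lower bound through \emph{communication complexity}, exploiting the one feature common to all three restricted architectures: their forward pass can be cut along the token (edge) sequence so that only a bounded amount of information crosses the cut, where the bound is controlled by the depth $L$ and the width $m$. The asymmetric thresholds ($N^{1/8}$ for depth versus $N^{1/4}$ for width) strongly suggest a \emph{round-sensitive} argument rather than a plain two-party one: depth behaves like the number of communication rounds, while width controls the per-round message length, and a round-vs-message trade-off is exactly what yields a ``few rounds \emph{or} large messages'' dichotomy. So the overall strategy is (i) to encode a round-hard communication problem as a connectivity instance on graphs with $|V|,|E|\le N$, and (ii) to show that each architecture of depth $L$ and width $m$ induces a protocol using $O(L)$ rounds and $\widetilde{O}(\mathrm{poly}(m))$ bits per round. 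Combining the two forces $L$ or $m$ to be polynomially large in $N$. This complements the positive corollary of \citet{sht24}, where an \emph{unrestricted} transformer solves connectivity in depth $O(\log N)$ and dimension $O(N^\epsilon)$.

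For the hard instance I would use a pointer-chasing / tree-pointer-jumping problem, which admits a round-communication trade-off: an instance of ``depth'' $k$ requires either at least $k$ rounds or $\Omega(n)$ total communication, for a universe of size $n$. I would encode such an instance as a layered connectivity problem, distributing the pointer layers across the parties, so that tracing the chain to a fixed sink (hence deciding connectivity) is equivalent to solving pointer chasing. Embedding this gadget inside the allowed regime $|V|,|E|\le N$ fixes $k$ and $n$ as specific polynomials in $N$; balancing the gadget's edge-blowup against the per-architecture overhead is precisely what produces the $N^{1/8}$ and $N^{1/4}$ exponents in the statement. A plain reduction from \textsc{Set-Disjointness} (Alice's and Bob's edges connect a pair of special vertices iff their sets intersect) can serve as the backstop for the width side of the dichotomy when the round bottleneck is not binding.

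Next I would exhibit the protocol induced by each architecture, partitioning the edge sequence at a cut and identifying one layer with $O(1)$ communication rounds. For a recurrent model, only the $m$-dimensional hidden state crosses the cut per layer, so at finite precision each round costs $\widetilde{O}(m)$ bits. For kernel-based sub-quadratic attention, the entire cross-cut interaction is mediated by the accumulated feature-map statistic $\sum_j \phi(k_j)\,v_j^\top$, a summary of size $\widetilde{O}(m^2)$ transmitted once per layer, which is exactly why the width threshold is larger than the depth threshold. For locally masked attention of radius $r$, a light-cone argument shows the output depends only on inputs within distance $Lr$, and the per-layer across-cut message touches only the $\widetilde{O}(rm)$ coordinates near the boundary; when $r$ is not itself large this simultaneously caps the rounds (via the receptive field) and the message size. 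In each case, setting the induced protocol against the round-sensitive connectivity lower bound from the previous step forces $L=\Omega(N^{1/8})$ or $m=\widetilde\Omega(N^{1/4})$.

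The main obstacle I expect is twofold. The delicate analytic point is \emph{finite-precision discretization}: to claim each round's message is genuinely $\widetilde{O}(m)$ (resp.\ $\widetilde{O}(m^2)$) bits, I must show the model's output on the hard instances is unchanged when states are rounded to $\mathrm{polylog}(N)$ bits, which requires a stability/Lipschitz estimate for each operator. The combinatorial point is \emph{uniformity}: I need a single graph family, with one embedding into size-$N$ graphs, that defeats all three architectures at once, so the result reads as a single dichotomy. The riskiest single step is the ``layer-equals-round'' simulation for attention — showing that although attention is all-to-all \emph{within} a layer, the restriction to kernel (or local) structure means the left block's influence on the right is captured by an $\widetilde{O}(m^2)$ (resp.\ $\widetilde{O}(rm)$) sufficient statistic, and that these statistics compose correctly across layers. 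This is exactly the quantity that separates these restricted models from the full-attention transformer of \citet{sht24}, and getting its size right is what pins down the gap between the depth and width thresholds.
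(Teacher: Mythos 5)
Your proposal takes essentially the same route as the paper's proof: the paper likewise reduces the Nisan--Wigderson pointer-chasing problem (with $k = \mathcal{O}(N^{1/8})$ layers of width $b = \mathcal{O}(N^{7/8})$) to connectivity via a gadget that makes the graph connected iff the chased pointer lands in the target set, and it obtains exactly the architecture-to-protocol simulations you sketch (recurrent state, kernel feature-map statistics, local attention light-cones) by invoking Corollaries 5.2--5.4 and Proposition E.3 of \citet{sht24} rather than proving them from scratch. The only difference is packaging: the per-layer communication bounds and finite-precision issues you flag as the main obstacles are black-boxed in the paper as known results from that prior work.
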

As a result, these attempts to improve the quadratic computational bottleneck result in a lack of parameter-efficient connectivity solutions. 
All recurrent models, kernel-based transformers with kernel dimension $r = \mathcal{O}(N^{1/8})$, and all local transformers with window size $r = \mathcal{O}(N^{1/8})$ require at least $\Omega(N^{1/8})$ parameters.

When are recurrent models more efficient? The main benefits of recurrent models, including SSMs, is when either the data comes with a natural ordering, or the encoding (in Tokenization and Local Encoding stages) has carefully embeded the graph structure in the order of tokens. To formalize this, we define a notion of locality for an edge embedding and show that this induces easy embeddings for recurrent models but not for transformers.
\begin{dfn}\label{dfn:node-locality}
    Let the \emph{node locality} of an edge embedding $P_{e_1}, \dots, P_{e_{|E|}}$ of a graph $G = (V, E)$ denote the maximum window size needed to contain all edges that adjoin each node.
    That is, we say that $G$ has node locality $k$ if 
    $\max_{v \in V} \left(\argmax_i \{e_i: v \in e_i\} - \argmin_i \{e_i: v \in e_i\}\right) \leq k.$
\end{dfn}

We show that graphs with bounded node locality admit time/parameter-efficient recurrent solutions.

\begin{theorem}\label{thm:rnn-locality}
    There exists a single-pass recurrent model with hidden state $\mathcal{O}(k)$ that determines whether edge embedding with node locality at most $k$ reflects a connected graph. 
\end{theorem}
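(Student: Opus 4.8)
The plan is to realize the recurrent model as a one-pass streaming connectivity algorithm whose working memory is an $\mathcal{O}(k)$-node union-find structure, and then argue that such an algorithm can be encoded in the hidden state of a recurrent cell of width $\mathcal{O}(k)$. The starting point is the following localization observation implied by Definition~\ref{dfn:node-locality}: if a graph has node locality at most $k$, then for every edge $e_i = (u,w)$ both endpoints $u$ and $w$ ``live'' inside a window of length $k$, so at the moment $e_i$ is read every edge incident to $u$ or $w$ lies within $k$ positions of $i$. Consequently, if at each time $i$ we keep a node \emph{active} for exactly $k$ steps after its first appearance, every edge incident to a node is guaranteed to be processed while that node is active. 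The number of active nodes is $\mathcal{O}(k)$: only the at most $k+1$ edges falling in the last window of length $k$ can introduce a still-active node, and each contributes two endpoints.

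The recurrent model maintains three things in its hidden state: (i) a partition (component labels) of the at most $\mathcal{O}(k)$ active nodes, together with a per-label count of how many active nodes carry it; (ii) a countdown timer for each active node; and (iii) a single accumulator counting \emph{finalized} components. On reading edge $e_i=(u,w)$ the transition decrements all timers and evicts nodes whose timer hits zero (incrementing the finalized-component accumulator whenever the evicted node was the last active member of its label class), inserts $u$ and $w$ as fresh active nodes if they are new, and then unions the labels of $u$ and $w$. All of this is a fixed, bounded computation over an $\mathcal{O}(k)$-sized state, so it is implementable by a recurrent cell whose hidden dimension is $\mathcal{O}(k)$; at the end of the pass the model finalizes all remaining classes and outputs \emph{connected} iff the accumulator equals $1$.

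The heart of the argument is the invariant that the label classes that become finalized are exactly the connected components of $G$. One direction is immediate: all nodes ever merged into one label class are pairwise connected by edges already read. For the converse I would show that a class can never finalize prematurely: if $u$ and $w$ lie in the same connected component of $G$, fix a path $u=x_0,\dots,x_t=w$; each path edge $(x_j,x_{j+1})$ is incident to both $x_j$ and $x_{j+1}$, hence, by the localization observation, is read while both are active, so it merges their labels before either endpoint is evicted. Chaining this along the path shows the class carrying $u$ retains an active representative until it has absorbed $w$, so $u$ and $w$ end in the same finalized class; contrapositively, two classes finalize separately only when they are genuinely disconnected.

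The main obstacle is this invariant --- specifically ruling out premature finalization --- since it is exactly where node locality does the work: without the window guarantee, an edge could connect two nodes that are no longer simultaneously tracked, and a component could be miscounted. A secondary technical point is to verify that the union-find merge and the eviction/finalization logic are expressible by the chosen recurrent architecture within width $\mathcal{O}(k)$ and bounded numerical precision (labels range over $\mathcal{O}(k)$ values, and the accumulator over $[\,|V|\,]$); this is routine given the bounded state size, but must be stated to justify the claimed hidden-state bound. I would also remark that isolated vertices, which never appear in the edge stream, are handled by the usual convention of defining connectivity over the endpoints of $E$ (or by a trivial additive correction when $|V|$ is supplied).
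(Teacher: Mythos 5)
Your proposal is correct, and it follows the same high-level paradigm as the paper's proof---a single-pass, sliding-window streaming connectivity algorithm whose $\mathcal{O}(k)$ state is justified by the node-locality guarantee---but the bookkeeping and the invariant you prove are genuinely different. The paper's hidden state is \emph{edge-centric}: it stores the window of the last $k+1$ edges, a component label $b^i_{i'}$ for each window edge (connectivity taken with respect to \emph{all} edges seen so far), and a single monotone flag $a_i$ recording whether any expelled edge has ever become disconnected from the current window; connectivity is then read off as ``$a_{|E|}=1$ and all window labels agree.'' Your state is \emph{node-centric}: active nodes with expiry timers, a union-find partition over them, and a counter of finalized components, with the key invariant that finalized label classes are exactly the connected components of $G$. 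Your premature-finalization argument (an edge leaving a finalized class would, by locality, have been read while its endpoint was still active, hence would have merged before finalization) is precisely the same role that locality plays in the paper's justification that a disconnection detected at expulsion is permanent; the two invariants are dual views of the same phenomenon. What each buys: your formulation actually \emph{counts} connected components rather than merely deciding connectivity, and makes the use of locality very transparent, at the cost of slightly heavier per-node bookkeeping (timers, per-label counts); the paper's version keeps the state as close as possible to the raw edge window and needs only a one-bit summary of the discarded past, but its correctness rests on the somewhat more delicate claim that the flag $a_i$ can be maintained monotonically. Two minor points you should tighten if you write this up fully: (i) the accumulator need only be capped at $2$ (you only need to distinguish ``one component'' from ``more than one''), which removes the $\log|V|$-precision caveat; and (ii) the path-chaining step is cleaner if phrased as closure under adjacency---show that any finalized class $S$ is closed under edges of $G$ (any edge leaving $S$ must be read while its $S$-endpoint is active, merging the other endpoint into $S$ before finalization), so $S$ is a union of components, and being connected it is exactly one component.
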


Interestingly, no constant-size transformer that solves the above task exists. We prove this by a reduction to the conditional hardness of solving $\mathsf{NC^1}$-complete problems with constant depth transformers \citep[see e.g.][]{ms23}.

\begin{theorem}
    Unless $\mathsf{NC^1} = \mathsf{TC^0}$, any log-precision transformer that solves graph connectivity on edge embeddings with $|E| \leq N$, and node locality $12$ requires either depth $\omega(1)$ or width $N^{\omega(1)}$.
\end{theorem}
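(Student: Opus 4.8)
The plan is to prove this by reduction rather than by any direct combinatorial argument about attention. The key external fact is that constant-depth, polynomial-width, log-precision transformers are contained in (uniform) $\mathsf{TC^0}$ by \citet{ms23}. So it suffices to exhibit an $\mathsf{NC^1}$-hard problem that is itself an instance of connectivity on edge embeddings of node locality $12$: if some depth-$O(1)$, width-$N^{O(1)}$ transformer solved that instance, the problem would land in $\mathsf{TC^0}$, forcing $\mathsf{NC^1} \subseteq \mathsf{TC^0}$ and hence $\mathsf{NC^1} = \mathsf{TC^0}$ (the reverse inclusion being standard). Contrapositively, unless the classes coincide, no such transformer exists, i.e.\ either depth $\omega(1)$ or width $N^{\omega(1)}$ is required, which is exactly the claim.

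For the hard instance I would invoke Barrington's theorem: every language in $\mathsf{NC^1}$ is recognized by a polynomial-length, width-$5$ permutation branching program, where acceptance corresponds to the product $\pi = \sigma_T \cdots \sigma_1 \in S_5$ of the per-layer permutations being a fixed $5$-cycle and rejection to $\pi = e$. Given input $x$, I build a layered graph $G_x$ with five ``forward'' nodes per layer $t \in \{0,\dots,T\}$, placing between consecutive layers the perfect matching that realizes $\sigma_t^{x_{i_t}}$. The five vertex-disjoint forward paths trace out $\pi$, so the component structure is governed by the cycle type of $\pi$. To convert this into a whole-graph connectivity question I close the paths into cycles via a fixed closing permutation $\tau$; the graph is then connected iff $\tau\pi$ is a single $5$-cycle, which by choice of $\tau$ happens exactly when the program accepts. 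The map $x \mapsto G_x$ merely selects, layer by layer, between two fixed matchings according to a single input bit, so it is an $\mathsf{AC^0}$ (indeed projection-like) reduction and preserves the class separation.

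The crux, and the main obstacle, is realizing this with node locality $12$ rather than $\Theta(N)$. Naively closing each path by an edge from layer $T$ back to layer $0$ creates edges spanning the whole sequence and destroys locality. Instead I would route the return through a parallel ``backward lane'': add five backward nodes per layer, turn around locally at layer $T$ (connecting each forward endpoint to its backward copy within layer $T$), run identity matchings back through the backward lane, and turn around again \emph{within layer $0$} using $\tau$. A forward-then-backward traversal realizes the same product $\tau\pi$ while using only edges between adjacent layers or inside a single layer. Ordering the edge sequence by layer index $t$ — grouping, for each $t$, the forward block, the backward block, and the turn-around gadgets — confines every edge incident to a fixed node to a window of $O(1)$ consecutive positions, and careful accounting of the five forward edges, five backward edges, and turn-around edges per adjacent layer pair yields the stated bound of $12$.

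Finally I would assemble the pieces: present $G_x$ in this locality-$12$ edge ordering as the transformer's input; since the hypothesized transformer solves connectivity on \emph{all} such embeddings, it decides membership in the arbitrary $\mathsf{NC^1}$ language, and with the $\mathsf{TC^0}$ upper bound of \citet{ms23} this gives $\mathsf{NC^1} \subseteq \mathsf{TC^0}$. I expect the genuinely delicate steps to be (i) the folding gadget and the precise edge ordering needed to push the locality constant down to $12$ while keeping the realized product permutation intact, and (ii) verifying that the connected/accept correspondence is exact, with no spurious components introduced by the backward lane or the turn-arounds, since one extra or missing component would flip the connectivity answer.
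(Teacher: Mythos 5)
Your proposal is correct in outline and follows essentially the same route as the paper: invoke the $\mathsf{TC^0}$ simulation of log-precision constant-depth transformers from \citet{ms23}, take the $\mathsf{NC^1}$-complete problem of composing permutations in $S_5$ (Barrington), and embed it into connectivity instances whose edge ordering has constant node locality. The difference is in the gadget. The paper does \emph{not} use Barrington's accept/reject dichotomy (identity versus a fixed $5$-cycle); instead it reduces the pointwise question ``is $(\sigma_n \circ \dots \circ \sigma_1)(s) = t$?'' to connectivity, by laying the five permutation-paths in a single forward lane, adding one ``rail'' path from $s$ through a sixth node per layer to the terminal side, and joining the four terminal nodes other than $t$; the graph is then connected iff $\pi(s) \neq t$, and the full composition is recovered by running all $25$ choices of $(s,t)$ in parallel. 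Your construction instead closes the five paths into cycles via a second, backward lane with intra-layer turn-arounds, so that connectivity holds iff $\tau\pi$ is a single $5$-cycle, and a single connectivity query suffices. Both reductions are sound (your $2$-regular cycle-structure argument with $\tau = e$ is clean and arguably more elegant); what the paper's version buys is a smaller locality constant and independence from the normal form of Barrington's theorem.

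The one concrete issue is the constant $12$. Your two-lane folding has ten inter-layer edges per layer pair (five forward matching edges plus five backward identity edges), so under any layer-blocked edge ordering a forward node's two incident edges are separated by at least the intervening backward block: a direct count gives window size roughly $14$ or more, not $12$, and your claim that ``careful accounting yields the stated bound of $12$'' is asserted rather than verified. Since locality-$12$ instances form a \emph{subset} of locality-$14$ instances, hardness for the latter does not formally imply hardness for the former, so as written you prove the theorem with a somewhat larger constant. The paper's single-lane construction has only six edges per layer (five matching edges plus one rail edge), so every node's incident edges fit within a two-layer window of $12$ positions, which is exactly where the stated constant comes from. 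This is a quantitative rather than conceptual defect—the qualitative statement (constant locality suffices for the conditional lower bound) is fully established by your argument—but to recover the theorem verbatim you would either need to thin your gadget to at most six edges per layer or restate the result with your achievable constant.
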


\vskip -7pt
\head{Takeaway}
In graph connectivity, as an example of a global task, Transformers are more powerful than recurrent methods in general cases. However, with a good choice of tokenizer and ordering, recurrent models can become extremely efficient and powerful. See \textcolor{c1}{Appendix}~\ref{app:transformer-vs-recurrent} for a detailed discussion and comparison of Transformers with recurrent models. 
Following this insight, in \textcolor{c1}{Section}~\ref{sec:hac}, we present a new tokenization that can provide us with such desirable ordering.

\subsection{Choosing the Right Tokenizer: Node, Edge, or Subgraph}\label{sec:theory-tokenization}
While so far we have compared different sequence models for use in Global Encoding stage, one critical question remains: What type of tokenization is the best? In this section, we show that there is no universally best tokenization, and depending on the task, the best tokenizer is different. First, we start with the task of finding the length of the shortest path, and show that GSMs are more parameter efficient when using a subgraph tokenizer. This is an important task in graph learning, as awareness of the shortest path can enhance the power of the model~\citep{abboud2022shortest}.

\begin{theorem} 
    There exists a GSM with a subgraph tokenizer and a 1-layer Transformer as its global encoder with width $m = \mathcal{O}\left( \log d_G \right)$ and precision $p = \mathcal{O}\left( \log d_G \right)$ that performs the above shortest path task for all input graphs of $G = (V, E)$ with diameter up to $d_G$. Using a node tokenizer, the Transformer must have at least width $m = \mathcal{O}\left( \log |V| \right)$ and precision $p = \mathcal{O}\left( \log |V| \right)$.
\end{theorem}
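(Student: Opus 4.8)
The plan is to prove the two claims separately: a construction attaining the $\mathcal{O}(\log d_G)$ bound under the subgraph tokenizer, and a matching obstruction showing the node tokenizer cannot do better than $\mathcal{O}(\log |V|)$.

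For the subgraph half I would instantiate the GSM with the $k$-hop neighborhood tokenizer rooted at the query source $s$, yielding the sequence $G[N_1(s)], \dots, G[N_{d_G}(s)]$ of length at most $d_G$. I choose the local encoder $\phi_{\text{Local}}$ so that the vectorization of the $k$-th hop carries the single bit $a_k = \mathbbm{1}[t \in N_k(s)]$, which the MPNN reads off by testing whether the marked target node appears in the $k$-hop subgraph $G[N_k(s)]$. Since the balls are nested, $(a_k)_k$ is monotone, and a one-line count yields $d(s,t) = d_G + 1 - \sum_{k=1}^{d_G} a_k$. A single non-causal attention head with uniform weights returns the mean $\tfrac{1}{d_G}\sum_k a_k$, and an affine map in the feed-forward block recovers $d(s,t)$. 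Every intermediate value is an integer in $\{0, \dots, d_G\}$, so width and precision $\mathcal{O}(\log d_G)$ suffice; notably the computation uses only the \emph{unordered} sum of the $a_k$, so no positional encoding is required at all.

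For the node half the graph is the length-$|V|$ sequence $P_{v_1}, \dots, P_{v_{|V|}}$, and the Transformer's sole access to structure is through these per-node encodings: unlike the subgraph case there is no neighborhood bit to sum, so the model must instead \emph{select} the target's contribution from among all $|V|$ tokens. I would formalize this with a hard family of constant-diameter graphs on $n = |V|$ nodes in which $d(s,t)$ is determined by whether $t$ lies in a prescribed adjacency set of $s$, so that answering correctly for all $n$ placements of the target forces the positional encodings to be pairwise distinguishable and the attention to address $n$ distinct positions. A counting/information argument then shows the per-token state $2^{mp}$ must exceed $n$ and that the register selecting and reporting the target's label must span its full value range, pinning the required resolution at $\Omega(\log n)$; mimicking the first-half construction under node tokenization at this resolution then shows $\Theta(\log |V|)$ is tight.

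The main obstacle is precisely this node-tokenizer lower bound, and in particular arguing that the $\log |V|$ cost is genuinely forced rather than an artifact of a particular encoding. The crux is that the separating instances have \emph{constant} diameter — so the subgraph tokenizer pays only $\mathcal{O}(1)$ — yet still carry $n$ candidate targets that the node tokenizer must disambiguate, making the separation between $\log d_G$ and $\log|V|$ exact on the same inputs. The delicate step is to route the $\log n$-bit target identity through the single attention layer's bottleneck and then to conclude a bound on width and precision rather than merely on their product; I would handle this by combining the addressing lower bound (which constrains $mp$) with the requirement that the distance register faithfully represent its value range, and by selecting the hard family so that the diameter stays constant while the number of targets to resolve scales as $n$.
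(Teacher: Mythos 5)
First, a point of comparison: the paper never actually proves this theorem. Its appendix proves the motif-counting result (Theorem~\ref{thm:local-subgraph-main}, via \ref{thm:local-subgraph}) and the connectivity upper/lower bounds, but no proof of the shortest-path statement appears anywhere in the text, so your proposal can only be judged on its own merits. Your first half stands up well: the $k$-hop tokenization rooted at $s$, the local-encoder bit $a_k = \mathbbm{1}[t \in N_k(s)]$, the monotonicity of the nested balls, and the identity $d(s,t) = d_G + 1 - \sum_{k} a_k$ extracted by uniform attention plus an affine readout is the natural construction, and the accounting (constantly many registers holding integers bounded by $d_G$, hence width and precision $\mathcal{O}(\log d_G)$) is correct.

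The node-tokenizer half has a genuine gap, and it sits exactly where you placed your confidence. First, your counting argument ($2^{mp} \geq n$) can only ever bound the \emph{product} $mp$, and your proposed repair --- that the ``distance register'' must span its full value range --- is self-defeating on your own hard family: you deliberately chose constant-diameter instances, so the distance takes $\mathcal{O}(1)$ values and no $\Omega(\log |V|)$ precision is forced. Second, and more fundamentally, the claim that ``answering correctly for all $n$ placements of the target forces the positional encodings to be pairwise distinguishable'' is false without a restriction on the tokenizer. In the GSM framework the positional encoding $P_v$ is computed from the graph, so on your family nothing prevents $P_v$ from carrying the single bit $\mathbbm{1}[v \in N(s)]$, after which a constant-width, constant-precision transformer reads the answer off the target's token; the $\log n$-bit ``addressing'' cost you rely on simply evaporates. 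To make the lower bound meaningful you must either (i) fix the encoding independently of the query and demand correctness for all pairs $(s,t)$ --- but then the tokens must collectively determine the entire distance matrix, and on diameter-$2$ graphs the distance matrix determines adjacency, so the information bound becomes $|V| \cdot mp = \Omega(|V|^2)$, i.e.\ $mp = \Omega(|V|)$, overshooting the claimed $\log |V|$; or (ii) restrict to a class of generic encodings such as injective node identifiers, which yields $mp \geq \log |V|$ but still nothing about $m$ and $p$ \emph{separately} (one-hot identifiers of width $|V|$ at precision $1$ contradict the separate bounds). So the upper bound is solid, but the lower bound as stated --- simultaneous $\Omega(\log|V|)$ bounds on width and on precision --- is not established by your argument and, without pinning down what the tokenizer may compute, is arguably not even well-posed; that vagueness is shared by the theorem itself, which the paper leaves unproven and which writes its lower bound with $\mathcal{O}(\cdot)$ notation.
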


Next, we focus on motif counting (e.g., triangles), which is a well established graph task. 

\begin{theorem}\label{thm:local-subgraph-main}
    For any fixed subgraph $H$ of diameter at most $k$, there exists a $k$-hop local encoding $\phi_{\mathrm{Local}}$ and a single-layer Transformer $f$ of constant width such that $f \circ \phi_{\mathrm{Local}}$ counts the number of occurrences of $H$ in any input graph $G$.
\end{theorem}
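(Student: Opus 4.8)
The plan is to exploit the fact that, since $H$ has diameter at most $k$, every copy of $H$ in $G$ is entirely contained within the $k$-hop neighborhood of any one of its vertices. This localizes an otherwise global counting problem, so that nearly all of the work can be pushed into the $k$-hop local encoder, leaving the single-layer Transformer with only a summation to perform.

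First I would fix a root vertex $r \in V(H)$ and, for each node $v$ of the input graph $G$, define $f(v)$ to be the number of injective homomorphisms $\psi : H \to G$ with $\psi(r) = v$. Because $H$ has diameter at most $k$, any such $\psi$ maps all of $V(H)$ into the $k$-hop neighborhood $N_k(v)$ of $v$; hence $f(v)$ is a function of the rooted subgraph $G[N_k(v)]$ alone. Since the local encoder in the GSM framework is an arbitrary map on subgraphs, I would take $\phi_{\mathrm{Local}}$ to be a $k$-hop encoder whose output at $v$ is exactly this scalar $f(v)$ (together with an auxiliary constant feature; see below). As $H$ is fixed, a constant number of coordinates suffices to carry $f(v)$.

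Next I would relate the local counts to the global answer. Summing over all possible images of the root gives $\sum_{v \in V} f(v) = \#\{\text{injective homomorphisms } H \to G\}$, and dividing by $|\mathrm{Aut}(H)|$ yields the number of occurrences of $H$ in $G$. Thus the Transformer only needs to compute $\tfrac{1}{|\mathrm{Aut}(H)|}\sum_v f(v)$, a constant-scaled sum of the local encodings. A single uniform-attention head — one whose scores are independent of the tokens — produces the average $\tfrac{1}{|V|}\sum_v f(v)$ at every position; rescaling this average by the sequence length $|V|$ (supplied through the positional/length information, precisely the ingredient that Proposition~1 shows to be necessary for counting) and by the fixed constant $1/|\mathrm{Aut}(H)|$ in the output MLP recovers the exact count. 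All of this fits in a single layer of constant width, since only the scalar $f(v)$, a normalization feature, and the length are tracked.

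The main obstacle is the aggregation step rather than the localization: softmax attention intrinsically averages, so turning the average back into an exact sum requires the model to access $|V|$. This is exactly why the statement does not claim counting without positional encodings — Proposition~1 rules that out — and the construction must therefore route the sequence length into the readout (e.g.\ via a length token or a positional scheme) and check that the subsequent rescaling is exact. A secondary point to verify is that the local encoder genuinely realizes $f(v)$ on every neighborhood; this is immediate here because $f(v)$ is a fixed function of a finite rooted subgraph and the encoder is unconstrained, but in a concrete instantiation one would confirm that a bounded-width $k$-hop GNN has enough expressive power to compute rooted counts of the fixed motif $H$.
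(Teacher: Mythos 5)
Your proposal is correct and follows essentially the same route as the paper's proof: both exploit the diameter bound to push the per-node motif count into the $k$-hop local encoding, and then use the single-layer Transformer purely as a summation device over the node tokens. The only differences are bookkeeping — you normalize rooted injective homomorphism counts by $|\mathrm{Aut}(H)|$ after summing, whereas the paper normalizes per-node subgraph counts by a double-counting factor $Z_H$ inside the encoding, and you spell out the average-then-rescale-by-$|V|$ trick where the paper cites an existing attention-based counting construction.
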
 

The above two positive results, along with the negative results discussed in \textcolor{c1}{Appendix}~\ref{app:no-improvement}, provide evidence that subgraph tokenizers are useful when extra attention on local structures is needed. On the other hand, when dealing with long-range dependencies and global graph tasks, node/edge tokenizers are more efficient choices. For the negative results of using subgraph tokenization, more details, and additional discussions about choosing the tokenizer see \textcolor{c1}{Appendix}~\ref{app:local-encoding}.

\section{Enhancing Graph to Sequence Models} \label{sec:enhancing}

\subsection{Hierarchical Affinity Clustering (HAC) for Tokenization}\label{sec:hac}
As discussed in \textcolor{c1}{Section}~\ref{sec:ordering-theory}, using a tokenizer that generates an ordered sequence, where similar nodes are positioned near each other, can improve the sensitivity of the method, thereby enhancing its representational power. Furthermore, as discussed in \textcolor{c1}{Section}~\ref{thm:rnn-locality}, when representing a graph as a sequence with node locality $k$ (\textcolor{c1}{Definition}~\ref{dfn:node-locality}), powerful recurrent models become very efficient for global tasks like connectivity. Motivated by these results, we present a hierarchical tokenization based on the Hierarchical Affinity Clustering (HAC)~\citep{hac} algorithm and show that it satisfies the above desirable characteristics. 

HAC is a highly scalable and parallelizable clustering algorithm based on Boruvka’s algorithm~\citep{boruuvka1926jistem} (see \textcolor{c1}{Appendix}~\ref{app:hac} for backgrounds). Given a graph $G = (V, E)$ and node encodings $P_{v_1}, \dots, P_{v_{|V|}}$,
the algorithm begins by treating each vertex as a singleton cluster,
then at each step removes the cheapest edge (cost calculated by the similarity of node encodings) going out of each cluster and join these two clusters to form a larger cluster. This process continues until a cluster includes all the nodes. The stages of this algorithm form a HAC tree, where the root represents the last cluster in the algorithm (entire graph), its two children are the last two clusters in one round before the end of the algorithm, and so forth. Accordingly, leaves are nodes of the graph, which were our initial clusters. See \autoref{fig:GSM++} for an example of a HAC tree. 

HAC offers two key advantages for an effective tokenization. First, it orders nodes such that adjacent nodes (having the same parent node) in the tree are the most similar, which is aligned with our theoretical analysis. Second, it provides a hierarchical clustering, allowing for graph encoding at different levels of granularity. We propose two types of tokenization based on Depth-First Search (DFS) and Breadth-First Search (BFS) traversals of the HAC tree.

\vskip -5pt
\head{DFS Traverse of HAC Tree} 
After performing HAC and constructing the HAC tree, we perform DFS traverse and treat each path as a sequence. That is, given a graph $G = (V, E)$, let $r$ be the root of the tree, a DFS path in the HAC tree is $G = r \rightarrow c^{i}_1 \rightarrow c^{i}_2 \rightarrow \dots \rightarrow c^{i}_d = v_i \in V$, where $r$ represents the entire graph and $c^{i}_d$ represents node $v_i \in V$. This sequence represents a hierarchy of clusters whose nodes are similar to $v_i$, and encodes the hierarchical position of $v_i$ in the graph. This approach is a subgraph-based tokenization as discussed in \textcolor{c1}{Section}~\ref{sec:tokenization}.

\vskip -5pt
\head{BFS Traverse of HAC Tree}
In this approach, we perform a BFS traverse on the HAC tree. Note that the maximum depth of the tree is $\log_2(|V|)$~\citep{hac}. Let $k \leq \log_2(|V|)$, we treat $k$-th level of BFS traverse as a path, representing the graph at $k$-th level of granularity. When $k = 1$, the length of the sequence is one and the only element is the root (entire graph). When $k$ is the depth of the tree, the sequence is the sequence of all nodes, but in an order that similar nodes are close to each other. In this tokenization method, we construct the sequences for all values of $1 \leq k \leq \log_2(|V|)$ and encode the graph at different levels of granularity. We consider a simple average pooling to obtain the overall encodings.

\begin{theorem}\label{thm:k-local-hac}
    Given a graph with minimum node locality of $k$ (\textcolor{c1}{Definition}~\ref{dfn:node-locality}), there exists a node embedding that HAC (BFS) tokenization, order nodes in a way that the sequence is $k$-local. 
\end{theorem}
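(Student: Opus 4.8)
The plan is to exploit the freedom in choosing the input node embedding $P_{v_1},\dots,P_{v_{|V|}}$ to force HAC to build a tree whose leaf order, read left to right, is exactly a node ordering inherited from the locality-optimal edge ordering. Concretely, since $G$ has minimum node locality $k$, I would fix an edge ordering $e_1,\dots,e_{|E|}$ witnessing Definition~\ref{dfn:node-locality}, so that every node's incident edges occupy an index window of size at most $k$. From this I would first extract a total order $\sigma$ on $V$---for instance by sorting nodes according to the midpoint (or first index) of their incident-edge window---and prove a purely combinatorial lemma: the edge embedding induced by $\sigma$ (edges sorted by their endpoints' $\sigma$-positions, with a fixed consistent tie-break) again has node locality at most $k$. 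This step is what pins the final constant to $k$ rather than a multiple of it; morally it says that the edge-window locality of the given ordering transfers to the node ordering it defines.

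Next I would realize $\sigma$ through HAC by embedding the nodes on a line: set $P_{v}$ to a one-dimensional (or near-collinear) coordinate strictly increasing along $\sigma$, and take the affinity/cost between two nodes to be a strictly monotone function of their coordinate distance, so that nodes closer in $\sigma$ are always more similar. The heart of the argument is then an \emph{Interval Lemma}, proved by induction on the Boruvka rounds of HAC: every cluster ever formed is a contiguous interval of $\sigma$. Singletons are intervals; and because cost decreases with $\sigma$-distance, the cheapest outgoing graph edge of a contiguous cluster connects it to the nearest cluster on its immediate left or right, so each round merges only adjacent intervals and contiguity is preserved. This needs care when several clusters contract simultaneously within a single Boruvka round and when affinities tie, which I would handle by a fixed tie-breaking rule consistent with $\sigma$; I would also use the locality of $\sigma$ established in the first step to guarantee that these $\sigma$-nearest outside neighbors are genuinely graph-adjacent, so that the merges are realizable along edges of $E$.

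Given the Interval Lemma, the HAC tree is $\sigma$-consistent: the leaf set of every internal node is a contiguous interval, so reading the leaves in left-to-right order reproduces $\sigma$. The BFS traversal used by the tokenizer, taken at its deepest level, lists exactly these leaves in left-to-right order, and hence emits the node sequence in the order $\sigma$. Combining this with the combinatorial lemma of the first step, the emitted sequence induces an edge embedding of node locality at most $k$, i.e.\ it is $k$-local; since the minimum node locality of $G$ is $k$ by hypothesis, the bound is in fact tight. This is precisely the claim, and it supplies the ordered, locality-bounded sequence required to invoke Theorem~\ref{thm:rnn-locality}.

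I expect the Interval Lemma to be the main obstacle. The subtlety is that HAC only ever contracts edges present in $E$, so its merges are constrained by the graph's topology rather than being free to join any two coordinate-adjacent clusters; controlling this is exactly where the node-locality hypothesis on $\sigma$ must be invoked, to ensure that the $\sigma$-nearest cluster outside a current interval is reachable by a graph edge and is therefore selected as the cheapest outgoing edge. A secondary difficulty, as noted, is making the first combinatorial step yield the tight constant $k$ rather than $\mathcal{O}(k)$; this likely requires the midpoint (rather than first-index) rule for defining $\sigma$, together with a short counting argument bounding how many node midpoints can fall inside any single edge's window.
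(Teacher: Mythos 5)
The paper itself never proves Theorem~\ref{thm:k-local-hac} (it is stated in Section~\ref{sec:hac} and does not reappear in Appendix~\ref{app:theory}), so your proposal can only be judged on its own terms, and on those terms it has a genuine gap sitting exactly where you suspected: the Interval Lemma. The inference you offer for it --- ``because cost decreases with $\sigma$-distance, the cheapest outgoing graph edge of a contiguous cluster connects it to the nearest cluster on its immediate left or right'' --- is false, and node locality of $\sigma$ cannot repair it: locality is only an \emph{upper} bound on how far graph edges stretch in $\sigma$; it never guarantees that $\sigma$-adjacent nodes or clusters are graph-adjacent, nor that an edge into the adjacent cluster, when one exists, is the cheapest one. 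Concretely, take twelve nodes with collinear coordinates $1,\dots,12$, intra-block edges $\{1,2\},\{2,3\},\{4,5\},\{5,6\},\{7,8\},\{8,9\},\{10,11\},\{11,12\}$, and cross edges $\{3,7\},\{1,6\},\{6,10\},\{4,9\},\{7,12\}$ (this is a theta graph: three paths of lengths $4$, $5$, $4$ between nodes $6$ and $7$, and every edge respects a bandwidth bound of $5$). Boruvka's first round produces the interval clusters $\{1,2,3\},\{4,5,6\},\{7,8,9\},\{10,11,12\}$. In the second round, $\{1,2,3\}$ and $\{7,8,9\}$ both pick $\{3,7\}$ (cost $4$, cheaper than $\{1,6\}$ at cost $5$), while $\{4,5,6\}$ and $\{10,11,12\}$ both pick $\{6,10\}$ (cost $4$), so the round yields the non-interval clusters $\{1,2,3,7,8,9\}$ and $\{4,5,6,10,11,12\}$; the BFS leaf order then reads $1,2,3,7,8,9,4,5,6,10,11,12$, placing node $6$ eight positions from its graph neighbor $1$. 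The two ``jumps'' cross, and this crossing pattern is precisely what your inductive step cannot exclude.

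Your $\sigma$ would come from an optimal edge ordering rather than being arbitrary, but nothing in the proposal shows that this provenance rules out crossing jumps --- that is a substantive open combinatorial claim, not a matter of tie-breaking care, and it is the actual content of the theorem. Two further load-bearing steps are also unresolved. First, Boruvka rounds merge entire components of picked edges simultaneously, so the HAC tree is not binary; ``reading leaves left to right'' requires the interval property for every component formed in a round, and requires specifying how BFS orders the children of a non-binary node. Second, your step-3 transfer lemma with the tight constant $k$ is unproven and doubtful: degrees in a $k$-local graph can be as large as $k+1$ and the rank window of the induced node order as large as $\Theta(k)$, so the natural counting argument for the lexicographic re-sort only gives node locality $\mathcal{O}(k^2)$, not $k$. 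As written, the proposal is a plausible program whose two key lemmas are both open, and the one concrete argument offered for the harder of the two is invalid.
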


This theorem, along with \autoref{thm:rnn-locality}, motivates us to use HAC tokenization with a recurrent model as the global encoder later in our final architecture design.

\vskip -5pt
\head{Hierarchical Positional Encoding}
One of the main advantages of HAC is its ability to provide us with rich information about the hierarchy of structures in the graph. Inspired by recent studies that show the power of hierarchy-aware positional encodings~\citep{luo2024enhancing}, we present a new PE based on the shortest path of clusters including two nodes of interest $v, u \in V$. We define $P_{v, u} = [d^{(1)}_{u, v} \:\: d^{(2)}_{u, v}\:\: \dots \:\: d_{u, v}^{(\log(|V|))}]$ as the relative positional encoding of $u$ and $v$ such that $d_{u, v}^{(i)}$ is the length of the shortest between the clusters that include these nodes at the $i$-level of HAC tree. This positional encoding not only considers the shortest path of $u$ and $v$ ($d_{u, v}^{(\log(|V|))}$ is the length of their shortest path), but it also encodes their relative position in different levels of granularity. We experimentally show that this positional encoding is very effective. 

\subsection{Hybrid Models}~\label{sec:hybrid} 
As discussed in \textcolor{c1}{Section}~\ref{sec:ordering-theory}, sequential combinations of recurrent models with transformer layers can results in a model with higher representational power.

\begin{theorem}[Informal]
    There exists a hybrid recurrent + Transformer model that solves an instance of graph connectivity more efficient than a 2-layer recurrent model or transformers.
\end{theorem}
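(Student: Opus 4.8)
The plan is to exhibit a single family of connectivity instances together with a three-component hybrid---two SSM layers followed by one non-causal Transformer block---that solves it with strictly smaller resources than either a two-layer recurrent model or a transformer alone, and then to certify the gap against both baselines using the lower bounds already established. Concretely, I would work with \emph{clustered} graphs: $G$ is a disjoint union of $t$ blocks $B_1, \dots, B_t$, each of size $\mathcal{O}(N/t)$ and internally sparse, together with a small set of bridge edges among the blocks. Under the HAC (BFS) tokenization of \textcolor{c1}{\S}\ref{sec:hac} each block occupies a contiguous window of the sequence, so by \autoref{thm:k-local-hac} the within-block edges have node locality $k$ for some small $k$, while the bridges are the only long-range tokens. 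Connectivity of $G$ then factors into (i) whether each block is internally connected and (ii) whether the ``super-graph'' on the $t$ block representatives induced by the bridges is connected.

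I would build the recurrent front end first. Because each block is $k$-local, a \autoref{thm:rnn-locality}-style single-pass recurrent model with hidden state $\mathcal{O}(k)$ can, block by block, maintain a union-find-like summary and emit a constant number of tokens per block recording whether the block is internally connected and which bridge endpoints fall in each internal component; this is precisely the regime where the sensitivity profile of \autoref{thm:SensitivitySSM} helps, since nearby contiguous block tokens dominate each other's encodings while distant bridge tokens are passed through. Two SSM layers suffice, one to resolve intra-block components and one to attach bridge endpoints to their labels. A non-causal Transformer block is then placed on the resulting reduced sequence of $\mathcal{O}(t)$ summary tokens; this sub-instance is itself a connectivity problem on $t$ super-nodes, so Corollary 3.3 of \citet{sht24} yields a solution of depth $\mathcal{O}(\log t)$, width $\mathcal{O}(t^{\epsilon})$, and attention cost $\mathcal{O}(t^2)$. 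Being non-causal, this block is immune to the representational collapse that the corollaries to \autoref{thm:SensitivitySSM} attach to deep causal recurrences. For $t = N^{o(1)}$ the hybrid thus runs with recurrent state $\mathcal{O}(k)$ and a transformer acting on only $\mathcal{O}(t)$ tokens.

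The gap against the baselines is certified by the earlier lower bounds. A two-layer recurrent model has constant depth, so---tuning the block count so that the super-graph alone embeds a generic connectivity instance of effective size $\Theta(N)$---\autoref{thm:connectivity-subq} forces any recurrent solver to width $\tilde\Omega(N^{1/4})$, dwarfing the hybrid's $\mathcal{O}(k)$ state. A transformer alone, lacking the locality inductive bias certified by \autoref{thm:k-local-hac}, cannot contract the blocks cheaply and must operate on the full length-$\Theta(N)$ sequence, paying either $\Omega(N^{1/8})$ depth or $\tilde\Omega(N^{1/4})$ width in the masked and kernelized regimes of \autoref{thm:connectivity-subq}, or $\mathcal{O}(N^2)$ attention in the dense regime---in every case strictly more than the hybrid's $\mathcal{O}(t^2)$.

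The main obstacle I anticipate is the recurrent front end: making precise that two SSM layers with HiPPO dynamics, rather than an idealized finite-state automaton, actually realize the block-wise union-find summary to the required precision, since \autoref{thm:SensitivitySSM} guarantees a favorable but still \emph{decaying} sensitivity, so I would need the block width $\mathcal{O}(N/t)$ small enough that the accumulated decay across a block does not erase the component labels. The second delicate point is the simultaneous lower bound: the block and bridge parameters must be chosen so that the instance is at once easy for the hybrid, hard enough that a two-layer recurrence falls under \autoref{thm:connectivity-subq}, and forces a generic non-local transformer back to length $\Theta(N)$. Reconciling these three requirements in one parameter setting---rather than optimizing constants---is where the real care lies, and I would tune $t$ and $k$ to open the gap.
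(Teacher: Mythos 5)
Your overall architecture mirrors the paper's proof in Appendix~\ref{ssec:hybrid-connectivity}: the paper likewise builds ``factored'' connectivity instances (its $k$-local $(n,n')$-factored graphs), uses a recurrent layer with $\mathcal{O}(k)$ state to resolve the $k$-local sub-sequences (\autoref{thm:rnn-locality}), and hands the reduced super-graph of $n$ super-nodes to a transformer solving connectivity via Corollary~3.3 of \citet{sht24}. The construction itself is sound and matches the paper. The genuine gap is in how you certify the separation against the two baselines, and both of your certification claims fail as stated. Against ``a transformer alone'' you invoke \autoref{thm:connectivity-subq} to charge $\Omega(N^{1/8})$ depth or $\tilde\Omega(N^{1/4})$ width, but that theorem covers only recurrent models, kernel-based sub-quadratic attention, and locally masked attention --- not dense softmax attention. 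A full transformer solves worst-case connectivity with depth $\mathcal{O}(\log N)$ and width $\mathcal{O}(N^{\epsilon})$ (the very Corollary~3.3 you use for your own back end), so no polynomial lower bound against it can be extracted this way. The paper's separation from the pure transformer is correspondingly modest and is a gap between \emph{upper bounds}: on its instances the best known full-transformer construction needs $\mathcal{O}(\log N)$ layers (the factored graph can have diameter $\Theta(n \cdot n')$), while the hybrid needs only $\mathcal{O}(\log n)$ layers.

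Second, your parameters are mutually inconsistent. You take $t = N^{o(1)}$ blocks so that the hybrid's transformer touches only $\mathcal{O}(t)$ tokens, yet you claim the two-layer recurrent baseline is forced to width $\tilde\Omega(N^{1/4})$. The hardness for a recurrent model can only live in the super-graph --- the blocks are $k$-local and hence solvable by an $\mathcal{O}(k)$-state single pass --- so the communication argument behind \autoref{thm:connectivity-subq} scales with the super-graph size: it yields $\Omega(t^{1/8})$ depth or $\tilde\Omega(t^{1/4})$ width, which for $t = N^{o(1)}$ is $N^{o(1)}$, not $N^{1/4}$. This is precisely the tension you defer to ``tuning $t$ and $k$,'' but it is not a constants issue; it dictates the achievable strength of the statement. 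The paper resolves it by fixing $k = \mathcal{O}(1)$ and $n = \Theta(\exp(\sqrt{\log N}))$: then the recurrent baseline needs depth $\Omega(\exp(\sqrt{\log N}/8))$ or width $\tilde\Omega(\exp(\sqrt{\log N}/4))$, the pure transformer is only known to manage with depth $\mathcal{O}(\log N)$, and the hybrid runs with one $\mathcal{O}(1)$-state recurrent layer plus $\mathcal{O}(\sqrt{\log N})$ transformer layers of width $\mathcal{O}(\exp(\epsilon\sqrt{\log N}))$, beating both baselines simultaneously. Without an analogous explicit choice, your claimed gaps cannot all hold at once, and the separation you set out to prove remains uncertified.
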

For a detailed theoretical discussion on the importance of hybrid models see \textcolor{c1}{Appendix}~\ref{ssec:hybrid-connectivity}. Motivated by these theoretical results, we suggest a 2-layer hybrid block, where the first layer is Mamba~\citep{gu2023mamba} and the second layer is a Transformer block~\citep{Vaswani2017AttentionIA}. We further experimentally show the significance of this hybrid design.

\subsection{Mixture of Tokenization (MoT)} \label{sec:MoT} 
In \textcolor{c1}{Section}~\ref{sec:theory-tokenization} we show that there is not a single type of tokenization that works best in all the cases. We further experimentally observe the same in \textcolor{c1}{Section}~\ref{sec:superior-model}. To this end, we suggest using a Mixture of Tokenization (MoT) technique, where we allow each node to use a tokenization that best describes its position based on the task. For example, one node might be better to be represented by itself (along with a positional encoding) since its neighborhood is extremely noisy. At the same time, another node might be better to be represented by its neighbors as there is a strong homophily in that area of the graph. Let $\mathcal{T}$ be the list of different tokenizers, we use a discrete router that chooses top-2 tokenizations from $\mathcal{T}$ for each node. We then concatenate the encodings of these tokenizers to obtain the final encoding for the global encoding step. See \textcolor{c1}{Appendix}~\ref{app:moe} for additional information.

\subsection{GSM++: A Powerful Hybrid Model} \label{sec:model}
In this section, we take the advantage of our observation from our theoretical results and use the techniques we presented in \textcolor{c1}{Section}~\ref{sec:enhancing} to design a powerful instance of GSMs. For the tokenization, we use our HAC-based tokenization that allows for both node, and subgraph tokenization with desirable ordering (i.e., similar nodes are close to each other). We use GSM++(BFS) and GSM++(DFS) to refer to the BFS and DFS traverse of the HAC tree, respectively. We further use our hierarchical positional encoding that can encode the distances of nodes at different levels of granularity. As the local encoding and to vectorize the subgraphs, we use a GatedGCN~\citep{bresson2017residual}. Finally, we use a hybrid global encoder by using a Mamba and a Transformer sequentially.

\section{Experiments} \label{sec:experiments}
\vspace{-2ex}
\head{Research Questions} 
In our experiments, we aim to empirically validate the key claims of this paper and compare the performance of our final model, GSM++, with state-of-the-art methods. Specifically, we aim to answer: (1) Is there a tokenizer that consistently outperforms other types of tokenization methods? (See \autoref{tab:local-tasks} and \autoref{tab:global-tasks}) (2) Is there a Global Encoder (e.g., a sequence model) that consistently outperforms other models? (See \autoref{fig:all}) (3) What is the performance of GSM++ compared to existing state-of-the-art methods on benchmark datasets? (See \autoref{tab:GNN2}, and \autoref{tab:heterophilic2}, \ref{tab:LRGD2}) (4) How does each component of GSM++ contribute to its  performance? (See \autoref{tab:ablation}) 

\vskip -7pt
\head{Graph Tasks}
We conduct experiments on: (1) Local tasks: node degree, cycle check, and triangle counting, and (2) Global Tasks: connectivity, color counting, and shortest path. These tasks are known for evaluating the ability of models in learning from graphs~\citep{sfh24, fhp23}. For the benchmark tasks on the comparison of GSM++ with baselines, we use node classification and graph classification~\citep{Dwivedi2022LongRG, dwivedi2023benchmarking, platonov2023a, Rampek2021HierarchicalGN}. See \textcolor{c1}{Appendix}~\ref{app:setup} for the details of tasks and datasets.

\vskip -7pt
\head{Baselines}
We use state-of-the-art GTs, recurrent-based, and MPNNs as our baselines. We also perform ablation studies by replacing various sequence models with each other. The full list of the sequence models, and the details of baselines are in \textcolor{c1}{Appendix}~\ref{app:setup}.
\vspace{-2ex}

\begin{table*}[t]
\begin{minipage}[c]{0.51\textwidth}
    \centering
    \vspace{-10ex}
    \captionof{table}{\small Graph tasks that require local information$^\dagger$. The \first{first} and \second{second} best results of each type are highlighted. The best overall result for each task is marked \textbf{*}.\label{tab:local-tasks}}
    \resizebox{\linewidth}{!}{
    \begin{tabular}{l c c c c c c}
    \toprule
    \multirow{2}{*}{Model} & \multicolumn{2}{c}{Node Degree} & \multicolumn{2}{c}{Cycle Check} & \multicolumn{2}{c}{Triangle Counting}  \\
    \cmidrule(l){2-3} \cmidrule(l){4-5} \cmidrule(l){6-7}
     & 1K & 100K & 1K & 100K & Erdos-Renyi  & Regular \\
     & \multicolumn{2}{c}{Accuracy $\uparrow$} & \multicolumn{2}{c}{Accuracy $\uparrow$} & \multicolumn{2}{c}{RMSE $\downarrow$} \\
    \midrule
    \multicolumn{7}{c}{Reference Baselines} \\
    \midrule
         GCN        &  9.3  &  9.5  &  80.3  & 80.2   &  0.841  & 2.18 \\
         GatedGCN   &  29.8  &   11.6   & 86.2   &  \second{83.4}   &   \second{0.476}  & 0.772 \\
         MPNN       &  \first{98.9}  &  \first{99.1}  & \first{99.1$^\textbf{*}$}   & \first{99.9$^\textbf{*}$}   &  \first{0.417$^\textbf{*}$}  &  \second{0.551} \\
         GIN        &   \second{36.4}  &   \second{35.9}  &  \second{98.2}   & 81.8   &  0.659  & \first{0.449$^\textbf{*}$} \\
         \midrule
          \multicolumn{7}{c}{Transformers} \\
         \midrule
         Node           &  29.9  &  30.1  &  30.8  &  31.2  &  0.713  & 1.19 \\
         HAC (DFS)       &  31.0  &  31.0  &  58.9  & 61.3   &  0.698  & 1.00 \\
         $k$-hop        &  \second{97.6}  &  \first{98.9}  &  \second{91.6}  &  \first{94.3}  &  \first{0.521}  & \first{0.95} \\
         HAC (BFS)       &  \first{98.1}  &  \second{98.6}  &  \first{91.9}  &  \second{92.5}  &  \second{0.574}  & \second{0.97} \\
         \midrule
          \multicolumn{7}{c}{Mamba} \\
         \midrule
         Node           &  30.4  &  30.9  &  31.2  &  33.8  &  0.719  &  1.33\\
         HAC (DFS)       &  32.6  &  33.6  &  33.7  &  34.2  & 0.726   &  1.08 \\
         $k$-hop        &  \first{98.5}  &  \second{98.7}  &  \second{90.5}  &  \first{93.8}  &  \second{0.601}  & \first{0.88} \\
         HAC (BFS)       &  \second{98.1}  &  \first{99.0}  &  \first{93.7}  &  \second{93.5}  &  \first{0.528}  &  \second{0.92}\\
         \midrule
         \multicolumn{7}{c}{Hybrid (Mamba + Transformer)} \\
         \midrule
         Node           &  31.0  &  31.6  &  31.5  & 31.7   &  0.706  &  1.27\\
         HAC (DFS)       &  32.9  &  33.7  &  33.9  & 33.6   &  0.717  &  1.11\\
         $k$-hop        &  \first{99.0$^\textbf{*}$}  &  \first{99.2$^\textbf{*}$}  &  \second{90.8}  & \second{91.1}   &  \second{0.598}   &  \first{0.84}\\
         HAC (BFS)       &  \second{98.6}  &  \second{98.5}  &  \first{93.9}  &  \first{94.0}  &  \first{0.509}  &  \second{0.90}\\
    \toprule
    \multicolumn{7}{l}{$^\dagger$ We exclude the results of random walk tokenization as their stochastic}\\
    \multicolumn{7}{l}{ nature can considerably  damage their performance in these tasks.}
    \end{tabular}
    \vspace{-6ex}
    }
\end{minipage}~\hfill
\begin{minipage}[c]{0.44\textwidth}
    \centering
    \vspace{-7ex}
\captionof{table}{\small Graph tasks that require global information$^\dagger$. The \first{first} and \second{second} best results of each type are highlighted. The best overall result for each task is marked \textbf{*}.}
\vspace{-1ex}
    \resizebox{\linewidth}{!}{
    \begin{tabular}{l c c c c c c}
    \toprule
    \multirow{2}{*}{Model} & \multicolumn{2}{c}{Connectivity} & \multicolumn{2}{c}{Color Counting} & \multicolumn{2}{c}{Shortest Path}  \\
    \cmidrule(l){2-3} \cmidrule(l){4-5} \cmidrule(l){6-7}
     & 1K & 100K & 1K & 100K & 1K & 10K\\
     & \multicolumn{2}{c}{Accuracy $\uparrow$} & \multicolumn{2}{c}{Accuracy $\uparrow$} & \multicolumn{2}{c}{RMSE $\downarrow$} \\
    \midrule
    \multicolumn{7}{c}{Reference Baselines} \\
    \midrule
         GCN        &  63.3  &  70.8  &  52.7  &  55.9  &  2.38  & 2.11 \\
         GatedGCN   &  \first{74.9}  &  \first{77.5}  &  \first{55.0}  &  \second{56.6}  &  \second{1.98}  &  \first{1.93} \\
         MPNN       &  {71.8}  &  \second{76.1}  &  \second{53.9}  &  \first{57.7}  &  \first{1.96}  &  \first{1.93}\\
         GIN        &  \second{71.9}  &  74.6  &  52.4  &  55.1 &   2.03 &  1.98 \\
         \midrule
          \multicolumn{7}{c}{Transformers} \\
         \midrule
         Node           &  \second{85.7}  &  \second{86.2}  &  \second{73.1}  &  \second{77.4}  &  \second{1.19}  & \first{1.06$^\textbf{*}$} \\
         w/o PE & 9.4 & 6.8  &  35.8 & 28.9   & 4.12 & 5.33 \\
         HAC (DFS)       &  \first{87.0}  & \first{88.1}   &  \first{83.7}  &  \first{85.3}  &  \first{1.14}  & \second{1.09} \\
         $k$-hop        &  69.9  &  70.2  &  79.9  &  80.3  &  2.10  &  2.15\\
         HAC (BFS)       &  74.1  & 76.7   &  74.5  &  77.8  & 2.31  &  2.28\\
         \midrule
          \multicolumn{7}{c}{Mamba} \\
         \midrule
         Node           &  \second{82.8}  &  \second{84.7}  &  \second{80.1}  &  \second{82.5}  &  \second{1.27}  &  \first{1.13}\\
         w/o PE & 9.2 & 7.5 &  78.9 & 81.3 & 4.09 & 5.22\\
         HAC (DFS)       &  \first{83.6} &  \first{85.2}  &  \first{85.2}  &  \first{85.4}  &  \first{1.12}  & \second{1.15} \\
         $k$-hop        &  70.9  &  71.0  &  82.6  &  83.5  &   2.03  &  2.11\\
         HAC (BFS)       & 76.3   &  77.4  &  83.7  &  84.1  &  2.24  &  2.18\\
         \midrule
         \multicolumn{7}{c}{Hybrid (Mamba + Transformer)} \\
         \midrule
         Node           &  \second{88.1}  &  \second{88.6}  &  \second{82.9}  &  \second{83.0}  &  \second{1.24}  &  \first{1.13}\\
         w/o PE & 8.9 & 8.1 & 83.2 & 84.8 & 4.65 & 4.89 \\
         HAC (DFS)       &  \first{90.7$^\textbf{*}$}  &  \first{91.4$^\textbf{*}$}  & \first{85.8$^\textbf{*}$}   &  \first{86.2$^\textbf{*}$}  & \first{1.11$^\textbf{*}$}   &  \second{1.93}\\
         $k$-hop        &  70.8  &  73.3  &  83.7  &  84.6  &  1.99  &  2.04\\
         HAC (BFS)       &  78.0  &  79.5  & 83.1   &  83.7  &  2.16  &  2.13\\
    \toprule
    & & & \\
    & & & \\
    \end{tabular}
    }
\label{tab:global-tasks}
\end{minipage}
\vspace{-6ex}
\end{table*}

\subsection{On the Effect of Tokenization and Global Encoder}\label{sec:exp-tokenization}
\vskip -7pt
\head{Local Tasks}
The results are reported in \autoref{tab:local-tasks}. Interestingly, MPNNs have outstanding performance due to their ability to capture local structures. Comparing node-based tokenizer (i.e., Node and HAC (DFS)) with subgraph-based tokenizer (i.e., $k$-hop and HAC), subgraph-based tokenizers perform significantly better in these tasks, mainly due to their local inductive bias about the structure of the graph. Models using node-based tokenizers lack implicit inductive bias and rely on the global positional encodings.

\vskip -7pt
\head{Global Tasks}
The results are reported in \autoref{tab:global-tasks}. In global tasks, node tokenizers outperforms subgraph tokenizers. The main intuition behind this result is that these tasks require global knowledge about the graph structure and looking at subgraphs can results in missing information about far nodes (or missing long-range dependencies). The only exception is color counting, which is a parallelizable task, meaning that the model can counts by aggregating information obtained from different subgraph tokens.

\vskip -7pt
\head{Takeaways} Considering both tables, we conclude that while none of Mamba or Transformer performs the best across all tasks, the hybrid model improves the performance in most cases, indicating the significance of hybrid approaches to take advantage of both worlds. Note that we fix the number of parameters for all models. These results are also aligned with our theoretical discussions.

\begin{table*}[t]
\begin{minipage}[c]{0.6\textwidth}
    \centering
    \vspace{-8ex}
\captionof{table}{\small GNN benchmark datasets~\citep{dwivedi2023benchmarking}. The \first{first}, \second{second}, and \third{third} best results are highlighted.}
\vspace{-1ex}
    \resizebox{\linewidth}{!}{
        \begin{tabular}{l  c  c   c  c }
    \toprule
        \multirow{2}{*}{\textbf{Model}} & \textbf{MNIST}  & \textbf{CIFAR10} & \textbf{PATTERN}  & \textbf{MalNet-Tiny} \\
         & Accuracy $\uparrow$& Accuracy $\uparrow$ & Accuracy $\uparrow$ & Accuracy $\uparrow$\\
         \midrule
         \midrule
         GCN & $0.9071_{\pm 0.0021}$ & $0.5571_{\pm 0.0038}$ & $0.7189_{\pm 0.0033}$ & $0.8100_{\pm 0.0000}$ \\
         GraphSAGE & $0.9731_{\pm 0.0009}$ & $0.6577_{\pm 0.0030}$ & $0.5049_{\pm 0.0001}$ & $0.8730_{\pm 0.0002}$\\
          GAT & $0.9554_{\pm 0.0021}$ & $0.6422_{\pm 0.0046}$ & $0.7827_{\pm 0.0019}$ & $0.8509_{\pm 0.0025}$\\
         SPN & $0.8331_{\pm 0.0446}$ & $0.3722_{\pm 0.0827}$ & $0.8657_{\pm 0.0014}$ & $0.6407_{\pm 0.0581}$\\
         GIN & $0.9649_{\pm 0.0025}$ & $0.5526_{\pm 0.0152}$ & $0.8539_{\pm 0.0013}$ & $0.8898_{\pm 0.0055}$ \\
         Gated-GCN & $0.9734_{\pm 0.0014}$ & $0.6731_{\pm 0.0031}$ & $0.8557_{\pm 0.0008}$ & $0.9223_{\pm 0.0065}$ \\
         \midrule
         CRaWl & $0.9794_{\pm 0.050}$ & $0.6901_{\pm 0.0259}$ & - & - \\
         \midrule
         NAGphormer & - & - & $0.8644_{\pm 0.0003}$ & - \\
         GPS & $0.9811_{\pm 0.0011}$ & $0.7226_{\pm 0.0031}$ & {$0.8664_{\pm 0.0011}$} & $0.9298_{\pm 0.0047}$\\
         GPS (BigBird) & $0.9817_{\pm 0.0001}$ & $0.7048_{\pm 0.0010}$ & $0.8600_{\pm 0.0014}$ & $0.9234_{\pm 0.0034}$\\
         Exphormer & {$0.9855_{\pm 0.0003}$} & {$0.7469_{\pm 0.0013}$} & {$0.8670_{\pm 0.0003}$} &  \third{$\mathbf{0.9402_{\pm 0.0020}}$}\\
         NodeFormer & - & - & $0.8639_{\pm 0.0021}$ & - \\
         DIFFormer & - & - & {$0.8701_{\pm 0.0018}$} & - \\
         GRIT & $0.9810_{\pm 0.0011}$ & $0.7646_{\pm 0.0088}$ & $0.8719_{\pm 0.0008}$ & -\\
         GRED & \third{$\mathbf{0.9838_{\pm 0.0002}}$} & \third{$\mathbf{0.7685_{\pm 0.0019}}$} & $0.8675_{\pm 0.0002}$ & -\\
         GMN & $0.9783_{\pm 0.0020}$ & {$0.7444_{\pm 0.0009}$} & $0.8649_{\pm 0.0019}$ & $0.9352_{\pm 0.0036}$ \\
         \midrule
         GSM++ (BFS) & \second{$\mathbf{0.9848_{\pm 0.0012}}$} & $0.7659_{\pm 0.0024}$ & \second{$\mathbf{0.8738_{\pm 0.0014}}$} & \second{$\mathbf{0.9417_{\pm 0.0020}}$} \\
         GSM++ (DFS) & $0.9829_{\pm 0.0014}$ & \second{$\mathbf{0.7692_{\pm 0.0031}}$} & \third{$\mathbf{0.8731_{\pm 0.0008}}$} & $0.9389_{\pm 0.0024}$ \\
         GSM++ (MoT) & \first{$\mathbf{0.9884_{\pm 0.0015}}$} & \first{$\mathbf{0.7781_{\pm 0.0028}}$} & \first{$\mathbf{0.8793_{\pm 0.0015}}$} & \first{$\mathbf{0.9437_{\pm 0.0058}}$} \\
    \toprule
    \end{tabular}
    }
    \label{tab:GNN2}
\end{minipage}~\hfill
\begin{minipage}[c]{0.38\textwidth}
    \centering
    \vspace{-8ex}
\captionof{table}{\small Ablation studies. The \first{first} and \second{second} best results for each model are highlighted.}
    \resizebox{\linewidth}{!}{
     \begin{tabular}{l  c  c  c  }
    \toprule
        \multirow{2}{*}{\textbf{Model}} & \textbf{COCO-SP}  & \textbf{PascalVOC-SP} & \textbf{PATTERN}  \\
         & F1 score $\uparrow$&  F1 score $\uparrow$ & Accuracy $\uparrow$\\
         \midrule
         \midrule
         \multicolumn{4}{c}{GPS Framework} \\
         \midrule
         Base & 0.3774 & 0.3689 & 0.8664 \\
         +Hybrid & \second{0.3789} & {0.3691} & {0.8665}  \\
         +HAC & {0.3780} & \second{0.3699} & \second{0.8667} \\
         +MoT & \first{0.3791} & \first{0.3703} & \first{0.8677} \\
         \midrule
         \multicolumn{4}{c}{NAGphormer Framework} \\ 
         \midrule
         Base & 0.3458 & {0.4006} & 0.8644 \\
         +Hybrid & {0.3461} & \second{0.4046} & {0.8650} \\
         +HAC & \second{0.3507} & {0.4032} & \second{0.8653}\\
         +MoT & \first{0.3591} & \first{0.4105} & \first{0.8657} \\
         \midrule
         \multicolumn{4}{c}{GSM++}\\
         \midrule
         Base & \first{0.3789} & \first{0.4128} & \first{0.8738} \\
         -PE & \second{0.3780} & \second{0.4073} &  {0.8511} \\
         -Hybrid & {0.3767} & {0.4058} & 0.8500\\
         -HAC & 0.3591 & 0.3996 & \second{0.8617}\\
    \toprule
    \end{tabular}
    }
    \label{tab:ablation}
\end{minipage}
\vskip -15pt
\end{table*}

\subsection{Is There a Superior Model among Simple GSMs?}\label{sec:superior-model}
\vspace{-5pt}
To answer this question, we perform an extensive evaluation with all the combinations of 9 different sequence models and 6 types of tokenizers over 7 datasets of Citeseer, Cora, Computer, CIFAR10, Photo, PATTERN, and Peptides-Func from \citet{long-range-data, dwivedi2023benchmarking, chen2023nagphormer}. Due to the large number of cases ($9 \times 6$ = 54 models with $54 \times 7 = 378$ experimental results), we visualize the rank of the model (higher is better), instead of reporting them in a table. The normalized results are reported in \autoref{fig:all}. These results indicate that there is no model that significantly outperforms others in most cases, validating our theoretical results that each of the sequence models as well as the types of tokenization has their own advantages and disadvantages. Accordingly, we need to understand the spacial traits of these models and use them properly based on the dataset and the task. Following our results, we \emph{conjecture} that the no free lunch theorem applies for the Graph2Sequence.

\vspace{-5pt}
\subsection{The Effect of Proposed Enhancements on GSMs: Ablation Studies}
\vspace{-5pt}
We perform two types of ablation studies: (1) We start with two commonly used frameworks of GraphGPS~\citep{GPS} and NAGphormer~\citep{chen2023nagphormer} that use node-based and subgraph-based tokenization, respectively. We then (i) replace their transformer with a hybrid model, (ii) use HAC instead of their tokenization, and (iii) use MoT; (2) We remove components of GSM++, one at a time, to see the effect of (i) hierarchical positional encoding, (ii) hybrid sequence encoder, and (iii) HAC tokenization. The results are reported in \autoref{tab:ablation}. All the components of GSM++ have an impact on its superior performance, where most contribution comes from HAC tokenization, followed by hybrid sequence encoder, and hierarchical PE. Also, we can conclude that using hybrid sequence models, HAC tokenization, and Mixture of Tokens, all have positive impact on the performance of other models, showing that the presented enhancement techniques are effective in practice. Supporting our theoretical results (Theorems~\ref{thm:rnn-locality} and \ref{thm:k-local-hac}), HAC has a higher impact on recurrent models than Transformers.

\vspace{-5pt}
\subsection{Performance of GSM++ on Benchmark Tasks}
\vspace{-5pt}
We also followed the literature and compare the performance of GSM++ with state-of-the-art methods in node and graph classification tasks on commonly used benchmark datasets~\citep{long-range-data, dwivedi2023benchmarking, platonov2023a}. The results are reported in \textcolor{c1}{Tables}~\ref{tab:GNN2}, \ref{tab:heterophilic2}, and \ref{tab:LRGD2}. These results show that GSM++ achieves a good performance and outperforms baselines in 8/10 cases. We attribute this superior performance of GSM++ to: (1) its ability to capture hierarchical structure of the graph and having proper sensitivity with respect to important nodes through proper ordering, which is the result of HAC tokenization and hierarchical PE; and (2) using a hybrid sequence model.

\section{Conclusion}
In this paper, we aim to understand Graph Sequence Models, a family of graph learning models that translate the graph into a (set) of sequence(s), vectorize it, and then employ powerful sequence models to learn dependencies of nodes. We provide extensive theoretical results to show the importance of ordering, when it is needed, and to show that there is no single sequence model or tokenization method that works strictly better for all graph algorithmic problems. Motivated by our theoretical results, we present a model, called GSM++, with new hierarchical graph tokenization method based on HAC, a new mixture of token (MoT) approach to take advantage of different tokenization, and a hybrid sequence model based on Mamba and self-attention. Our experimental evaluations support the theoretical results and the design of GSM++.

\bibliography{main}

\begin{thebibliography}{112}
\providecommand{\natexlab}[1]{#1}
\providecommand{\url}[1]{\texttt{#1}}
\expandafter\ifx\csname urlstyle\endcsname\relax
  \providecommand{\doi}[1]{doi: #1}\else
  \providecommand{\doi}{doi: \begingroup \urlstyle{rm}\Url}\fi

\bibitem[Abboud et~al.(2022)Abboud, Dimitrov, and Ceylan]{abboud2022shortest}
Ralph Abboud, Radoslav Dimitrov, and Ismail~Ilkan Ceylan.
\newblock Shortest path networks for graph property prediction.
\newblock In \emph{Learning on Graphs Conference}, pp.\  5--1. PMLR, 2022.

\bibitem[Alman \& Song(2023)Alman and Song]{as23}
Josh Alman and Zhao Song.
\newblock Fast attention requires bounded entries, 2023.

\bibitem[Alon \& Yahav(2020)Alon and Yahav]{Alon2020OnTB}
Uri Alon and Eran Yahav.
\newblock On the bottleneck of graph neural networks and its practical implications.
\newblock \emph{ArXiv}, abs/2006.05205, 2020.

\bibitem[Alon \& Yahav(2021)Alon and Yahav]{alon2021on}
Uri Alon and Eran Yahav.
\newblock On the bottleneck of graph neural networks and its practical implications.
\newblock In \emph{International Conference on Learning Representations}, 2021.
\newblock URL \url{https://openreview.net/forum?id=i80OPhOCVH2}.

\bibitem[Arjovsky et~al.(2016)Arjovsky, Shah, and Bengio]{copying-tasks}
Martin Arjovsky, Amar Shah, and Yoshua Bengio.
\newblock Unitary evolution recurrent neural networks.
\newblock In Maria~Florina Balcan and Kilian~Q. Weinberger (eds.), \emph{Proceedings of The 33rd International Conference on Machine Learning}, volume~48 of \emph{Proceedings of Machine Learning Research}, pp.\  1120--1128, New York, New York, USA, 20--22 Jun 2016. PMLR.
\newblock URL \url{https://proceedings.mlr.press/v48/arjovsky16.html}.

\bibitem[Bar-Shalom et~al.(2023)Bar-Shalom, Bevilacqua, and Maron]{bar-shalom2023subgraphormer}
Guy Bar-Shalom, Beatrice Bevilacqua, and Haggai Maron.
\newblock Subgraphormer: Subgraph {GNN}s meet graph transformers.
\newblock In \emph{NeurIPS 2023 Workshop: New Frontiers in Graph Learning}, 2023.
\newblock URL \url{https://openreview.net/forum?id=e8ba9Hu1mM}.

\bibitem[Barbero et~al.(2024)Barbero, Banino, Kapturowski, Kumaran, Ara{\'u}jo, Vitvitskyi, Pascanu, and Veli{\v{c}}kovi{\'c}]{barbero2024transformers}
Federico Barbero, Andrea Banino, Steven Kapturowski, Dharshan Kumaran, Jo{\~a}o~GM Ara{\'u}jo, Alex Vitvitskyi, Razvan Pascanu, and Petar Veli{\v{c}}kovi{\'c}.
\newblock Transformers need glasses! information over-squashing in language tasks.
\newblock \emph{arXiv preprint arXiv:2406.04267}, 2024.

\bibitem[Barceló et~al.(2020)Barceló, Kostylev, Monet, Pérez, Reutter, and Silva]{Barcel2019LogicalEO}
Pablo Barceló, Egor~V. Kostylev, Mikael Monet, Jorge Pérez, Juan Reutter, and Juan~Pablo Silva.
\newblock The logical expressiveness of graph neural networks.
\newblock In \emph{International Conference on Learning Representations}, 2020.
\newblock URL \url{https://openreview.net/forum?id=r1lZ7AEKvB}.

\bibitem[Bateni et~al.(2017)Bateni, Behnezhad, Derakhshan, Hajiaghayi, Kiveris, Lattanzi, and Mirrokni]{hac}
MohammadHossein Bateni, Soheil Behnezhad, Mahsa Derakhshan, Mohammad~Taghi Hajiaghayi, Raimondas Kiveris, Silvio Lattanzi, and Vahab~S. Mirrokni.
\newblock Affinity clustering: Hierarchical clustering at scale.
\newblock In \emph{Neural Information Processing Systems}, 2017.

\bibitem[Beck et~al.(2024)Beck, P{\"o}ppel, Spanring, Auer, Prudnikova, Kopp, Klambauer, Brandstetter, and Hochreiter]{beck2024xlstm}
Maximilian Beck, Korbinian P{\"o}ppel, Markus Spanring, Andreas Auer, Oleksandra Prudnikova, Michael Kopp, G{\"u}nter Klambauer, Johannes Brandstetter, and Sepp Hochreiter.
\newblock {xLSTM}: Extended long short-term memory.
\newblock \emph{arXiv preprint arXiv:2405.04517}, 2024.

\bibitem[Behrouz \& Hashemi(2024)Behrouz and Hashemi]{behrouz2024graph}
Ali Behrouz and Farnoosh Hashemi.
\newblock Graph mamba: Towards learning on graphs with state space models.
\newblock In \emph{Proceedings of the 30th ACM SIGKDD Conference on Knowledge Discovery and Data Mining}, KDD '24, pp.\  119–130, New York, NY, USA, 2024. Association for Computing Machinery.
\newblock ISBN 9798400704901.
\newblock \doi{10.1145/3637528.3672044}.
\newblock URL \url{https://doi.org/10.1145/3637528.3672044}.

\bibitem[Behrouz et~al.(2024{\natexlab{a}})Behrouz, Santacatterina, and Zabih]{behrouz2024chimera}
Ali Behrouz, Michele Santacatterina, and Ramin Zabih.
\newblock Chimera: Effectively modeling multivariate time series with 2-dimensional state space models.
\newblock In \emph{The Thirty-eighth Annual Conference on Neural Information Processing Systems}, 2024{\natexlab{a}}.
\newblock URL \url{https://openreview.net/forum?id=ncYGjx2vnE}.

\bibitem[Behrouz et~al.(2024{\natexlab{b}})Behrouz, Santacatterina, and Zabih]{behrouz2024mambamixer}
Ali Behrouz, Michele Santacatterina, and Ramin Zabih.
\newblock Mambamixer: Efficient selective state space models with dual token and channel selection.
\newblock \emph{arXiv preprint arXiv:2403.19888}, 2024{\natexlab{b}}.

\bibitem[Beltagy et~al.(2020{\natexlab{a}})Beltagy, Peters, and Cohan]{beltagy2020longformer}
Iz~Beltagy, Matthew~E Peters, and Arman Cohan.
\newblock Longformer: The long-document transformer.
\newblock \emph{arXiv preprint arXiv:2004.05150}, 2020{\natexlab{a}}.

\bibitem[Beltagy et~al.(2020{\natexlab{b}})Beltagy, Peters, and Cohan]{bpc20}
Iz~Beltagy, Matthew~E. Peters, and Arman Cohan.
\newblock Longformer: The long-document transformer, 2020{\natexlab{b}}.

\bibitem[Bentley(1984)]{Bentley1984Programming}
Jon Bentley.
\newblock Programming pearls: algorithm design techniques.
\newblock \emph{Commun. ACM}, 27\penalty0 (9):\penalty0 865–873, September 1984.
\newblock ISSN 0001-0782.
\newblock \doi{10.1145/358234.381162}.
\newblock URL \url{https://doi.org/10.1145/358234.381162}.

\bibitem[Biswas et~al.(2022)Biswas, Eden, Liu, Mitrović, and Rubinfeld]{belmr22}
Amartya~Shankha Biswas, Talya Eden, Quanquan~C. Liu, Slobodan Mitrović, and Ronitt Rubinfeld.
\newblock Massively parallel algorithms for small subgraph counting, 2022.
\newblock URL \url{https://arxiv.org/abs/2002.08299}.

\bibitem[Boruuvka(1926)]{boruuvka1926jistem}
Otakar Boruuvka.
\newblock O jist{\'e}m probl{\'e}mu minim{\'a}ln{\'\i}m.
\newblock 1926.

\bibitem[Bresson \& Laurent(2017)Bresson and Laurent]{bresson2017residual}
Xavier Bresson and Thomas Laurent.
\newblock Residual gated graph convnets.
\newblock \emph{arXiv preprint arXiv:1711.07553}, 2017.

\bibitem[Chami et~al.(2020)Chami, Abu-El-Haija, Perozzi, R{\'e}, and Murphy]{Chami2020MachineLO}
Ines Chami, Sami Abu-El-Haija, Bryan Perozzi, Christopher R{\'e}, and Kevin~P. Murphy.
\newblock Machine learning on graphs: A model and comprehensive taxonomy.
\newblock \emph{J. Mach. Learn. Res.}, 23:\penalty0 89:1--89:64, 2020.

\bibitem[Chen et~al.(2022)Chen, O’Bray, and Borgwardt]{Chen2022StructureAwareTF}
Dexiong Chen, Leslie O’Bray, and Karsten~M. Borgwardt.
\newblock Structure-aware transformer for graph representation learning.
\newblock In \emph{International Conference on Machine Learning}, 2022.
\newblock URL \url{https://api.semanticscholar.org/CorpusID:246634635}.

\bibitem[Chen et~al.(2023)Chen, Gao, Li, and He]{chen2023nagphormer}
Jinsong Chen, Kaiyuan Gao, Gaichao Li, and Kun He.
\newblock {NAG}phormer: A tokenized graph transformer for node classification in large graphs.
\newblock In \emph{The Eleventh International Conference on Learning Representations}, 2023.
\newblock URL \url{https://openreview.net/forum?id=8KYeilT3Ow}.

\bibitem[Choromanski et~al.(2020{\natexlab{a}})Choromanski, Likhosherstov, Dohan, Song, Gane, Sarl{\'o}s, Hawkins, Davis, Mohiuddin, Kaiser, Belanger, Colwell, and Weller]{performer}
Krzysztof Choromanski, Valerii Likhosherstov, David Dohan, Xingyou Song, Andreea Gane, Tam{\'a}s Sarl{\'o}s, Peter Hawkins, Jared Davis, Afroz Mohiuddin, Lukasz Kaiser, David Belanger, Lucy~J. Colwell, and Adrian Weller.
\newblock Rethinking attention with performers.
\newblock \emph{ArXiv}, abs/2009.14794, 2020{\natexlab{a}}.
\newblock URL \url{https://api.semanticscholar.org/CorpusID:222067132}.

\bibitem[Choromanski et~al.(2020{\natexlab{b}})Choromanski, Likhosherstov, Dohan, Song, Gane, Sarlos, Hawkins, Davis, Mohiuddin, Kaiser, et~al.]{choromanski2020rethinking}
Krzysztof Choromanski, Valerii Likhosherstov, David Dohan, Xingyou Song, Andreea Gane, Tamas Sarlos, Peter Hawkins, Jared Davis, Afroz Mohiuddin, Lukasz Kaiser, et~al.
\newblock Rethinking attention with performers.
\newblock \emph{arXiv preprint arXiv:2009.14794}, 2020{\natexlab{b}}.

\bibitem[Chu \& Cheng(2011)Chu and Cheng]{cc11}
Shumo Chu and James Cheng.
\newblock Triangle listing in massive networks and its applications.
\newblock In \emph{Proceedings of the 17th ACM SIGKDD International Conference on Knowledge Discovery and Data Mining}, KDD '11, pp.\  672–680, New York, NY, USA, 2011. Association for Computing Machinery.
\newblock ISBN 9781450308137.
\newblock \doi{10.1145/2020408.2020513}.
\newblock URL \url{https://doi.org/10.1145/2020408.2020513}.

\bibitem[Dao \& Gu(2024)Dao and Gu]{mamba2}
Tri Dao and Albert Gu.
\newblock Transformers are {SSM}s: Generalized models and efficient algorithms through structured state space duality.
\newblock In \emph{International Conference on Machine Learning (ICML)}, 2024.

\bibitem[Deng et~al.(2024)Deng, Yue, and Zhang]{deng2024polynormer}
Chenhui Deng, Zichao Yue, and Zhiru Zhang.
\newblock Polynormer: Polynomial-expressive graph transformer in linear time.
\newblock In \emph{The Twelfth International Conference on Learning Representations}, 2024.
\newblock URL \url{https://openreview.net/forum?id=hmv1LpNfXa}.

\bibitem[Di~Giovanni et~al.(2023)Di~Giovanni, Giusti, Barbero, Luise, Lio, and Bronstein]{di2023over}
Francesco Di~Giovanni, Lorenzo Giusti, Federico Barbero, Giulia Luise, Pietro Lio, and Michael~M Bronstein.
\newblock On over-squashing in message passing neural networks: The impact of width, depth, and topology.
\newblock In \emph{International Conference on Machine Learning}, pp.\  7865--7885. PMLR, 2023.

\bibitem[Ding et~al.(2023)Ding, Orvieto, He, and Hofmann]{ding2023recurrent}
Yuhui Ding, Antonio Orvieto, Bobby He, and Thomas Hofmann.
\newblock Recurrent distance-encoding neural networks for graph representation learning.
\newblock \emph{arXiv preprint arXiv:2312.01538}, 2023.

\bibitem[Ding et~al.(2024)Ding, Li, He, Norelli, Wu, Tresp, Ma, and Bronstein]{Ding2024DyGMambaEM}
Zifeng Ding, Yifeng Li, Yuan He, Antonio Norelli, Jingcheng Wu, Volker Tresp, Yunpu Ma, and Michael Bronstein.
\newblock {DyGMamba}: Efficiently modeling long-term temporal dependency on continuous-time dynamic graphs with state space models.
\newblock \emph{ArXiv}, abs/2408.04713, 2024.

\bibitem[Durbin \& Koopman(2001)Durbin and Koopman]{Durbin2001TimeSA}
James Durbin and Siem~Jan Koopman.
\newblock Time series analysis by state space methods.
\newblock \emph{OUP Catalogue}, pp.\  253, 2001.

\bibitem[Dwivedi \& Bresson(2020)Dwivedi and Bresson]{Dwivedi2020AGO}
Vijay~Prakash Dwivedi and Xavier Bresson.
\newblock A generalization of transformer networks to graphs.
\newblock \emph{ArXiv}, abs/2012.09699, 2020.

\bibitem[Dwivedi et~al.(2021)Dwivedi, Luu, Laurent, Bengio, and Bresson]{Dwivedi2021GraphNN}
Vijay~Prakash Dwivedi, Anh~Tuan Luu, Thomas Laurent, Yoshua Bengio, and Xavier Bresson.
\newblock Graph neural networks with learnable structural and positional representations.
\newblock \emph{ArXiv}, abs/2110.07875, 2021.

\bibitem[Dwivedi et~al.(2022{\natexlab{a}})Dwivedi, Ramp\'{a}\v{s}ek, Galkin, Parviz, Wolf, Luu, and Beaini]{long-range-data}
Vijay~Prakash Dwivedi, Ladislav Ramp\'{a}\v{s}ek, Michael Galkin, Ali Parviz, Guy Wolf, Anh~Tuan Luu, and Dominique Beaini.
\newblock Long range graph benchmark.
\newblock In S.~Koyejo, S.~Mohamed, A.~Agarwal, D.~Belgrave, K.~Cho, and A.~Oh (eds.), \emph{Advances in Neural Information Processing Systems}, volume~35, pp.\  22326--22340. Curran Associates, Inc., 2022{\natexlab{a}}.

\bibitem[Dwivedi et~al.(2022{\natexlab{b}})Dwivedi, Rampavsek, Galkin, Parviz, Wolf, Luu, and Beaini]{Dwivedi2022LongRG}
Vijay~Prakash Dwivedi, Ladislav Rampavsek, Mikhail Galkin, Ali Parviz, Guy Wolf, Anh~Tuan Luu, and D.~Beaini.
\newblock Long range graph benchmark.
\newblock \emph{ArXiv}, abs/2206.08164, 2022{\natexlab{b}}.

\bibitem[Dwivedi et~al.(2023)Dwivedi, Joshi, Luu, Laurent, Bengio, and Bresson]{dwivedi2023benchmarking}
Vijay~Prakash Dwivedi, Chaitanya~K Joshi, Anh~Tuan Luu, Thomas Laurent, Yoshua Bengio, and Xavier Bresson.
\newblock Benchmarking graph neural networks.
\newblock \emph{Journal of Machine Learning Research}, 24\penalty0 (43):\penalty0 1--48, 2023.

\bibitem[Fatemi et~al.(2023)Fatemi, Halcrow, and Perozzi]{fhp23}
Bahare Fatemi, Jonathan Halcrow, and Bryan Perozzi.
\newblock Talk like a graph: Encoding graphs for large language models, 2023.
\newblock URL \url{https://arxiv.org/abs/2310.04560}.

\bibitem[Gavril(1972)]{Gavril1972Algorithms}
F\u{a}nic\u{a} Gavril.
\newblock Algorithms for minimum coloring, maximum clique, minimum covering by cliques, and maximum independent set of a chordal graph.
\newblock \emph{SIAM Journal on Computing}, 1\penalty0 (2):\penalty0 180--187, 1972.
\newblock \doi{10.1137/0201013}.
\newblock URL \url{https://doi.org/10.1137/0201013}.

\bibitem[Ghaffari et~al.(2019)Ghaffari, Kuhn, and Uitto]{gku19}
Mohsen Ghaffari, Fabian Kuhn, and Jara Uitto.
\newblock Conditional hardness results for massively parallel computation from distributed lower bounds.
\newblock In \emph{2019 IEEE 60th Annual Symposium on Foundations of Computer Science (FOCS)}, pp.\  1650--1663, 2019.
\newblock \doi{10.1109/FOCS.2019.00097}.

\bibitem[Gilmer et~al.(2017{\natexlab{a}})Gilmer, Schoenholz, Riley, Vinyals, and Dahl]{Gilmer2017NeuralMP}
Justin Gilmer, Samuel~S. Schoenholz, Patrick~F. Riley, Oriol Vinyals, and George~E. Dahl.
\newblock Neural message passing for quantum chemistry.
\newblock In \emph{International Conference on Machine Learning}, 2017{\natexlab{a}}.

\bibitem[Gilmer et~al.(2017{\natexlab{b}})Gilmer, Schoenholz, Riley, Vinyals, and Dahl]{gilmer2017neural}
Justin Gilmer, Samuel~S Schoenholz, Patrick~F Riley, Oriol Vinyals, and George~E Dahl.
\newblock Neural message passing for quantum chemistry.
\newblock In \emph{International conference on machine learning}, pp.\  1263--1272. PMLR, 2017{\natexlab{b}}.

\bibitem[Grover \& Leskovec(2016)Grover and Leskovec]{node2vec}
Aditya Grover and Jure Leskovec.
\newblock node2vec: Scalable feature learning for networks, 2016.

\bibitem[Gu \& Dao(2023)Gu and Dao]{gu2023mamba}
Albert Gu and Tri Dao.
\newblock {Mamba}: Linear-time seq-uence modeling with sele ctive state spaces.
\newblock \emph{arXiv preprint arXiv:2312.00752}, 2023.

\bibitem[Gu et~al.(2020)Gu, Dao, Ermon, Rudra, and R{\'e}]{gu2020hippo}
Albert Gu, Tri Dao, Stefano Ermon, Atri Rudra, and Christopher R{\'e}.
\newblock Hippo: Recurrent memory with optimal polynomial projections.
\newblock \emph{Advances in neural information processing systems}, 33:\penalty0 1474--1487, 2020.

\bibitem[Gu et~al.(2022)Gu, Goel, and Re]{gu2022efficiently}
Albert Gu, Karan Goel, and Christopher Re.
\newblock Efficiently modeling long sequences with structured state spaces.
\newblock In \emph{International Conference on Learning Representations}, 2022.
\newblock URL \url{https://openreview.net/forum?id=uYLFoz1vlAC}.

\bibitem[Gupta \& Berant(2022)Gupta and Berant]{Gupta2022DiagonalSS}
Ankit Gupta and Jonathan Berant.
\newblock Diagonal state spaces are as effective as structured state spaces.
\newblock \emph{ArXiv}, abs/2203.14343, 2022.

\bibitem[Hamilton et~al.(2017)Hamilton, Ying, and Leskovec]{hamilton2017inductive}
Will Hamilton, Zhitao Ying, and Jure Leskovec.
\newblock Inductive representation learning on large graphs.
\newblock \emph{Advances in neural information processing systems}, 30, 2017.

\bibitem[He et~al.(2023)He, Hooi, Laurent, Perold, LeCun, and Bresson]{graph-mlpmixer}
Xiaoxin He, Bryan Hooi, Thomas Laurent, Adam Perold, Yann LeCun, and Xavier Bresson.
\newblock A generalization of vit/mlp-mixer to graphs.
\newblock In \emph{International Conference on Machine Learning}, pp.\  12724--12745. PMLR, 2023.

\bibitem[Hu et~al.(2020)Hu, Fey, Zitnik, Dong, Ren, Liu, Catasta, and Leskovec]{hu2020open}
Weihua Hu, Matthias Fey, Marinka Zitnik, Yuxiao Dong, Hongyu Ren, Bowen Liu, Michele Catasta, and Jure Leskovec.
\newblock Open graph benchmark: Datasets for machine learning on graphs.
\newblock \emph{Advances in neural information processing systems}, 33:\penalty0 22118--22133, 2020.

\bibitem[Hua et~al.(2022)Hua, Rabusseau, and Tang]{hua2022high}
Chenqing Hua, Guillaume Rabusseau, and Jian Tang.
\newblock High-order pooling for graph neural networks with tensor decomposition.
\newblock \emph{Advances in Neural Information Processing Systems}, 35:\penalty0 6021--6033, 2022.

\bibitem[Huang et~al.(2023)Huang, Lu, Robinson, Yang, Zhang, Jegelka, and Li]{Huang2023OnTS}
Yinan Huang, William Lu, Joshua Robinson, Yu~Yang, Muhan Zhang, Stefanie Jegelka, and Pan Li.
\newblock On the stability of expressive positional encodings for graphs.
\newblock In \emph{International Conference on Learning Representations}, 2023.

\bibitem[Huang et~al.(2024)Huang, Miao, and Li]{Huang2024WhatCW}
Yinan Huang, Siqi Miao, and Pan Li.
\newblock What can we learn from state space models for machine learning on graphs?
\newblock \emph{ArXiv}, abs/2406.05815, 2024.

\bibitem[Hyndman et~al.(2008)Hyndman, Koehler, Ord, and Snyder]{Hyndman2008ForecastingWE}
Robin~John Hyndman, Anne~B. Koehler, J.~Keith Ord, and Ralph~D. Snyder.
\newblock \emph{Forecasting with Exponential Smoothing: The State Space Approach}.
\newblock Springer Science \& Business Media, 2008.

\bibitem[Jelassi et~al.(2024)Jelassi, Brandfonbrener, Kakade, and Malach]{jbkm24}
Samy Jelassi, David Brandfonbrener, Sham~M. Kakade, and Eran Malach.
\newblock Repeat after me: Transformers are better than state space models at copying, 2024.

\bibitem[Kacham et~al.(2024)Kacham, Mirrokni, and Zhong]{kacham2024polysketchformer}
Praneeth Kacham, Vahab Mirrokni, and Peilin Zhong.
\newblock {PolySketchFormer}: Fast transformers via sketching polynomial kernels.
\newblock In \emph{Forty-first International Conference on Machine Learning}, 2024.
\newblock URL \url{https://openreview.net/forum?id=ghYrfdJfjK}.

\bibitem[Karami \& Ghodsi(2024)Karami and Ghodsi]{karami2024orchid}
Mahdi Karami and Ali Ghodsi.
\newblock Orchid: Flexible and data-dependent convolution for sequence modeling.
\newblock In \emph{Thirty-eighth Conference on Advances in Neural Information Processing Systems}, 2024.
\newblock URL \url{https://arxiv.org/abs/2402.18508}.

\bibitem[Karloff et~al.(2010)Karloff, Suri, and Vassilvitskii]{mpc}
Howard Karloff, Siddharth Suri, and Sergei Vassilvitskii.
\newblock A model of computation for mapreduce.
\newblock In \emph{Proceedings of the Twenty-First Annual ACM-SIAM Symposium on Discrete Algorithms}, SODA '10, pp.\  938–948, USA, 2010. Society for Industrial and Applied Mathematics.
\newblock ISBN 9780898716986.

\bibitem[Karypis \& Kumar(1998)Karypis and Kumar]{metis}
George Karypis and Vipin Kumar.
\newblock A fast and high quality multilevel scheme for partitioning irregular graphs.
\newblock \emph{SIAM Journal on Scientific Computing}, 20\penalty0 (1):\penalty0 359--392, 1998.
\newblock \doi{10.1137/S1064827595287997}.
\newblock URL \url{https://doi.org/10.1137/S1064827595287997}.

\bibitem[Kim et~al.(2022)Kim, Nguyen, Min, Cho, Lee, Lee, and Hong]{kim2022pure}
Jinwoo Kim, Dat Nguyen, Seonwoo Min, Sungjun Cho, Moontae Lee, Honglak Lee, and Seunghoon Hong.
\newblock Pure transformers are powerful graph learners.
\newblock \emph{Advances in Neural Information Processing Systems}, 35:\penalty0 14582--14595, 2022.

\bibitem[Kipf \& Welling(2016)Kipf and Welling]{kipf2016semi}
Thomas~N Kipf and Max Welling.
\newblock Semi-supervised classification with graph convolutional networks.
\newblock \emph{arXiv preprint arXiv:1609.02907}, 2016.

\bibitem[Kitaev et~al.(2020)Kitaev, Kaiser, and Levskaya]{kitaevreformer}
Nikita Kitaev, Lukasz Kaiser, and Anselm Levskaya.
\newblock Reformer: The efficient transformer.
\newblock In \emph{International Conference on Learning Representations}, 2020.

\bibitem[Kong et~al.(2023)Kong, Chen, Kirchenbauer, Ni, Bruss, and Goldstein]{kong2023goat}
Kezhi Kong, Jiuhai Chen, John Kirchenbauer, Renkun Ni, C~Bayan Bruss, and Tom Goldstein.
\newblock Goat: A global transformer on large-scale graphs.
\newblock In \emph{International Conference on Machine Learning}, pp.\  17375--17390. PMLR, 2023.

\bibitem[Kreuzer et~al.(2021)Kreuzer, Beaini, Hamilton, L{\'e}tourneau, and Tossou]{kreuzer2021rethinking}
Devin Kreuzer, Dominique Beaini, Will Hamilton, Vincent L{\'e}tourneau, and Prudencio Tossou.
\newblock Rethinking graph transformers with spectral attention.
\newblock \emph{Advances in Neural Information Processing Systems}, 34:\penalty0 21618--21629, 2021.

\bibitem[Kuang et~al.(2021)Kuang, Zhen, Li, Wei, and Ding]{kuang2021coarformer}
Weirui Kuang, WANG Zhen, Yaliang Li, Zhewei Wei, and Bolin Ding.
\newblock Coarformer: Transformer for large graph via graph coarsening.
\newblock \emph{OpenReview}, 2021.

\bibitem[Li et~al.(2024)Li, Tan, Zhang, Jin, Pan, Okumura, and Jiang]{Li2024DyGMambaCS}
Dongyuan Li, Shiyin Tan, Ying Zhang, Ming Jin, Shirui Pan, Manabu Okumura, and Renhe Jiang.
\newblock {DyG-Mamba}: Continuous state space modeling on dynamic graphs.
\newblock \emph{ArXiv}, abs/2408.06966, 2024.
\newblock URL \url{https://api.semanticscholar.org/CorpusID:271859733}.

\bibitem[Li et~al.(2020)Li, Wang, Wang, and Leskovec]{Li2020DistanceED}
Pan Li, Yanbang Wang, Hongwei Wang, and Jure Leskovec.
\newblock Distance encoding: Design provably more powerful neural networks for graph representation learning.
\newblock \emph{arXiv: Learning}, 2020.

\bibitem[Li et~al.(2018)Li, Han, and Wu]{Li2018DeeperII}
Qimai Li, Zhichao Han, and Xiao-Ming Wu.
\newblock Deeper insights into graph convolutional networks for semi-supervised learning.
\newblock In \emph{AAAI Conference on Artificial Intelligence}, 2018.

\bibitem[Loukas(2019)]{Loukas2019WhatGN}
Andreas Loukas.
\newblock What graph neural networks cannot learn: depth vs width.
\newblock \emph{ArXiv}, abs/1907.03199, 2019.

\bibitem[Luo et~al.(2024)Luo, Li, Shi, and Wu]{luo2024enhancing}
Yuankai Luo, Hongkang Li, Lei Shi, and Xiao-Ming Wu.
\newblock Enhancing graph transformers with hierarchical distance structural encoding, 2024.
\newblock URL \url{https://arxiv.org/abs/2308.11129}.

\bibitem[Ma et~al.(2023)Ma, Lin, Lim, Romero-Soriano, Dokania, Coates, Torr, and Lim]{ma2023graph}
Liheng Ma, Chen Lin, Derek Lim, Adriana Romero-Soriano, Puneet~K Dokania, Mark Coates, Philip Torr, and Ser-Nam Lim.
\newblock Graph inductive biases in transformers without message passing.
\newblock In \emph{International Conference on Machine Learning}, pp.\  23321--23337. PMLR, 2023.

\bibitem[Maskey et~al.(2022)Maskey, Parviz, Thiessen, St{\"a}rk, Sadikaj, and Maron]{Maskey2022GeneralizedLP}
Sohir Maskey, Ali Parviz, Maximilian Thiessen, Hannes St{\"a}rk, Ylli Sadikaj, and Haggai Maron.
\newblock Generalized laplacian positional encoding for graph representation learning.
\newblock \emph{ArXiv}, abs/2210.15956, 2022.

\bibitem[Masters et~al.(2023)Masters, Dean, Klaeser, Li, Maddrell-Mander, Sanders, Helal, Beker, Fitzgibbon, Huang, Ramp{\'a}{\v{s}}ek, and Beaini]{masters2023gps}
Dominic Masters, Josef Dean, Kerstin Klaeser, Zhiyi Li, Samuel Maddrell-Mander, Adam Sanders, Hatem Helal, Deniz Beker, Andrew~W Fitzgibbon, Shenyang Huang, Ladislav Ramp{\'a}{\v{s}}ek, and Dominique Beaini.
\newblock {GPS}++: Reviving the art of message passing for molecular property prediction.
\newblock \emph{Transactions on Machine Learning Research}, 2023.
\newblock ISSN 2835-8856.
\newblock URL \url{https://openreview.net/forum?id=moVEUgJaHO}.

\bibitem[Mehta et~al.(2022{\natexlab{a}})Mehta, Gupta, Cutkosky, and Neyshabur]{Mehta2022LongRL}
Harsh Mehta, Ankit Gupta, Ashok Cutkosky, and Behnam Neyshabur.
\newblock Long range language modeling via gated state spaces.
\newblock \emph{ArXiv}, abs/2206.13947, 2022{\natexlab{a}}.
\newblock URL \url{https://api.semanticscholar.org/CorpusID:250089125}.

\bibitem[Mehta et~al.(2022{\natexlab{b}})Mehta, Gupta, Cutkosky, and Neyshabur]{mehta2022long}
Harsh Mehta, Ankit Gupta, Ashok Cutkosky, and Behnam Neyshabur.
\newblock Long range language modeling via gated state spaces.
\newblock \emph{arXiv preprint arXiv:2206.13947}, 2022{\natexlab{b}}.

\bibitem[Merrill \& Sabharwal(2023)Merrill and Sabharwal]{ms23}
William Merrill and Ashish Sabharwal.
\newblock The parallelism tradeoff: Limitations of log-precision transformers.
\newblock \emph{Transactions of the Association for Computational Linguistics}, 11:\penalty0 531–545, 2023.
\newblock ISSN 2307-387X.
\newblock \doi{10.1162/tacl_a_00562}.
\newblock URL \url{http://dx.doi.org/10.1162/tacl_a_00562}.

\bibitem[Merrill et~al.(2024)Merrill, Petty, and Sabharwal]{mps24}
William Merrill, Jackson Petty, and Ashish Sabharwal.
\newblock The illusion of state in state-space models, 2024.

\bibitem[Morris et~al.(2019)Morris, Ritzert, Fey, Hamilton, Lenssen, Rattan, and Grohe]{morris2019weisfeiler}
Christopher Morris, Martin Ritzert, Matthias Fey, William~L Hamilton, Jan~Eric Lenssen, Gaurav Rattan, and Martin Grohe.
\newblock {Weisfeiler and Leman} go neural: Higher-order graph neural networks.
\newblock In \emph{Proceedings of the AAAI conference on artificial intelligence}, volume~33, pp.\  4602--4609, 2019.

\bibitem[Morris et~al.(2020)Morris, Rattan, and Mutzel]{morris2020weisfeiler}
Christopher Morris, Gaurav Rattan, and Petra Mutzel.
\newblock {Weisfeiler and Leman} go sparse: Towards scalable higher-order graph embeddings.
\newblock \emph{Advances in Neural Information Processing Systems}, 33:\penalty0 21824--21840, 2020.

\bibitem[M{\"u}ller et~al.(2024)M{\"u}ller, Galkin, Morris, and Ramp{\'a}{\v{s}}ek]{muller2023attending}
Luis M{\"u}ller, Mikhail Galkin, Christopher Morris, and Ladislav Ramp{\'a}{\v{s}}ek.
\newblock Attending to graph transformers.
\newblock \emph{Transactions on Machine Learning Research}, 2024.
\newblock ISSN 2835-8856.
\newblock URL \url{https://openreview.net/forum?id=HhbqHBBrfZ}.

\bibitem[Nisan \& Wigderson(1993)Nisan and Wigderson]{nw93}
Noam Nisan and Avi Wigderson.
\newblock Rounds in communication complexity revisited.
\newblock \emph{SIAM Journal on Computing}, 22\penalty0 (1):\penalty0 211--219, 1993.
\newblock \doi{10.1137/0222016}.
\newblock URL \url{https://doi.org/10.1137/0222016}.

\bibitem[Orvieto et~al.(2023)Orvieto, Smith, Gu, Fernando, Gulcehre, Pascanu, and De]{orvieto2023LRU}
Antonio Orvieto, Samuel~L Smith, Albert Gu, Anushan Fernando, Caglar Gulcehre, Razvan Pascanu, and Soham De.
\newblock Resurrecting recurrent neural networks for long sequences.
\newblock In \emph{International Conference on Machine Learning}, pp.\  26670--26698. PMLR, 2023.

\bibitem[Peng et~al.(2023)Peng, Alcaide, Anthony, Albalak, Arcadinho, Biderman, Cao, Cheng, Chung, Grella, Kranthikiran, He, Hou, Kazienko, Kocoń, Kong, Koptyra, Lau, Mantri, Mom, Saito, Tang, Wang, Wind, Wozniak, Zhang, Zhang, Zhao, Zhou, Zhu, and Zhu]{Peng2023RWKVRR}
Bo~Peng, Eric Alcaide, Quentin~G. Anthony, Alon Albalak, Samuel Arcadinho, Stella Biderman, Huanqi Cao, Xin Cheng, Michael Chung, Matteo Grella, G~Kranthikiran, Xuming He, Haowen Hou, Przemyslaw Kazienko, Jan Kocoń, Jiaming Kong, Bartlomiej Koptyra, Hayden Lau, Krishna Sri~Ipsit Mantri, Ferdinand Mom, Atsushi Saito, Xiangru Tang, Bolun Wang, Johan~Sokrates Wind, Stansilaw Wozniak, Ruichong Zhang, Zhenyuan Zhang, Qihang Zhao, Peng Zhou, Jian Zhu, and Rui Zhu.
\newblock Rwkv: Reinventing rnns for the transformer era.
\newblock In \emph{Conference on Empirical Methods in Natural Language Processing}, 2023.

\bibitem[Perozzi et~al.(2014)Perozzi, Al-Rfou, and Skiena]{deepwalk}
Bryan Perozzi, Rami Al-Rfou, and Steven Skiena.
\newblock Deepwalk: online learning of social representations.
\newblock In \emph{Proceedings of the 20th ACM SIGKDD International Conference on Knowledge Discovery and Data Mining}, KDD '14, pp.\  701–710, New York, NY, USA, 2014. Association for Computing Machinery.
\newblock ISBN 9781450329569.
\newblock \doi{10.1145/2623330.2623732}.
\newblock URL \url{https://doi.org/10.1145/2623330.2623732}.

\bibitem[Perozzi et~al.(2024)Perozzi, Fatemi, Zelle, Tsitsulin, Kazemi, Al-Rfou, and Halcrow]{Perozzi2024LetYG}
Bryan Perozzi, Bahare Fatemi, Dustin Zelle, Anton Tsitsulin, Mehran Kazemi, Rami Al-Rfou, and Jonathan~J. Halcrow.
\newblock Let your graph do the talking: Encoding structured data for llms.
\newblock \emph{ArXiv}, abs/2402.05862, 2024.

\bibitem[Platonov et~al.(2023)Platonov, Kuznedelev, Diskin, Babenko, and Prokhorenkova]{platonov2023a}
Oleg Platonov, Denis Kuznedelev, Michael Diskin, Artem Babenko, and Liudmila Prokhorenkova.
\newblock A critical look at the evaluation of {GNN}s under heterophily: Are we really making progress?
\newblock In \emph{The Eleventh International Conference on Learning Representations}, 2023.
\newblock URL \url{https://openreview.net/forum?id=tJbbQfw-5wv}.

\bibitem[Poli et~al.(2023)Poli, Massaroli, Nguyen, Fu, Dao, Baccus, Bengio, Ermon, and R{\'e}]{poli2023hyena}
Michael Poli, Stefano Massaroli, Eric Nguyen, Daniel~Y Fu, Tri Dao, Stephen Baccus, Yoshua Bengio, Stefano Ermon, and Christopher R{\'e}.
\newblock Hyena hierarchy: Towards larger convolutional language models.
\newblock \emph{International Conference on Machine Learning}, 2023.

\bibitem[Ramp{\'a}{\v{s}}ek et~al.(2022)Ramp{\'a}{\v{s}}ek, Galkin, Dwivedi, Luu, Wolf, and Beaini]{GPS}
Ladislav Ramp{\'a}{\v{s}}ek, Michael Galkin, Vijay~Prakash Dwivedi, Anh~Tuan Luu, Guy Wolf, and Dominique Beaini.
\newblock Recipe for a general, powerful, scalable graph transformer.
\newblock \emph{Advances in Neural Information Processing Systems}, 35:\penalty0 14501--14515, 2022.

\bibitem[Rampášek \& Wolf(2021)Rampášek and Wolf]{Rampek2021HierarchicalGN}
Ladislav Rampášek and Guy Wolf.
\newblock Hierarchical graph neural nets can capture long-range interactions.
\newblock In \emph{2021 IEEE 31st International Workshop on Machine Learning for Signal Processing (MLSP)}, pp.\  1--6, 2021.
\newblock \doi{10.1109/MLSP52302.2021.9596069}.

\bibitem[Roy et~al.(2020)Roy, Saffar, Vaswani, and Grangier]{roy2020efficient}
Aurko Roy, Mohammad Saffar, Ashish Vaswani, and David Grangier.
\newblock Efficient content-based sparse attention with routing transformers.
\newblock \emph{Proceedings of TACL}, 2020.

\bibitem[Sanford et~al.(2024{\natexlab{a}})Sanford, Fatemi, Hall, Tsitsulin, Kazemi, Halcrow, Perozzi, and Mirrokni]{sfh24}
Clayton Sanford, Bahare Fatemi, Ethan Hall, Anton Tsitsulin, Mehran Kazemi, Jonathan Halcrow, Bryan Perozzi, and Vahab Mirrokni.
\newblock Understanding transformer reasoning capabilities via graph algorithms, 2024{\natexlab{a}}.
\newblock URL \url{https://arxiv.org/abs/2405.18512}.

\bibitem[Sanford et~al.(2024{\natexlab{b}})Sanford, Hsu, and Telgarsky]{sht24}
Clayton Sanford, Daniel Hsu, and Matus Telgarsky.
\newblock Transformers, parallel computation, and logarithmic depth.
\newblock In Ruslan Salakhutdinov, Zico Kolter, Katherine Heller, Adrian Weller, Nuria Oliver, Jonathan Scarlett, and Felix Berkenkamp (eds.), \emph{Proceedings of the 41st International Conference on Machine Learning}, volume 235 of \emph{Proceedings of Machine Learning Research}, pp.\  43276--43327. PMLR, 21--27 Jul 2024{\natexlab{b}}.
\newblock URL \url{https://proceedings.mlr.press/v235/sanford24a.html}.

\bibitem[Shirzad et~al.(2023)Shirzad, Velingker, Venkatachalam, Sutherland, and Sinop]{shirzad2023exphormer}
Hamed Shirzad, Ameya Velingker, Balaji Venkatachalam, Danica~J Sutherland, and Ali~Kemal Sinop.
\newblock Exphormer: Sparse transformers for graphs.
\newblock \emph{arXiv preprint arXiv:2303.06147}, 2023.

\bibitem[Song et~al.(2023)Song, Zhou, Wang, and Lin]{song2023ordered}
Yunchong Song, Chenghu Zhou, Xinbing Wang, and Zhouhan Lin.
\newblock Ordered {GNN}: Ordering message passing to deal with heterophily and over-smoothing.
\newblock In \emph{The Eleventh International Conference on Learning Representations}, 2023.
\newblock URL \url{https://openreview.net/forum?id=wKPmPBHSnT6}.

\bibitem[Song et~al.(2024)Song, Huang, Cai, Wang, Zhou, and Lin]{s4g}
Yunchong Song, Siyuan Huang, Jiacheng Cai, Xinbing Wang, Chenghu Zhou, and Zhouhan Lin.
\newblock S4g: Breaking the bottleneck on graphs with structured state spaces, 2024.
\newblock URL \url{https://openreview.net/forum?id=0Z6lN4GYrO}.

\bibitem[Sun et~al.(2024)Sun, Li, Dalal, Xu, Vikram, Zhang, Dubois, Chen, Wang, Koyejo, et~al.]{sun2024learning}
Yu~Sun, Xinhao Li, Karan Dalal, Jiarui Xu, Arjun Vikram, Genghan Zhang, Yann Dubois, Xinlei Chen, Xiaolong Wang, Sanmi Koyejo, et~al.
\newblock Learning to (learn at test time): Rnns with expressive hidden states.
\newblock \emph{arXiv preprint arXiv:2407.04620}, 2024.

\bibitem[Suri \& Vassilvitskii(2011)Suri and Vassilvitskii]{sv11}
Siddharth Suri and Sergei Vassilvitskii.
\newblock Counting triangles and the curse of the last reducer.
\newblock In \emph{Proceedings of the 20th International Conference on World Wide Web}, WWW '11, pp.\  607–614, New York, NY, USA, 2011. Association for Computing Machinery.
\newblock ISBN 9781450306324.
\newblock \doi{10.1145/1963405.1963491}.
\newblock URL \url{https://doi.org/10.1145/1963405.1963491}.

\bibitem[Tay et~al.(2022)Tay, Dehghani, Bahri, and Metzler]{tay2022efficient}
Yi~Tay, Mostafa Dehghani, Dara Bahri, and Donald Metzler.
\newblock Efficient transformers: A survey.
\newblock \emph{ACM Computing Surveys}, 55\penalty0 (6):\penalty0 1--28, 2022.

\bibitem[T{\"o}nshoff et~al.(2023)T{\"o}nshoff, Ritzert, Wolf, and Grohe]{nshoff2023walking}
Jan T{\"o}nshoff, Martin Ritzert, Hinrikus Wolf, and Martin Grohe.
\newblock Walking out of the {Weisfeiler Leman} hierarchy: Graph learning beyond message passing.
\newblock \emph{Transactions on Machine Learning Research}, 2023.
\newblock ISSN 2835-8856.
\newblock URL \url{https://openreview.net/forum?id=vgXnEyeWVY}.

\bibitem[Vaswani et~al.(2017)Vaswani, Shazeer, Parmar, Uszkoreit, Jones, Gomez, Kaiser, and Polosukhin]{Vaswani2017AttentionIA}
Ashish Vaswani, Noam~M. Shazeer, Niki Parmar, Jakob Uszkoreit, Llion Jones, Aidan~N. Gomez, Lukasz Kaiser, and Illia Polosukhin.
\newblock Attention is all you need.
\newblock In \emph{Neural Information Processing Systems}, 2017.

\bibitem[Velickovic et~al.(2017)Velickovic, Cucurull, Casanova, Romero, Lio’, and Bengio]{Velickovic2017GraphAN}
Petar Velickovic, Guillem Cucurull, Arantxa Casanova, Adriana Romero, Pietro Lio’, and Yoshua Bengio.
\newblock Graph attention networks.
\newblock \emph{ArXiv}, abs/1710.10903, 2017.

\bibitem[Veličković et~al.(2018)Veličković, Cucurull, Casanova, Romero, Liò, and Bengio]{GAT}
Petar Veličković, Guillem Cucurull, Arantxa Casanova, Adriana Romero, Pietro Liò, and Yoshua Bengio.
\newblock Graph attention networks.
\newblock In \emph{International Conference on Learning Representations}, 2018.
\newblock URL \url{https://openreview.net/forum?id=rJXMpikCZ}.

\bibitem[Wang et~al.(2024{\natexlab{a}})Wang, Tsepa, Ma, and Wang]{Wang2024GraphMambaTL}
Chloe~X. Wang, Oleksii Tsepa, Jun Ma, and Bo~Wang.
\newblock {Graph-Mamba}: Towards long-range graph sequence modeling with selective state spaces.
\newblock \emph{ArXiv}, abs/2402.00789, 2024{\natexlab{a}}.

\bibitem[Wang et~al.(2022)Wang, Yin, Zhang, and Li]{wang2022equivariant}
Haorui Wang, Haoteng Yin, Muhan Zhang, and Pan Li.
\newblock Equivariant and stable positional encoding for more powerful graph neural networks.
\newblock In \emph{International Conference on Learning Representations}, 2022.
\newblock URL \url{https://openreview.net/forum?id=e95i1IHcWj}.

\bibitem[Wang et~al.(2020)Wang, Li, Khabsa, Fang, and Ma]{wang2020linformer}
Sinong Wang, Belinda~Z Li, Madian Khabsa, Han Fang, and Hao Ma.
\newblock Linformer: Self-attention with linear complexity.
\newblock \emph{arXiv preprint arXiv:2006.04768}, 2020.

\bibitem[Wang et~al.(2024{\natexlab{b}})Wang, Wang, Ding, Li, Wu, Rong, Kong, Huang, Li, Yang, Wang, Jiang, Li, Wang, Tian, and Tang]{Wang2024StateSM}
Xiao Wang, Shiao Wang, Yuhe Ding, Yuehang Li, Wentao Wu, Yao Rong, Weizhe Kong, Ju~Huang, Shihao Li, Haoxiang Yang, Ziwen Wang, Bowei Jiang, Chenglong Li, Yaowei Wang, Yonghong Tian, and Jin Tang.
\newblock State space model for new-generation network alternative to transformers: A survey.
\newblock \emph{ArXiv}, abs/2404.09516, 2024{\natexlab{b}}.

\bibitem[Wu et~al.(2022)Wu, Zhao, Li, Wipf, and Yan]{nodeformer}
Qitian Wu, Wentao Zhao, Zenan Li, David~P Wipf, and Junchi Yan.
\newblock Nodeformer: A scalable graph structure learning transformer for node classification.
\newblock \emph{Advances in Neural Information Processing Systems}, 35:\penalty0 27387--27401, 2022.

\bibitem[Xu \& Veli{\v{c}}kovi{\'c}(2024)Xu and Veli{\v{c}}kovi{\'c}]{xu2024recurrent}
Kaijia Xu and Petar Veli{\v{c}}kovi{\'c}.
\newblock Recurrent aggregators in neural algorithmic reasoning.
\newblock \emph{arXiv preprint arXiv:2409.07154}, 2024.

\bibitem[Xu et~al.(2019)Xu, Hu, Leskovec, and Jegelka]{xu2018how}
Keyulu Xu, Weihua Hu, Jure Leskovec, and Stefanie Jegelka.
\newblock How powerful are graph neural networks?
\newblock In \emph{International Conference on Learning Representations}, 2019.
\newblock URL \url{https://openreview.net/forum?id=ryGs6iA5Km}.

\bibitem[Yang et~al.(2024)Yang, Wang, Shen, Panda, and Kim]{yang2024gated}
Songlin Yang, Bailin Wang, Yikang Shen, Rameswar Panda, and Yoon Kim.
\newblock Gated linear attention transformers with hardware-efficient training.
\newblock In \emph{Forty-first International Conference on Machine Learning}, 2024.
\newblock URL \url{https://openreview.net/forum?id=ia5XvxFUJT}.

\bibitem[Yehudai et~al.(2024)Yehudai, Kaplan, Ghandeharioun, Geva, and Globerson]{ykggg24}
Gilad Yehudai, Haim Kaplan, Asma Ghandeharioun, Mor Geva, and Amir Globerson.
\newblock When can transformers count to n?, 2024.
\newblock URL \url{https://arxiv.org/abs/2407.15160}.

\bibitem[Ying et~al.(2021)Ying, Cai, Luo, Zheng, Ke, He, Shen, and Liu]{ying2021transformers}
Chengxuan Ying, Tianle Cai, Shengjie Luo, Shuxin Zheng, Guolin Ke, Di~He, Yanming Shen, and Tie-Yan Liu.
\newblock Do transformers really perform badly for graph representation?
\newblock \emph{Advances in Neural Information Processing Systems}, 34:\penalty0 28877--28888, 2021.

\bibitem[Zaheer et~al.(2020)Zaheer, Guruganesh, Dubey, Ainslie, Alberti, Ontanon, Pham, Ravula, Wang, Yang, et~al.]{zaheer2020big}
Manzil Zaheer, Guru Guruganesh, Kumar~Avinava Dubey, Joshua Ainslie, Chris Alberti, Santiago Ontanon, Philip Pham, Anirudh Ravula, Qifan Wang, Li~Yang, et~al.
\newblock Big bird: Transformers for longer sequences.
\newblock \emph{Advances in neural information processing systems}, 33:\penalty0 17283--17297, 2020.

\end{thebibliography}
\bibliographystyle{main}

\newpage
\appendix

\begin{figure*}
    \centering
    \vspace{-8ex}
    \includegraphics[width=0.9\linewidth]{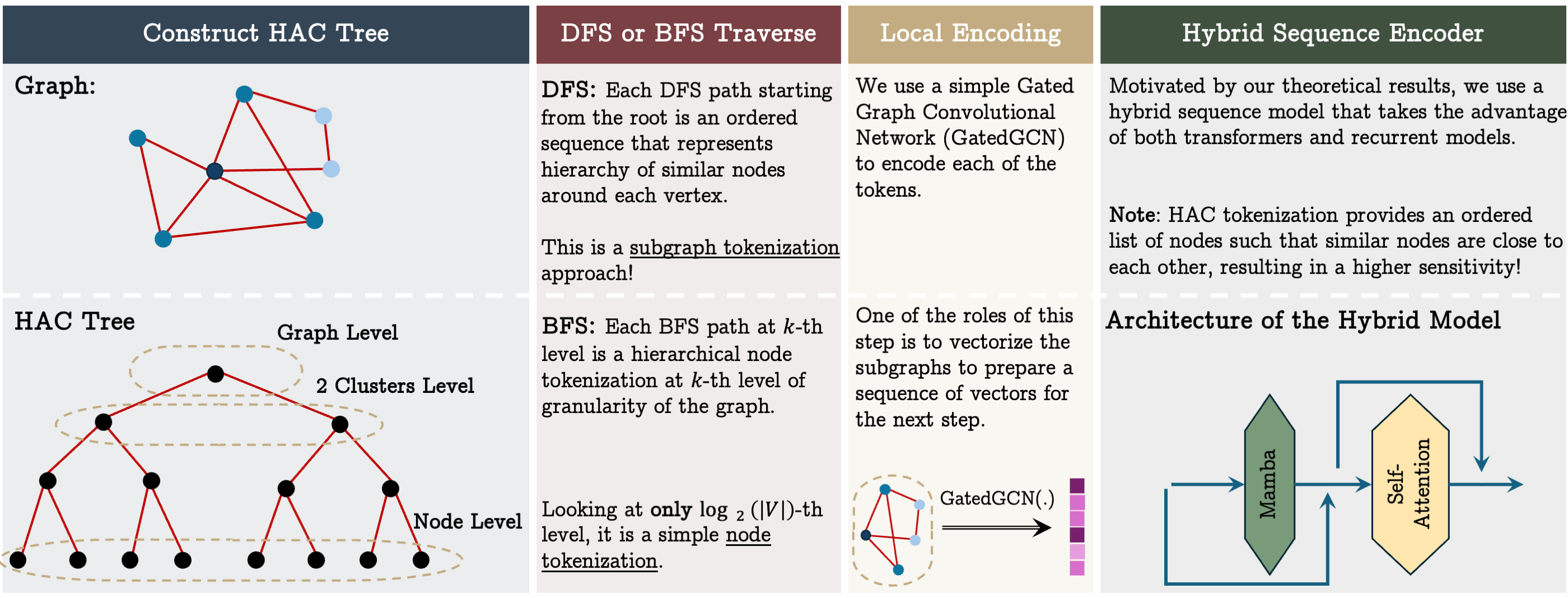}
    \caption{\textbf{Overview of GSM++.} GSM++ is a special instance of GSMs that uses: (1) HAC tokenization, (2) hierarchical PE, and (3) a hybrid sequence model.}
    \label{fig:GSM++}
\end{figure*}

\section{Backgrounds}

\subsection{Graph Transformers}\label{app:GT}

The Transformer architecture~\citep{Vaswani2017AttentionIA},  consists of a sequential chain of layers, each layer being composed of two primary sub-layers: a multi-head attention mechanism and a fully-connected feed-forward network. These layers are arranged alternately to form the backbone of the model. Let  $G$  be a graph with node feature matrix  $\mathbf{X} \in \mathbb{R}^{n \times d}$.

In each layer $\ell > 0$ of a graph Transformers, given node feature matrix  $\mathbf{X}^{(\ell)} \in \mathbb{R}^{n \times d} $, a single attention head computes the following:
\begin{align} \label{eq:attn}
\texttt{Attn}(\mathbf{X}^{(\ell)}) := \texttt{Softmax}\left( \frac{\mathbf{Q}\mathbf{K}^\top}{\sqrt{d_k}} \right)\mathbf{V}, \quad 
\end{align}
where the \texttt{Softmax}() is applied row-wise,  $d_k$  denotes the feature dimension of  the query  ($\mathbf{Q}$)  and   key ($\mathbf{K}$) matrices, with $\mathbf{X}^{(0)} := \mathbf{X}$ .
The matrices  $\mathbf{Q}, \mathbf{K},$  and  $\mathbf{V}$  are the result of projecting  $\mathbf{X}^{(\ell)}$  linearly,
\[
\mathbf{Q} := \mathbf{X}^{(\ell)}\mathbf{W}_Q, \quad \mathbf{K} := \mathbf{X}^{(\ell)}\mathbf{W}_K, \quad \text{and} \quad \mathbf{V} := \mathbf{X}^{(\ell)}\mathbf{W}_V,
\]
using three matrices  $\mathbf{W}_Q, \mathbf{W}_K \in \mathbb{R}^{d \times d_K}$ , and $ \mathbf{W}_V \in \mathbb{R}^{d \times d}$, where optional bias terms omitted for clarity.
This attention mechanism forms the foundation of the Transformer architecture (also referred to as \emph{non-causal Transformers} or \emph{Softmax Transformers} throughout this work while \emph{causal Transformers} refers to use of causal masking in attention).
The extension to multi-head attention, where multiple attention heads operate in parallel, is standard and straightforward.
\Eqref{eq:attn} fails to take into account the graph topology, leading to the development of various Positional Encoding (PE) and Structural Encoding (SE) methods aimed at integrating essential structural information into Graph Transformers (GTs). Notably, several approaches have adopted the top-\textit{k} Laplacian eigenpairs as node PEs, despite the substantial computational demands involved in resolving the sign ambiguity of Laplacian eigenvectors. Likewise, SE methods face considerable computational challenges in determining the distances between all node pairs or in the sampling of graph substructures. 
Moreover, the standard attention mechanism in Equation \ref{eq:attn} generates a dense attention matrix, leading to quadratic complexity with respect to the number of nodes.
Recent innovations in Graph Transformers (GTs) have introduced scalable models by linearizing the attention matrix and eliminating the need for PE/SE. However, these models have not been extensively analyzed for their practical expressiveness and might underperform compared to the state-of-the-art Graph Neural Networks (GNNs).

\subsection{Recurrent Models}\label{app:recurrent-models} 
Recurrent Neural Networks (RNNs) are particularly adept at handling sequential data thanks to their inherent capability to maintain an internal memory state. This allows RNNs to preserve contextual information from previous inputs within a sequence, making them ideal for tasks such as language modeling, time-series prediction, and speech recognition. 

Specifically, at each discrete time step \( t \), the standard RNN processes a vector \( \mathbf{x}_t \in \mathbb{R}^D \) along with the previous step's hidden state \( \mathbf{h}_{t-1} \in \mathbb{R}^N \) to produce an output vector \( \mathbf{o}_t \in \mathbb{R}^O \) and update the hidden state to \( \mathbf{h}_t \in \mathbb{R}^N \). The hidden state serves as the network's memory, retaining information about the past inputs it has encountered. This dynamic memory capability allows RNNs to process sequences of varying lengths. Formally, the updates can be described as follows:
\begin{align}
    \mathbf{h}_t &= \sigma(\mathbf{W}_{hx} \mathbf{x}_t + \mathbf{W}_{hh} \mathbf{h}_{t-1} + \mathbf{b}_h), \quad \nonumber\\ \label{eq:5}
    \mathbf{o}_t &= \mathbf{W}_{oh} \mathbf{h}_t + \mathbf{b}_o, \quad 
\end{align}
where \( \mathbf{W}_{hx} \in \mathbb{R}^{N \times D} \) is the weight matrix responsible for processing model inputs into hidden states, \( \mathbf{W}_{hh} \in \mathbb{R}^{N \times N} \) represents the recurrent connections between hidden states, and \( \mathbf{W}_{oh} \in \mathbb{R}^{O \times N} \) is used to generate outputs derived from hidden states. The biases \( \mathbf{b}_h \in \mathbb{R}^N \) and \( \mathbf{b}_o \in \mathbb{R}^O \), along with the hyperbolic tangent activation function \( \texttt{tanh} \), introduce non-linearity to the model. In essence, RNNs are nonlinear recurrent models that effectively capture temporal patterns by harnessing the historical knowledge stored in hidden states.

In our theoretical results, however, we refer to a recurrent model that has a general recurrent formula to make the use of the theoretical results to a broader context. That is, we define a recurrent model as:
\begin{align}
    &h_t = f(h_{t-1}, x_t),\\
    &o_t = g(h_{t}, x_t),
\end{align}
where $f$ and $g$ are arbitrary functions. As an illustrative example, in \autoref{eq:5}, we have:
\begin{align}
    &f(h_{t-1}, x_t) = \sigma(\mathbf{W}_{hx} \mathbf{x}_t + \mathbf{W}_{hh} \mathbf{h}_{t-1} + \mathbf{b}_h),\\
    & g(h_t, x_t) = \mathbf{W}_{oh} \mathbf{h}_t + \mathbf{b}_o.
\end{align}

\subsection{Hierarchical Affinity Clustering (HAC) Algorithm}\label{app:hac}

Hierarchical Affinity Clustering (HAC)~\citep{hac} is a powerful algorithm used to group data points based on their similarity or affinity, often represented by a distance measure such as Euclidean distance or cosine similarity. HAC organizes data in a hierarchical structure, either through an agglomerative (bottom-up) process, where each data point starts as its own cluster and the closest clusters are progressively merged, or a divisive (top-down) process, which begins with all data points in a single cluster that is repeatedly split. The result of the clustering process can be visualized using a dendrogram, showcasing the nested relationships between clusters at different levels of similarity.

Finding the affinity clustering of a given graph \( G \) is closely tied to the task of identifying its Minimum Spanning Tree (MST). In fact, the information encoded in the MST of \( G \) is enough to determine its affinity clustering. Consequently, once the MST is computed, the affinity clustering or single linkage can be obtained in a single step.
\begin{theorem}~\citep{hac}
Let \( G = (V, E) \) denote an arbitrary graph, and let \( G' = (V, E') \) denote the minimum spanning tree of \( G \). Running the affinity clustering algorithm on \( G \) produces the same clustering of \( V \) as running the algorithm on \( G' \).
\end{theorem}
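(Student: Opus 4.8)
The plan is to reduce the equivalence of the two runs to a single structural fact: \emph{every edge that affinity clustering ever contracts on $G$ is an edge of the minimum spanning tree $G'$}. Throughout I would assume the edge weights are distinct (otherwise one fixes a consistent tie-breaking rule, under which the MST and the sequence of affinity merges are both well-defined and unique); this makes the ``cheapest outgoing edge'' of each cluster unambiguous and guarantees a unique MST. I also take $G$ to be connected, which is implicit since the statement refers to \emph{the} spanning tree. The engine of the argument is the classical cut (light-edge) property of minimum spanning trees: for any nonempty proper vertex subset $S \subseteq V$, the unique minimum-weight edge crossing the cut $(S, V \setminus S)$ belongs to the MST.

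First I would observe that at every round of affinity clustering each current cluster $C$ is a connected set of vertices, and the edge it selects is by definition the cheapest edge crossing the cut $(C, V \setminus C)$. By the cut property this selected edge lies in $G'$; hence all edges contracted during the run on $G$ are MST edges. Next I would compare the two runs round by round. Suppose, as the induction hypothesis, that the partitions into clusters agree after the first $t$ rounds on $G$ and on $G'$. For a cluster $C$ in this common partition, let $e^\star$ be its cheapest outgoing edge in $G$. Since $e^\star$ is an MST edge it also appears in $G'$, and because $G' \subseteq G$ every outgoing edge of $C$ in $G'$ is also an outgoing edge of $C$ in $G$; therefore $e^\star$ remains the cheapest outgoing edge of $C$ within $G'$. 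Thus each cluster selects exactly the same edge in both runs.

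The step I expect to require the most care is the simultaneous, multi-edge nature of a single affinity-clustering round: within one round many clusters select outgoing edges at once, these edges may be shared between two clusters or chain several clusters together, and the new clusters are precisely the connected components of the graph formed by overlaying all selected edges on the current partition. Having shown that the per-cluster selections coincide between $G$ and $G'$, I would argue that the full set of selected edges is identical, hence the connected components---and therefore the merged clusters after round $t+1$---coincide, closing the induction. The base case is immediate, since both runs begin with all vertices as singletons, and termination on $G'$ is guaranteed because a spanning tree is connected, so the recursion runs until a single cluster remains. Concluding, at every level the two runs produce the same partition of $V$, so the full hierarchical clustering (the HAC tree) built on $G$ equals the one built on $G'$.
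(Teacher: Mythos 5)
Your proof is correct. The paper itself does not prove this statement---it is quoted as background from \citet{hac}---so there is no internal proof to compare against; your argument is essentially the standard one from the original affinity clustering paper: by the cut property, the cheapest edge leaving any cluster is an MST edge, and since $E' \subseteq E$ a round-by-round induction shows both runs select exactly the same edge sets and hence merge exactly the same clusters. You also handle the two points that genuinely need care---distinct weights (or a fixed tie-breaking rule) so that the MST and the per-cluster selections are well defined, and the fact that a single round merges entire connected components of the overlay of all selected edges, not just pairs---which is what lets the induction close.
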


\subsection{Mixture of Expert}\label{app:moe}
In this paper, inspired by the idea of Mixture of Expert (MoE), we present Mixture of Tokenization (MoT). In \textcolor{c1}{Section}~\ref{sec:theory-tokenization} we show that there is not a single type of tokenization that works best in all the cases. We further experimentally observe the same in \textcolor{c1}{Section}~\ref{sec:superior-model}. To this end, we suggest using a Mixture of Tokenization (MoT) technique, where we allow each node to use a tokenization that best describe its position based on the task. For example, one node might be better to be represented by itself (along with a positional encoding) since its neighborhood is extremely noisy. At the same time, another node might be better to be represented by its neighbors as there is a strong homophily in that area of the graph. Let $\mathcal{T}$ be the list of different tokenizers, we use a discrete router that choose top-2 tokenizations from $\mathcal{T}$ for each node. We then concatenate the encodings of these tokenizers to obtain the final encoding for the global encoding step. That is, given $\mathcal{T}$ and $X$ as the input, we use a linear router with learnable weight $W_r$ such that:
\begin{align}
    &S = \sigma\left( X W_r \right),\\
    &I = \textbf{Top-2}\left(S^\top\right),\\
    &P = \textbf{one-hot}\left( I \right),
\end{align}
where $\sigma(.)$ is non-linearity, Top-$2(.)$ returns the index of two rows with largest values, and one-hot$(.)$ returns the one-hot encoding of the indices. These weights are learned in an end-to-end manner along with the other parameters in the model.

\section{Related Work} \label{app:rw}

\paragraph{Graph Neural Networks and Graph Transformers.} utilize different approaches for processing graph data. GNNs typically employ a message-passing mechanism that collects and synthesizes information from adjacent nodes into updated node representations~\citep{kipf2016semi,xu2018how,Velickovic2017GraphAN}. Despite their utility, these models exhibit limitations in expressiveness, equivalent to that of the 1-WL test, a traditional algorithm for testing graph isomorphism~\citep{morris2019weisfeiler,xu2018how, Loukas2019WhatGN}. They also encounter challenges like over-smoothing and over-squashing, and struggle with capturing long-range dependencies~\citep{alon2021on,Dwivedi2022LongRG}. In contrast, Graph Transformers make use of an attention mechanism~\citep{Dwivedi2020AGO,kim2022pure,kreuzer2021rethinking} that enables attention to all nodes within a graph. Since utilizing full attention can obscure graph topology and render nodes non-distinguishable, numerous studies have concentrated on creating effective node encodings such as Laplacian positional encodings~\citep{dwivedi2023benchmarking,Dwivedi2021GraphNN,Maskey2022GeneralizedLP,Huang2023OnTS, wang2022equivariant}, shortest path distance/random walk distance~\citep{ying2021transformers,Li2020DistanceED,deepwalk}, among others. Additionally, some approaches merge Message Passing Neural Networks (MPNNs) with full attention capabilities~\citep{GPS,Chen2022StructureAwareTF}. However, this full attention model scales quadratically with the size of the graph. To mitigate this complexity, certain studies have applied general linear attention techniques to Graph Transformers~\citep{performer,GPS}, along with other specific strategies intended to optimize performance~\citep{Perozzi2024LetYG,sht24,nodeformer}.

\paragraph{Efficient Sequence Modeling.} Recent advances in efficient sequence modeling have led to attention-free layers,
such as Mamba~\citep{gu2023mamba}, RWKV~\citep{Peng2023RWKVRR}, and various gated linear RNNs, all featuring sub-quadratic
complexity in sequence length and excellent scaling properties, enabling the construction of a new type of foundation models. These sub-quadratic sequence models are defined in classical literature~\citep{Hyndman2008ForecastingWE, Durbin2001TimeSA} as systems that model the dynamics of state variables over time through first-order differential or difference equations, offering a cohesive structure for modeling time series. 
Addressing the computational limitations inherent in SSMs, a set of innovations called Structural SSM (S4)~\citep{gu2022efficiently, Gupta2022DiagonalSS} has been developed. These utilize specialized configurations of parameter matrices, such as diagonal structures, to facilitate faster matrix operations. 
The evolution of SSMs has also inspired new hybrid neural architectures that blend elements of SSMs with other deep learning techniques~\citep{behrouz2024mambamixer, Mehta2022LongRL, beck2024xlstm}. A comprehensive review of all these developments is available in~\citep{Wang2024StateSM}.

\paragraph{Sequence Models for Graphs.} Efforts to integrate State Space Models (SSMs) into graph processing have led to innovative approaches in graph Transformers, shifting from traditional attention mechanisms to SSM applications. Initially, these methods tokenize graphs, which then allows for the application of any SSM-inspired model to process the data. In one approach for tokenization~\citep{Wang2024GraphMambaTL},  the nodes are ordered into sequences according to their degrees, and Mamba~\citep{gu2023mamba}is then applied. Due to the common occurrence of nodes with identical degrees, it becomes necessary to randomly permute these sequences during training, which results in a model that lacks permutation equivariance with respect to the reordering of node indices. Another variant~\citep{behrouz2024graph}, constructs sequences by extracting neighborhoods up to M hops from a root node, treating each hop as a distinct token, and applying Mamba to model the root node’s representation. This method, however, is computationally intensive as it requires pre-processing each neighborhood token with a Graph Neural Network (GNN) before applying Mamba. Additionally, the final layer of this model also applies Mamba to nodes arranged by their degree, preserving the issue of non-permutation equivariance. Graph State Space Convolution (GSSC)~\citep{Huang2024WhatCW}, leverage global permutation-equivariant set aggregation and factorizable graph kernels that rely on relative node distances as the convolution kernels.  Recent advancements have been made in extending SSM-based models to accommodate temporal graphs, introducing two variants known as DyG-Mamba one~\citep{Ding2024DyGMambaEM, Li2024DyGMambaCS}, each integrating the Mamba model with GNN encoders.

\section{Special Instances of GSMs}\label{app:special-instances}

Table~\ref{tab:GSM-instances} illustrates that several well-known methods for learning on graphs are special instances of the Graph Sequence Model (GSM) framework, highlighting its universality. GSM consists of three stages: (1) Tokenization, (2) Local Encoding, and (3) Global Encoding. In this section, we demonstrate how GSM can handle each of these models based on these three stages. We categorize the existing architectures into four general families: Traditional Methods, Graph Transformers, Non-MPNN GNNs, and Recurrent-based Models. For each representative model within these families, we show how it can be formalized within the GSM pipeline.

\begin{table}
    \centering
     \caption{How are different models special instances of GSM framework} \label{tab:GSM-instances}
    \resizebox{\linewidth}{!}{
    \begin{tabular}{l c | c | c }
    \toprule
        Method & \multicolumn{1}{c}{Tokenization} & \multicolumn{1}{c}{Local Encoding} & \multicolumn{1}{c}{Global Encoding} \\
        \midrule
        \midrule
         \multirow{2}{*}{DeepWalk~(\citeyear{deepwalk})} &  \multirow{2}{*}{Random Walk} & \multirow{2}{*}{\textsc{Identity}(.)} & \multirow{2}{*}{SkipGram}\\
         & & & \\
         \midrule
         \multirow{2}{*}{Node2Vec~(\citeyear{node2vec})} & $2^{\text{nd}}$ Order  & \multirow{2}{*}{\textsc{Identity}(.)} &  \multirow{2}{*}{SkipGram}\\
          & Random Walk & & \\
           \midrule
         \multirow{2}{*}{GraphTransformer~(\citeyear{Dwivedi2020AGO})} &  \multirow{2}{*}{Node}  & \multirow{2}{*}{\textsc{Identity}(.)}  & \multirow{2}{*}{Transformer}\\
          & & & \\
           \midrule
        \multirow{2}{*}{GraphGPS~(\citeyear{GPS})} &  \multirow{2}{*}{Node}  & \multirow{2}{*}{\textsc{Identity}(.)} & \multirow{2}{*}{Transformer}\\
          & & & \\
           \midrule
           \multirow{2}{*}{NodeFormer~(\citeyear{nodeformer})} &  \multirow{2}{*}{Node} & \multirow{2}{*}{$\textsc{Gumbel-Softmax}(.)$} & \multirow{2}{*}{Transformer}\\
          & & &  \\
           \midrule
           \multirow{2}{*}{Graph-ViT~(\citeyear{graph-mlpmixer})} &  METIS Clustering  &  \multirow{2}{*}{\textsc{Gcn}(.)}  & \multirow{2}{*}{ViT}\\
          & (Patching) & & \\
           \midrule
         \multirow{2}{*}{Exphormer~(\citeyear{shirzad2023exphormer})}  &  \multirow{2}{*}{Node} & \multirow{2}{*}{\textsc{Identity}(.)} & \multirow{2}{*}{Sparse Transformer}\\
          & & & \\
           \midrule
           \multirow{2}{*}{CRaWl~(\citeyear{nshoff2023walking})}  &  \multirow{2}{*}{Random Walk} & \multirow{2}{*}{1D Convolutions} & \multirow{2}{*}{\textsc{MLP}(.)}\\
          & & & \\
          \midrule
            \multirow{2}{*}{NAGphormer~(\citeyear{chen2023nagphormer})}  &  \multirow{2}{*}{$k$-hop neighborhoods} & \multirow{2}{*}{\textsc{Gcn}(.)} & \multirow{2}{*}{Transformer}\\
          & & & \\
          \midrule
            \multirow{2}{*}{SP-MPNNs~(\citeyear{abboud2022shortest})}  &  \multirow{2}{*}{$k$-hop neighborhoods} & \multirow{2}{*}{\textsc{Identity}(.)} & \multirow{2}{*}{\textsc{GIN}(.)}\\
          & & & \\
          \midrule
         \multirow{2}{*}{GRED~(\citeyear{ding2023recurrent})} &  \multirow{2}{*}{$k$-hop neighborhood} & \multirow{2}{*}{\textsc{MLP}(.)} & \multirow{2}{*}{\textsc{Rnn}(.)}\\
          & & & \\
           \midrule
         \multirow{2}{*}{S4G~(\citeyear{s4g})} &  \multirow{2}{*}{$k$-hop neighborhood} & \multirow{2}{*}{\textsc{Identity}(.)} & \multirow{2}{*}{S4(.)}\\
          & & &  \\
           \midrule
         \multirow{2}{*}{Graph Mamba~(\citeyear{behrouz2024graph})} & Union of Random Walks  & \multirow{2}{*}{$\textsc{Gated-Gcn}(.)$} & \multirow{2}{*}{Bi-Mamba(.)} \\
          & (With varying length) & &  \\
    \toprule
    \end{tabular}
    }
    
\end{table}

\begin{remark}[Traditional Methods]
    DeepWalk~\citep{deepwalk} and Node2Vec~\citep{node2vec} can be formulated as GSMs.
\end{remark}

\begin{remark}[Graph Transformers]
    Most popular GTs, including GraphGPS~\citep{GPS}, Exphormer~\citep{shirzad2023exphormer}, GOAT~\citep{kong2023goat}, NAGphormer~\citep{chen2023nagphormer}, SubGraphormer~\citep{bar-shalom2023subgraphormer}, GPS++~\citep{masters2023gps}, Nodeformer~\citep{nodeformer}, TokenGT~\citep{kim2022pure}, Graphormer~\citep{ying2021transformers}, , Coarformer~\citep{kuang2021coarformer}, and SAN~\citep{kreuzer2021rethinking}, can be formulated as GSMs.
\end{remark}

\begin{remark}[Non-MPNN GNNs]
    Several popular non-MPNN methods for learning on graphs, including CRaWl~\citep{nshoff2023walking}, Graph-MLPMixer, and Graph-ViT~\citep{graph-mlpmixer} can be formulated as GSMs.
\end{remark}

\begin{remark}[Recurrent-based Models]
    Recent graph learning methods based on modern recurrent models, including Graph Mamba~\citep{behrouz2024graph}, GRED~\citep{ding2023recurrent}, and S4G~\citep{s4g} can be formulated as GSMs. 
\end{remark}

\section{Proofs of Theoretical Results}\label{app:theory}

\subsection{Color Counting}
\begin{theorem}
    Let $\mathbf{C}$ be the number of colors, and $m$ be the width of a recurrent model, the recurrent model can count the number of nodes with each specific color iff $m \geq \mathbf{C}$.
\end{theorem}

\begin{proof}
    We consider a linear recurrent models (the same process can be done by any non-linear recurrent models):
     \begin{align}
        &h_t = Ah_{t-1} + B\mathbf{x}_t \\
        &\mathbf{y}_t = C h_t.
    \end{align}
    We let $x_t$ (input features) be the one-hot encoding of colors that can say what is the color of this input. Using $B = I$ and $A = I$ and $h_0 = \mathbf{0}$, and if $m \mathcal{C}$, then $i$-th channel in $h_t$ is responsible to count $i$-th color. For input $x$ with color $c_i$, its input feature is $\begin{pmatrix}
        0 \\
        \vdots\\
        1 \\
        \vdots \\
        0
    \end{pmatrix}$, where only the $i$-th channel is 1 and others are 0, and so we have:
    \begin{align}
        h_t = I h_{t-1} + I \begin{pmatrix}
        0 \\
        \vdots\\
        1 \\
        \vdots \\
        0
    \end{pmatrix},
    \end{align}
    which means ${h_t}_j = {h_{t-1}}_{j}$ for $j \neq i$ and ${h_t}_i = {h_{t-1}}_{i} + 1$. This shows recurren models with $m \geq \mathbf{C}$ can count.
\end{proof}

\subsection{Representational Collapse in State Space Models}\label{app:rep-collapse}

\begin{theorem} \label{thm:app-SensitivitySSM}
    For any $k > i$ let $\mathcal{A}(k, i) = (1 - \frac{1}{k}) (1 - \frac{1}{k - 1}) \dots (1 - \frac{1}{i}) \frac{1}{i}$ and $L$ be the number of layers. For any $i < n$, the gradient norm of the HiPPO operator for the output of layer $L$ at time $n + 1$ (i.e., $\mathbf{y}^{(L)}_{n + 1}$) with respect to input at time $i$ (i.e., $\mathbf{x}_i$) satisfies:
    \begin{align}\nonumber
          \mathcal{C}_{\text{low}}^{(L)} \left|\left| \sum_{k_1 \geq i} \dots \hspace{-2ex}\sum_{k_L \geq k_{L - 1}} \hspace{-2ex} \mathcal{A}\left( n - 1, k_L \right) \prod_{\ell = 2}^{L - 1} \mathcal{A}\left( k_{\ell} - 1, k_{\ell - 1}\right)  \mathcal{A}\left( k_1 - 1, i \right) \right|\right| \leq ||\frac{\partial \: \mathbf{y}^{(L)}_{n + 1}}{\partial \: \mathbf{x}_i}|| \leq \mathcal{C}^{(L)}_{\text{up}} \left(\frac{1}{n} \right)^{L}
    \end{align}
\end{theorem}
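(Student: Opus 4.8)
The plan is to reduce the two-sided estimate to an exact computation of the per-layer sensitivity of the discretized HiPPO operator and then to compose these single-layer Jacobians across the $L$ stacked layers via the chain rule. First I would write the HiPPO-LegS recurrence in its time-varying discretized form: with the natural step schedule $\Delta_k = 1/k$ it reads $h_k = (\mathbf{I} - \tfrac{1}{k}\mathbf{A})\,h_{k-1} + \tfrac{1}{k}\mathbf{B}\mathbf{x}_k$ and $\mathbf{y}_k = \mathbf{C} h_k$, where $\mathbf{A}$ is the HiPPO matrix of \citet{gu2020hippo}. Since a single layer is a linear time-varying recurrence, differentiating and telescoping the intermediate Jacobians gives the closed form $\frac{\partial \mathbf{y}_k}{\partial \mathbf{x}_i} = \mathbf{C}\,\bigl(\prod_{j=i+1}^{k}(\mathbf{I}-\tfrac{1}{j}\mathbf{A})\bigr)\,\tfrac{1}{i}\mathbf{B}$, which is the object to be estimated at one layer.

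The key algebraic step is to diagonalize $\mathbf{A}$ and isolate the slowest-decaying mode. The dominant eigenvalue of the HiPPO-LegS matrix produces exactly the scalar factor $\prod_{j=i}^{k}(1-\tfrac{1}{j})\,\tfrac{1}{i} = \mathcal{A}(k,i)$, while the faster modes together with the fixed operator norms $\|\mathbf{B}\|$ and $\|\mathbf{C}\|$ are absorbed into layer-independent constants. This yields the single-layer sandwich $\mathcal{C}_{\text{low}}\,\mathcal{A}(k-1,i)\le \|\partial \mathbf{y}_k/\partial \mathbf{x}_i\| \le \mathcal{C}_{\text{up}}\,\mathcal{A}(k-1,i)$. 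The telescoping identity $\mathcal{A}(k,i)=\tfrac{i-1}{ik}$ then makes the monotonicity in the distance $k-i$ transparent, which is what the subsequent corollary needs.

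Next I would compose the $L$ layers. Writing the output of layer $\ell$ as a function of the outputs of layer $\ell-1$ and applying the chain rule, causality forces an ordered tuple of intermediate times $i \le k_1 \le \dots \le k_L \le n$, so that $\frac{\partial \mathbf{y}^{(L)}_{n+1}}{\partial \mathbf{x}_i}$ is exactly a nested sum of products of single-layer sensitivities. Substituting the per-layer lower estimate immediately delivers the stated lower bound, with $\mathcal{C}_{\text{low}}^{(L)}=(\mathcal{C}_{\text{low}})^{L}$ multiplying precisely the nested sum appearing in the statement; no further work is needed on that side.

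For the upper bound I would bound each factor by $\mathcal{A}(k,i)\le \tfrac{1}{k}$ and then collapse the ordered sum using the telescoping of the product along a chain $i\to k_1\to\cdots\to k_L\to n$, which cancels the intermediate numerators and leaves one clean $1/n$ contribution per layer, hence the rate $(1/n)^{L}$ with a constant $\mathcal{C}_{\text{up}}^{(L)}=(\mathcal{C}_{\text{up}})^{L}$ times a combinatorial factor. The hard part is exactly this last step: a term-by-term bound on the ordered nested sum leaves residual harmonic sums that are too large, so the argument must exploit the cancellation inside the telescoping product rather than bounding each $\mathcal{A}$ factor in isolation; controlling the ordered multi-index sum tightly enough to recover the clean $(1/n)^L$ rate is where the real care is required. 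Once both bounds are in place, letting $L\to\infty$ sends the upper estimate to $0$ for every fixed $i$, which is what drives the representational-collapse behaviour discussed after the theorem.
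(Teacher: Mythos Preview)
Your plan follows the same skeleton as the paper's proof: compute the single-layer Jacobian of the discretized HiPPO recurrence as the telescoped product $(\mathbf{I}-\tfrac{A}{n})\cdots(\mathbf{I}-\tfrac{A}{i+1})\tfrac{B}{i}$, invoke the chain rule to produce the nested sum over intermediate times $k_1\le\cdots\le k_L$, and then appeal to the HiPPO spectral structure to sandwich each single-layer norm by constant multiples of $\mathcal{A}(k,i)$. On the lower bound, your argument and the paper's are identical.

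Where you diverge is in rigor rather than strategy. You explicitly record the time-varying step $\Delta_k=1/k$ (the paper writes the constant-coefficient recurrence and then silently switches to the $(\mathbf{I}-A/j)$ product), and you correctly isolate the upper bound as the nontrivial step. The paper handles that step very loosely: it asserts $\mathcal{A}(k,i)\le 1/n$ and passes directly to $(1/n)^L$, without ever discussing the sum over the intermediate indices $k_1,\dots,k_L$. Your observation that a naive term-by-term bound leaves residual harmonic factors, and that one must instead exploit the telescoping $\mathcal{A}(k,i)=\tfrac{i-1}{ik}$ along the chain to cancel the intermediate numerators, is exactly the care the paper's argument is missing. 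So your route is the paper's route, just with the gap in the upper-bound step identified and a concrete mechanism proposed to close it.
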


\begin{proof}
    We use the recurrent formulation of state space models:
    \begin{align}
        &h_t = Ah_{t-1} + B\mathbf{x}_t \\
        &\mathbf{y}_t = C h_t.
    \end{align}
    Based on this formulation, if we take the gradient $||\frac{\partial \: \mathbf{y}^{(1)}_{n + 1}}{\partial \: \mathbf{x}_i}||$ we have:
    \begin{align}
        \frac{\partial \: \mathbf{y}^{(1)}_{n + 1}}{\partial \: \mathbf{x}_i} = \left( I - \frac{A}{n} \right) \left( I - \frac{A}{n-1}\right) \left( I - \frac{A}{n-2}\right) ... \left( I - \frac{A}{i+1}\right) \frac{B}{i}.
    \end{align}

Next, we need to see how using more layers affect this gradient. Let $L$ be the layer of interest, similar to the above, since the output of the $(L-1)$-th layer is the input of $L$-th layer, then we have:
\begin{align}
    \frac{\partial \: \mathbf{y}^{(L)}_{n + 1}}{\partial \: \mathbf{y}^{(L-1)}_i} = \left( I - \frac{A}{n} \right) \left( I - \frac{A}{n-1}\right) \left( I - \frac{A}{n-2}\right) ... \left( I - \frac{A}{i+1}\right) \frac{B}{i},
\end{align}
and so usign chain rule, we have:
\begin{align}\label{eq:10}
    \frac{\partial \: \mathbf{y}^{(L)}_{n + 1}}{\partial \: \mathbf{x}_i} = \sum_{k_1 \geq i} \dots \sum_{k_L \geq k_{L - 1}}  \frac{\partial \: \mathbf{y}^{(L)}_{n + 1}}{\partial \: \mathbf{y}^{(L-1)}_{k_L}}\prod_{\ell = 2}^{L - 1}  \frac{\partial \: \mathbf{y}^{(\ell)}_{k_\ell}}{\partial \: \mathbf{y}^{(\ell-1)}_{k_{\ell-1}}} \frac{\partial \: \mathbf{y}^{(1)}_{k_1}}{\partial \: \mathbf{x}_{i}} 
\end{align}
Now, since we are using HIPPO~\citep{gu2020hippo}, we can see that all $I - \frac{A}{j}$ for $j = n, \dots, i+1$ are diagonizable and as discussed by \citet{gu2020hippo} we have: 
\begin{align}
    \left|\left| \left( I - \frac{A}{n} \right) \left( I - \frac{A}{n-1}\right) \dots \left( I - \frac{A}{i+1}\right) \frac{B}{i} \right|\right| \in \Theta\left( \underset{\mathcal{A}(k, i)}{\underbrace{(1 - \frac{1}{k})  \dots (1 - \frac{1}{i}) \frac{1}{i}}}\right),
\end{align}
which means there are $\mathcal{C}_{\text{low}}$ and $\mathcal{C}_{\text{up}}$ such that:
\begin{align}
    \mathcal{C}_{\text{low}} \times  \mathcal{A}(k, i) \leq \left|\left| \left( I - \frac{A}{n} \right) \left( I - \frac{A}{n-1}\right) \dots \left( I - \frac{A}{i+1}\right) \frac{B}{i} \right|\right| \leq \mathcal{C}_{\text{up}} \times \mathcal{A}(k, i).
\end{align}
Note that, it is simple to see:
\begin{align}
    \mathcal{A}(k, i) = (1 - \frac{1}{k})  \dots (1 - \frac{1}{i}) \frac{1}{i} \leq \frac{1}{n}.
\end{align}
Using \autoref{eq:10} and the above bounds, we can conclude that:
\begin{align}
    \mathcal{C}_{\text{low}}^{(L)} \left|\left| \sum_{k_1 \geq i} \dots \hspace{-2ex}\sum_{k_L \geq k_{L - 1}} \hspace{-2ex} \mathcal{A}\left( n - 1, k_L \right) \prod_{\ell = 2}^{L - 1} \mathcal{A}\left( k_{\ell} - 1, k_{\ell - 1}\right)  \mathcal{A}\left( k_1 - 1, i \right) \right|\right| \leq ||\frac{\partial \: \mathbf{y}^{(L)}_{n + 1}}{\partial \: \mathbf{x}_i}|| \leq \mathcal{C}^{(L)}_{\text{up}} \left(\frac{1}{n} \right)^{L},
\end{align}
which completes the proof.
\end{proof}

Similar to \citet{barbero2024transformers}, who provide this upper bound for Softmax attention, next, we derive the upper-bound for linear attentions: 

\begin{prop}
    Given an input sequence $\mathbf{x}_1, \dots, \mathbf{x}_n$, let $L$ be the number of layers, $\mathbf{y}^{(L)}_i$ be the $i$-th output in layer $L$, then the sensitivity of of any linear attention satisfies:
    \begin{align}
            || \frac{\partial \mathbf{y}_n}{\partial \mathbf{x}_i} || \leq \mathcal{C}^{(L)} \sum_{k_1 \geq i} \dots \sum_{k_L \geq k_{L - 1}} {\alpha}_{n, k_L}^{(L - 1)} \prod_{\ell = 2}^{L - 1} {\alpha}_{k_{\ell}, k_{\ell - 1}}^{(\ell - 1)} {\alpha}^{(0)}_{k_1, i},
    \end{align} 
        where ${\alpha}^{\ell}_{i, j} = \frac{\sigma \left(f\left( \mathbf{q}_i^{(\ell)}, \mathbf{k}_{j}^{(\ell)}, \mathbf{p}_{i, j} \right) \right)}{\sum_{t} \sigma \left(f\left( \mathbf{q}_i^{(\ell)}, \mathbf{k}_{t}^{(\ell)}, \mathbf{p}_{i, t} \right) \right)}$ are weights of the attention.
\end{prop}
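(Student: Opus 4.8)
The plan is to mirror the strategy used by \citet{barbero2024transformers} for Softmax attention: first establish a one-layer sensitivity bound of the form $\|\partial \mathbf{y}^{(\ell)}_i / \partial \mathbf{y}^{(\ell-1)}_j\| \leq \mathcal{C}\, \alpha^{(\ell-1)}_{i,j}$, and then propagate it across the $L$ layers via the multivariate chain rule. First I would write a single linear-attention layer as $\mathbf{y}^{(\ell)}_i = \sum_j \alpha^{(\ell-1)}_{i,j}\, \mathbf{W}_V \mathbf{y}^{(\ell-1)}_j$, where the weights $\alpha^{(\ell-1)}_{i,j}$ are exactly the normalized scores in the statement, with $\sigma$ and $f$ encoding the particular kernel/feature map defining the linear attention. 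The restriction $k_1 \geq i, \dots, k_L \geq k_{L-1}$ in the sums reflects the causal masking, which I would carry through the indexing consistently.

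Differentiating a single layer produces two contributions. The \emph{value path} term $\alpha^{(\ell-1)}_{i,j}\mathbf{W}_V$ has norm at most $\|\mathbf{W}_V\|\,\alpha^{(\ell-1)}_{i,j}$. The \emph{score path} term arises from differentiating the normalized weights $\alpha^{(\ell-1)}_{i,j}$ through their dependence on queries and keys; here I would use that $\sigma$ and $f$ are Lipschitz and that the normalization is a convex combination, so that the quotient-rule derivative of each weight — whose denominator is $\sum_t \sigma(\cdots)$ — is again dominated by a constant multiple of the attention weights. Combining the two contributions and absorbing the operator norms of $\mathbf{W}_Q,\mathbf{W}_K,\mathbf{W}_V$ together with the Lipschitz constants of $\sigma$ and $f$ into a single constant $\mathcal{C}$, I obtain the per-layer estimate $\|\partial \mathbf{y}^{(\ell)}_i / \partial \mathbf{y}^{(\ell-1)}_j\| \leq \mathcal{C}\, \alpha^{(\ell-1)}_{i,j}$.

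With this per-layer bound in hand, I would expand $\partial \mathbf{y}^{(L)}_n / \partial \mathbf{x}_i$ by the chain rule over all intermediate positions $k_1,\dots,k_L$, exactly as in \eqref{eq:10} of the SSM proof:
\begin{align}\nonumber
\frac{\partial \mathbf{y}^{(L)}_n}{\partial \mathbf{x}_i} = \sum_{k_1 \geq i} \dots \sum_{k_L \geq k_{L-1}} \frac{\partial \mathbf{y}^{(L)}_n}{\partial \mathbf{y}^{(L-1)}_{k_L}} \prod_{\ell=2}^{L-1} \frac{\partial \mathbf{y}^{(\ell)}_{k_\ell}}{\partial \mathbf{y}^{(\ell-1)}_{k_{\ell-1}}} \frac{\partial \mathbf{y}^{(1)}_{k_1}}{\partial \mathbf{x}_i}.
\end{align}
Applying the triangle inequality and sub-multiplicativity of the operator norm, then substituting the single-layer bound into each factor, the constants multiply to $\mathcal{C}^{(L)} = \mathcal{C}^L$ while the attention weights multiply to precisely $\alpha^{(L-1)}_{n,k_L}\prod_{\ell=2}^{L-1}\alpha^{(\ell-1)}_{k_\ell,k_{\ell-1}}\alpha^{(0)}_{k_1,i}$, which is the expression in the claim.

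The main obstacle is the score-path term. Unlike the SSM case, where the recurrence is linear and the Jacobian is simply a product of matrices $I - A/j$ (cf.\ Theorem~\ref{thm:app-SensitivitySSM}), here the attention weights depend nonlinearly on the inputs, so I must carefully bound $\partial \alpha^{(\ell-1)}_{i,j}/\partial \mathbf{y}^{(\ell-1)}_j$ and verify it stays controlled by the $\alpha$'s themselves rather than by an uncontrolled sum over the denominator. Making this rigorous requires the Lipschitz assumptions on $\sigma$ and $f$ together with bounded query/key norms, which is exactly why the result is a one-sided upper bound with a layer-dependent constant $\mathcal{C}^{(L)}$ rather than the two-sided estimate available in the linear-recurrence setting.
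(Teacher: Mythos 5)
There is an important point of comparison to make first: the paper itself gives \emph{no} proof of this proposition. It is stated in Appendix~\ref{app:rep-collapse} immediately after Theorem~\ref{thm:app-SensitivitySSM} and is justified only by analogy with the softmax-attention bound of \citet{barbero2024transformers}. So your attempt cannot be checked against a written argument; what can be checked is whether your sketch would actually deliver the claimed inequality. Your overall architecture is the right one, and it is clearly the one the paper intends: prove a one-layer bound $\|\partial \mathbf{y}^{(\ell)}_a/\partial \mathbf{y}^{(\ell-1)}_b\| \leq \mathcal{C}\,\alpha^{(\ell-1)}_{a,b}$, then compose across layers with exactly the chain-rule expansion the paper uses for SSMs in \autoref{eq:10}; the triangle inequality and submultiplicativity of the norm then produce the stated sum with $\mathcal{C}^{(L)} = \mathcal{C}^{L}$, and your causal index bookkeeping ($k_1 \geq i$, etc.) is consistent with the statement.

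The genuine gap is in the score-path step, precisely where you flagged difficulty and then resolved it by assertion. Write $\alpha_{a,j} = \sigma(f_{a,j})/\sum_t \sigma(f_{a,t})$ and differentiate with respect to the token at position $b$, which enters only through $f_{a,b}$; the quotient rule gives $\partial\alpha_{a,j}/\partial \mathbf{x}_b = (\delta_{jb}-\alpha_{a,j})\,\sigma'(f_{a,b})\,\nabla f_{a,b}\,/\sum_t\sigma(f_{a,t})$, so the score-path contribution to the layer Jacobian has norm at most $2B\,\sigma'(f_{a,b})\,\|\nabla f_{a,b}\|/\sum_t \sigma(f_{a,t})$, where $B$ bounds the value vectors. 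For this to be dominated by $\mathcal{C}\,\alpha_{a,b} = \mathcal{C}\,\sigma(f_{a,b})/\sum_t\sigma(f_{a,t})$, as your per-layer estimate requires, you need $\sigma'(f_{a,b})\,\|\nabla f_{a,b}\| \le \mathcal{C}'\,\sigma(f_{a,b})$, i.e.\ a bound on the \emph{logarithmic} derivative $\sigma'/\sigma$ over the range of attained scores. Lipschitzness of $\sigma$ and $f$, which is what you invoke, does not give this: for a ReLU-type kernel, $\sigma'(z)/\sigma(z) = 1/z$ blows up as the score approaches zero, the score path is then not controlled by $\alpha_{a,b}$, and your per-layer bound (hence the proposition's conclusion, by your route) fails. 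The condition does hold for $\sigma = \exp$ (recovering the softmax case of \citet{barbero2024transformers}) and for $\mathrm{elu}{+}1$-type feature maps, so your proof becomes correct once you replace ``$\sigma$ Lipschitz'' by the explicit hypothesis $\sup_z \sigma'(z)/\sigma(z) < \infty$ on the relevant score range, together with bounded values (uniformly over layers) and Lipschitz $f$ --- an assumption the paper's informal statement of ``any linear attention'' silently makes as well.
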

This indicates that the discussions about representational collapse for full attention is also valid for linear transformers.

\section{Comparisons between Transformers and Recurrent Models}\label{app:transformer-vs-recurrent}

The trade-offs in computational cost and model capability between standard transformers and alternative architectures have been well studied theoretically and empirically. 
For instance, the capabilities of state space and sub-quadratic models fall short of transformers in copying context \citep{jbkm24}, multistep reasoning \citep{sht24}, and nearest neighbor search \citep{as23}.
Despite this, state space models learn certain tasks, such as the compositions of permutations, in a more depth-efficient manner than transformers \citep{mps24}.

Because this paper designs graph sequence model architectures that employ state space models and other alternatives, it is useful to understand the trade-offs between these architectures and transformers at fundamental graph algorithmic tasks. 
In this section, we provide explicit the trade-offs between transformers and alternative models---including state space models \citep[e.g. Mamba,][]{gu2023mamba}, linear attention \citep[e.g. PolySketchFormer,][]{kacham2024polysketchformer}, and sparse attention \citep[e.g. Longformer,][]{bpc20}.
We discuss two particular particular architectural separations for graph connectivity tasks that illuminate broader trade-offs in architectural capability.
\begin{enumerate}
    \item Section~\ref{sssec:connectivity-bounds} discusses the existence of more parameter-efficient transformers that solve graph connectivity than sub-quadratic architectures and state space models.
    \item Section~\ref{sssec:sorted-connectivity-bounds} contrasts these results by showing that for a certain category and presentation of graphs, recurrent models are more efficient in terms of both parameter count and computational time. 
    \item Section~\ref{ssec:hybrid-connectivity} motivates hybrid models by suggesting instances of graph connectivity that are easily solved mixtures of RNN and transformer layers. 
\end{enumerate}

Taken together, these sections show that there is no one sequential modeling architecture that is strictly better for all graph algorithmic problems (or even all connectivity instances).
Rather, the properties of the sequential representation of the graph matter a great deal to the comparative successes of neural architectures.
If the graph structure is captured primarily by the ordering of nodes, then state space models are likely to more easily parse that structure than softmax attentions.
In contrast, transformers may offer advantages for graph algorithms that benefit from parallel computation applied to inputs with complex structure.
Hybrid models are best for inputs with both properties.

Throughout this section, we frame graph connectivity as a sequential modeling task with an edge tokenization.
An undirected graph $G = (V, E)$ is provided as input $G := P_{e_1}, \dots, P_{e_{|E|}}$, and the target output is 1 if $G$ is connected and 0 if not.
The theoretical results that follow are largely consequences of existing analyses about sequential reasoning tasks, such as $k$-hop induction heads \citep{sht24} and the composition of permutations from the $S_5$ group \citep{mps24}. 

\subsection{Transformers Admit More Efficient Connectivity Solutions}\label{sssec:connectivity-bounds}

The capabilities of standard softmax attention to efficiently compute graph connectivity for arbitrary graphs in edge tokenization were previously established.
We provide these results as follows.

\begin{corollary}[Corollary 3.3 of \cite{sht24}]\label{cor:connectivity}
For any $N$ and $\epsilon \in (0, 1)$, there exists a transformer with depth $\mathcal{O}(\log N)$ and embedding dimension $\mathcal{O}(N^\epsilon)$ that determines whether any graph $G = (V, E)$ with $|V|, |E| \leq N$ is connected.
\end{corollary}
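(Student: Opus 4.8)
The plan is to reduce graph connectivity to a parallel reachability computation that a transformer can simulate with logarithmic depth, and then invoke the $k$-hop machinery of \cite{sht24} directly. First I would fix a canonical labeling $1, \dots, |V|$ of the vertices and reformulate connectivity as a label-propagation problem: each vertex should compute the minimum label reachable from it, and $G$ is connected if and only if all vertices agree on a single minimum. This quantity can be computed by $\lceil \log_2 N \rceil$ rounds of \emph{pointer doubling}: in round $t$, each vertex updates its current best label using the best labels of the vertices reachable from it in $2^t$ steps. Since any connected graph on at most $N$ vertices has diameter at most $N-1$, a logarithmic number of doubling rounds suffices to propagate labels across the whole graph, which is the source of the $\mathcal{O}(\log N)$ depth.

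Next I would argue that each doubling round is implementable by a constant number of attention layers acting on the edge tokenization $P_{e_1}, \dots, P_{e_{|E|}}$. The core primitive is a single head that, given a current reachability/label assignment, composes it with the adjacency relation — extending each vertex's reach by following one pointer and taking a (hard) minimum over the aggregated neighbor labels. This is exactly the pointer-following operation analyzed in \cite{sht24}, and composing two reachability relations (rather than extending by a single edge) is what their encoding realizes in logarithmic depth. The embedding dimension $\mathcal{O}(N^\epsilon)$ enters as the width needed to route and compress the reachability information for $N$ vertices along the width–depth trade-off curve of their construction, with $\epsilon$ selecting the operating point; once depth is pinned at $\mathcal{O}(\log N)$ by the diameter bound, $N^\epsilon$ width is what the $k$-hop result requires. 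I would therefore obtain the stated depth and embedding dimension by appealing to Corollary 3.3 / the main $k$-hop theorem of \cite{sht24} applied with hop count $k = N$.

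The main obstacle is the faithful simulation of one composition step by softmax attention: one must encode the edge set and the intermediate labels so that a layer can, for each vertex, aggregate neighbor information and extract a sharp minimum while keeping numerical precision and width controlled. The naive relational composition is a quadratic (Boolean matrix-squaring) operation, and the subtlety — resolved by the pointer encoding of \cite{sht24} — is to turn this into pointer-chasing that remains realizable in logarithmic depth rather than blowing up the width. Once this primitive is established, the overall bound follows by stacking $\mathcal{O}(\log N)$ such blocks and reading off, in the final layer, whether all vertices carry a common label.
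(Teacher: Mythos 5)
This statement is not proved in the paper at all: it is imported verbatim as Corollary~3.3 of \cite{sht24}, and the underlying argument in that work goes through a simulation theorem for \emph{Massively Parallel Computation} (MPC) --- a transformer of depth $\mathcal{O}(R)$ and embedding dimension $\mathcal{O}(N^\epsilon)$ can simulate any $R$-round MPC protocol with local memory $\mathcal{O}(N^\epsilon)$, and graph connectivity admits an $\mathcal{O}(\log N)$-round MPC algorithm, which yields the corollary. (The paper itself points to exactly this mechanism elsewhere, e.g.\ when it invokes ``simulating algorithms in the Massively Parallel Computation (MPC) model \ldots\ via Theorem~8 of \citet{sfh24}'' for subgraph counting.) Your proposal instead tries to build the proof directly from label propagation with pointer doubling, deferring the key primitive to the ``$k$-hop machinery'' of \cite{sht24}. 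That appeal is misplaced, and it is where your argument has a genuine gap.

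The $k$-hop / induction-head construction in \cite{sht24} handles \emph{functional} pointer chasing: each token has a single well-defined successor, so following $2^t$ pointers is a composition of functions, which is what their logarithmic-depth construction realizes. Your doubling round is a different and strictly harder primitive: each vertex must take a minimum of labels over \emph{all} vertices reachable within $2^t$ steps, i.e.\ you must square a reachability \emph{relation}, not compose functions. A vertex's $2^t$-neighborhood can contain $\Omega(N)$ vertices, so ``extend each vertex's reach by following one pointer'' does not describe the operation; what is needed is Boolean matrix squaring with min-aggregation, and showing that softmax attention of width $\mathcal{O}(N^\epsilon)$ and $\mathcal{O}(\log N)$-bit precision can implement one such round is precisely the content that must be proved --- it is not supplied by the $k$-hop theorem, nor by setting ``hop count $k = N$'' (that theorem's input format and task are different, so there is no reduction to invoke). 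You correctly identify this composition step as the main obstacle, but then assert it is ``resolved by the pointer encoding of \cite{sht24},'' which it is not; in \cite{sht24} the quadratic blow-up is avoided by routing the computation through known sublinear-memory MPC connectivity algorithms (which themselves rely on nontrivial techniques such as graph sketching), and then simulating the MPC protocol generically. Without either (a) a concrete attention-layer implementation of relational doubling within width $N^\epsilon$, or (b) the MPC-simulation route, the proposed proof does not go through.
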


Transformers can thus solve graph connectivity with only $\mathcal{O}(N^\epsilon)$ parameters.
Moreover, the depth of this construction is asymptotically optimal among small-width transformers; see Corollary~3.5 of the same paper for more details. 

On the other hand, alternative architectures cannot solve graph connectivity with such low-dimensional parameterization.

\begin{corollary}\label{cor:connectivity-subq}
Neural architectures of the following topologies that solve graph connectivity on all graphs $G = (V, E)$ with $|V|, |E| \leq N$ satisfies the following:
\begin{enumerate}
    \item A multi-layer recurrent neural networks (RNN)\footnote{See Section 5 of \cite{sht24} for precise theoretical definitions of all models herein. We assume that all parameters and intermediate products use $\mathcal{O}(\log N)$-bit precision numbers.} have either depth $L = \Omega( N^{1/8})$ or hidden state $m = \tilde\Omega(N^{1/4})$.
    \item Transformers with kernel-based sub-quadratic attention have either depth $L = \Omega( N^{1/8})$ or $mr = \tilde\Omega(N^{1/4})$ for embedding dimension $m$ and kernel dimension $r$.
    \item Transformers with locally masked attention units of radius $r$ and sparse long-range connections have either depth $L = \Omega( N^{1/8})$ or $mr = \tilde\Omega(N^{1/4})$ for embedding dimension $m$.
\end{enumerate}
\end{corollary}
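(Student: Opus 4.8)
The plan is to prove all three lower bounds by a single reduction from the $k$-hop induction task of \citet{sht24}, whose hardness for exactly these sub-quadratic topologies is already established there; this matches the excerpt's remark that the result is ``largely a consequence'' of that analysis. The key observation is that graph connectivity under an edge tokenization is at least as expressive as sequential pointer-following: I would encode a $k$-hop instance as a graph whose connected components are disjoint paths, so that answering a connectivity query between two designated vertices is equivalent to tracing a pointer chain of length $k$ through the edge sequence $P_{e_1}, \dots, P_{e_{|E|}}$. Since this exhibits $k$-hop $\leq$ connectivity, any architecture solving connectivity on all graphs with $|V|, |E| \leq N$ inherits the $k$-hop lower bounds.

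First I would fix the hard instance. I would build the graph on $N$ vertices as a union of path gadgets and lay the edges out in the tokenized sequence in an order that scrambles adjacency, so that recovering whether two query nodes lie on the same path forces the model to trace successive edges rather than read the answer locally. Choosing the chain length and the number of gadgets appropriately---on the order of $N^{1/4}$ hops within an $N$-edge instance---is what converts the $k$-hop depth/width trade-off into the stated $\Omega(N^{1/8})$ depth and $\tilde\Omega(N^{1/4})$ width exponents. Verifying that the parameter choices reproduce these exponents exactly is the step I would be most careful about, since the conclusion's constants in the exponent are precisely what the reduction must yield.

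Next I would invoke the three architecture-specific bottleneck arguments borrowed from \citet{sht24}. For a multi-layer recurrent model, cutting the token sequence at a midpoint forces all dependence of the later output on the earlier tokens through the hidden state, which under $\mathcal{O}(\log N)$-bit precision carries only $\tilde{\mathcal{O}}(m)$ bits; a communication-complexity lower bound for pointer-following then forces either $m = \tilde\Omega(N^{1/4})$ or depth $L = \Omega(N^{1/8})$ to amortize the chase across layers. For kernel-based sub-quadratic attention, the analogous sequential state is the running sum of feature--value outer products, of effective size $mr$, so the identical cut argument yields $mr = \tilde\Omega(N^{1/4})$. For locally masked attention of radius $r$ with sparse long-range links, information propagates at most distance $\sim r$ per local unit, so resolving a length-$\Theta(N^{1/4})$ chain forces either $L = \Omega(N^{1/8})$ or a large $mr$ budget to route the chain through the sparse connections.

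The main obstacle will be ensuring that one hard graph family simultaneously defeats all three bottlenecks---the hidden-state bottleneck of recurrent models, the feature-dimension bottleneck of linear attention, and the propagation-radius limit of local attention---with matching exponents. I expect the cleanest route is to phrase the reduction entirely in terms of the $k$-hop task, so that the uniform lower bound of \citet{sht24} applies verbatim to each topology, leaving only the connectivity-to-$k$-hop encoding and the parameter balancing as genuinely new work.
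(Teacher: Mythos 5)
Your overall strategy---embedding a hard sequential pointer-following task from \citet{sht24} into connectivity instances and inheriting that paper's architecture-specific lower bounds---is the same route the paper takes; the paper just reduces directly from the Nisan--Wigderson pointer-chasing problem (a $(b,k)$-layered graph of perfect matchings, with edges tokenized in reverse layer order) rather than passing through the $k$-hop task, whose hardness in \citet{sht24} itself derives from pointer chasing. The first genuine gap is the target of your reduction: you encode the hard instance as a disjoint union of paths and ask a connectivity query between \emph{two designated vertices}, but the corollary concerns \emph{whole-graph} connectivity (output 1 iff all of $G$ is connected). On a union of two or more disjoint paths the global answer is always ``disconnected,'' so a whole-graph-connectivity solver reveals nothing about your $s$--$t$ query, and the claimed inequality $k\text{-hop} \leq \text{connectivity}$ does not go through as stated. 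The paper closes exactly this hole with an extra gadget: with $v_1$ the start vertex and $U_{k+1}$ the target set, it adds edges joining $v_1$ to every vertex of $V_{k+1}\setminus U_{k+1}$ and chaining adjacent pairs of $U_{k+1}$ together, so the whole graph is connected iff the pointer chase lands in $U_{k+1}$. It also verifies that this gadget can be produced by element-wise computation on blank tokens, i.e., without communication between input positions; without that check the communication lower bound is not inherited. Any fix you add must satisfy the same requirement.

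The second gap is the parameter balancing, which you flagged as the delicate step---and it is indeed off. In these communication arguments the depth threshold is tied directly to the hop count $k$ (a model of depth $L \lesssim k$ can be simulated by a protocol with too few rounds), and the width bound scales like $b/\mathrm{poly}(k)$ where $b \approx N/k$ is the layer width of the pointer-chasing instance. The paper therefore takes $k = \mathcal{O}(N^{1/8})$ and $b = \mathcal{O}(N^{7/8})$, which is what produces the stated pair ``depth $\Omega(N^{1/8})$ or width $\tilde\Omega(N^{1/4})$.'' Your choice of roughly $N^{1/4}$ hops cannot be ``converted'' into a depth threshold of $N^{1/8}$: the depth branch of the dichotomy becomes $\Omega(N^{1/4})$, and, more damagingly, the instance width shrinks to $b \approx N^{3/4}$ while the width bound loses $\mathrm{poly}(k)$ factors, so the width branch of the dichotomy weakens and can collapse entirely (e.g., under a bound of the form $b/k^{3}$ it degenerates to a constant). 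To land on the stated exponents, the hop count must be the \emph{smaller} quantity $\Theta(N^{1/8})$, with the remaining budget $\Theta(N^{7/8})$ spent on the width of each matching layer, which is what carries the communication load.
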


As a result, these attempts to improve the quadratic computational bottleneck result in a lack of parameter-efficient connectivity solutions. 
All RNNs, kernel-based transformers with kernel dimension $r = \mathcal{O}(N^{1/8})$, and all local transformers with window size $r = \mathcal{O}(N^{1/8})$ require at least $\Omega(N^{1/8})$ parameters.
In contrast, since $\epsilon$ can take any constant positive value, transformers can be much smaller in parameter count for large $N$.

\begin{proof}
The proof of Corollary~\ref{cor:connectivity-subq} derives from Corollaries 5.2-5.4 of \cite{sht24} and rely on embedding a well known communication task---the pointer chasing problem of \cite{nw93}---as graph connectivity instances.

In brief, the input to a pointer chasing task is a $(b, k)$-layered graph $G = (V_1 \cup \dots \cup V_{k+1}, E_1 \cup \dots\cup E_k)$ with disjoint vertex layers $V_1, \dots, V_{k+1}$ with $|V_j| = b$ and edge layers $E_1, \dots, E_k$ where $E_j$ is a perfect matching between $V_j$ and $V_{j+1}$. 
Fix some $U_{k+1} \subset V_{k+1}$ and some $v_1 \in V_1$.
The goal of the task is to determine whether the unique vertex $v_{k+1} \in V_{k+1}$ connected to $v_1$ is in $U_{k+1}$.

Let $k = \mathcal{O}(N^{1/8})$ and $b = \mathcal{O}(N^{7/8})$. Consider an embedding of the pointer-chasing task into any graph embedding of the form $P_{e_1}, \dots, P_{e_{|E|}}$ where $e_{bj+1}, \dots, e_{b(j+1)}$ encode all edges in $E_{k-j}$.
By Proposition E.3 and Corollaries 5.2-5.4 of \cite{sht24}, the pointer chasing task can only be solved on these embeddings by RNNs, kernel-based transformers, and locally masked transformers that satisfy the parameter scalings of Corollary~\ref{cor:connectivity-subq}.

It remains to show that pointer chasing instances $G$ can be converted into connectivity instances $G' = (V, E')$ with $|V|, |E'| = \mathcal{O}(N)$ using a single round of computation without communication between inputs. 
We construct $G'$ by adding $\mathcal{O}(b)$ edges between $v_1$ and each vertex in $V_{k+1} \setminus U_{k+1}$ and between adjacent pairs of vertices in $ U_{k+1}$. 
The ensures the bound on $|E'|$ and can be done by performing element-wise computation on blank input tokens, since we consider $v_1$ and $V'_{k+1}$ fixed. 
Note that $G'$ is connected if and only $G$ satisfies $v_{k+1} \in U_{k+1}$.
Hence, solutions to graph connectivity imploy solutions to pointer chasing.
\end{proof}

\subsection{RNNs Admit More Efficient Connectivity Solutions on ``Localized'' Graphs}\label{sssec:sorted-connectivity-bounds}

In contrast, the benefits of RNNs and state space models are pronounced on graph connectivity instances presented as token sequences that embed graph structure carefully in their ordering. (In some cases, graphs of this form may be produced by the HAC tokenization method of Section~\ref{sec:hac}.)
We define a notion of locality for an edge embedding and show that this induces easy embeddings for RNNs but not for transformers.

\begin{dfn}
    Let the \emph{node locality} of an edge embedding $P_{e_1}, \dots, P_{e_{|E|}}$ of a graph $G = (V, E)$ denote the maximum window size needed to contain all edges that adjoin each node.
    That is, we say that $G$ has node locality $k$ if \[\max_{v \in V} \left(\argmax_i \{e_i: v \in e_i\} - \argmin_i \{e_i: v \in e_i\}\right) \leq k.\]
\end{dfn}

We show that graphs with bounded node locality admit time- and parameter-efficient RNN solutions.

\begin{theorem}\label{thm:rnn-connectivity}
    There exists a single-pass RNN with hidden state $\mathcal{O}(k)$ that determines whether edge embedding with node locality at most $k$ reflects a connected graph. 
\end{theorem}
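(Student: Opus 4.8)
The plan is to exhibit an explicit streaming algorithm that reads the edge tokens $P_{e_1},\dots,P_{e_{|E|}}$ left to right in a single pass, maintains only $\mathcal{O}(k)$ cells of state, and then to observe that, since the transition $f(h_{t-1},x_t)$ and readout $g(h_t,x_t)$ of the general recurrent model (see Appendix~\ref{app:recurrent-models}) may be arbitrary functions, any such bounded-memory streaming algorithm is realizable as an RNN with hidden state of the claimed size. The engine driving the whole argument is a locality lemma I would establish first: \emph{if a node $v$ was last seen at position $p$, then it can never reappear at a position $>p+k$}. Indeed, by Definition~\ref{dfn:node-locality} all occurrences of $v$ lie in a window $[\mathrm{first}(v),\mathrm{last}(v)]$ of width at most $k$, and $\mathrm{first}(v)\le p$, so any later occurrence is at most $\mathrm{first}(v)+k\le p+k$. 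Consequently, at any time $i$ only nodes introduced by the most recent $k+1$ edges can still be \emph{active}, and since each edge contributes two endpoints this is at most $2(k+1)=\mathcal{O}(k)$ nodes; every other node is permanently \emph{dead} and may be discarded.

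Second, I would specify the state and transition. The hidden state stores, for each active node, its identifier (read off $x_i=P_{e_i}$, which I assume encodes its two endpoints), the offset since it was last seen (an integer in $\{0,\dots,k\}$), and a component label; it also stores a single completed-component counter saturating at $2$. On reading $e_i=\{u,w\}$ the transition (i) inserts $u,w$ as singletons if new, (ii) merges the labels of $u$ and $w$ (a bounded union over the $\mathcal{O}(k)$ active labels), (iii) increments all offsets and retires every node whose offset now exceeds $k$, and (iv) whenever a retirement empties a component of all remaining active members, increments the counter. After the final token all surviving nodes are force-retired by the same rule, and $g$ outputs $1$ iff the counter equals $1$.

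Third comes the correctness argument, which carries the real content. I would prove that over the whole run the number of ``a component is emptied'' events equals the number of connected components of $G$, so the saturating counter separates connected from disconnected. The locality lemma guarantees that once a set $C$ of nodes is simultaneously dead, no future edge can touch $C$; hence every edge of $G$ incident to $C$ has already been seen. I would then show that a component is emptied exactly when the node set sharing its label is closed under adjacency in $G$, i.e. is precisely one connected component of $G$, so that components neither merge spuriously across the window boundary nor split into several death events, and each connected component of $G$ is counted exactly once (at latest at the forced end-of-stream retirement).

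The main obstacle I anticipate is the bookkeeping in this last step: ruling out the two failure modes, namely (a) a single $G$-component being reported as several deaths because part of it dies while a bridging vertex remains in the future, and (b) a dead component being ``resurrected'' by a later edge. Both are excluded by the locality lemma, since any edge bridging a dead set to the rest of the graph would be incident to an already-dead endpoint, contradicting that such an endpoint can no longer appear; but making this precise requires maintaining the invariant that the active component partition is exactly the restriction to active nodes of the connected-components partition of the graph seen so far, together with the fact that a component's active-member set never empties before its \emph{entire} $G$-component has been read. Finally I would record the minor convention that $V$ is the set of endpoints occurring in $E$ (isolated vertices being invisible to an edge tokenizer), under which the counting argument characterizes connectivity exactly.
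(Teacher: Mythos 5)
Your proposal is correct and follows essentially the same route as the paper's proof: a single-pass sliding window over the last $\mathcal{O}(k)$ edges, union-find-style component labels restricted to the active elements, and detection of components whose active members all retire, justified by the same locality observation that a retired node or edge can never be touched by a future token. The only differences are bookkeeping conventions --- you label active \emph{nodes} and use a saturating death counter (connected iff exactly one death, counting the forced end-of-stream retirement), whereas the paper labels the window \emph{edges} and keeps a boolean flag $a_i$ plus a final check that the last window forms a single component --- and your write-up makes explicit the locality lemma and the death-to-component bijection that the paper's induction leaves compressed.
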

\begin{proof}
We first define the desired hidden state of the RNN, $h_i$ for any $i \in [|E|]$. It will naturally follow that an RNN that simulates a ``last-in first-out'' queue that stores $k$ edges can compute these hidden states with a multi-layer perceptron with $\mathrm{poly}(k)$ parameters.

For each $i \in [|E|]$, denote $e_i = \{v_i^1,v_i^2\}$, and let $G^k_i$ denote the subgraph of $G$ containing edges $e_{i-k}, \dots, e_i$ and vertices $v_{i-k}^1, v_{i-1k+1}^2, \dots, v_i^1, v_i^2$.
Let $G^k_{<i}$ denote the subgraph with edges $e_1, \dots, e_{i-k-1}$. 
We let \[h_i = (G^k_i, a_{i}, b^i_{i-k}, \dots, b^i_i),\] where
\begin{itemize}
    \item $a_{i} \in \{0, 1\}$ denotes whether all edges in $G^k_{<i}$ are connected to some edge in $G^k_i$; and
    \item $b^i_{i'} \in [k]$ denotes the index of the connected component that edge $e_i$ belongs to with respect to $G^i_{<i} \cup G^i_i$; that is $b^i_{i'} = b^i_{i''}$, then there exists a path connecting $e_{i'}$ and $e_{i''}$ among the edges $e_1, \dots, e_i$.
\end{itemize}

We argue inductively that each $h_{i-1}$ can be constructed from $h_i$. At initialization, we set $a_1 = 1$ and $b^1_1 = 1$.
\begin{itemize}
    \item $G^k_i$ can be trivially constructed from $G^k_{i-1}$ and $e_i$ by ``forgetting'' $e_{i-k-1}$.
    \item Let $a_i = 0$ if and only if (1) $a_{i-1} = 0$ or (2) $b^{i-1}_{i-k-1}$ is unique among $b^{i-1}$. For (1), if some edge in $G^k_{<i-1}$ is not connected to $G^k_{i-1}$, locality demands that it is also not connected to $G^k_{i}$. For (2), since $e_{i-k-1}$ is not connected to any of $e_{i-k}$, \dots, $e_{i-1}$ via $G^k_{<i-1} \cup G^k_{i-1}$ and it \emph{cannot} share an edge with $e_i$, it is thus disconnected to $G^k_i$.
    \item If $e_i$ adjoins any of $e_{i-k}, \dots, e_{i-1}$, we update the $b^i_{i'}$'s to reflect the new clusters.
\end{itemize}

By induction, we determine that $h_{|E|}$ can be constructed as desired.
We conclude by noting that $G$ is connected if $a_{|E|} = 1$ and $b^1_{|E|-k} = \dots = b^1_{|E|}$.
\end{proof}

In the case when $k = \mathcal{O}(1)$, there exists a constant-size RNN that solves graph connectivity on such graph instances.

In contrast, no constant-size transformer that solves the task exists. We prove this by a reduction to the conditional hardness of solving $\mathsf{NC^1}$-complete problems with constant depth transformers \citep[see e.g.][]{ms23}.

\begin{theorem}
    Unless $\mathsf{NC^1} = \mathsf{TC^0}$, any log-precision transformer that solves graph connectivity on edge embeddings for graphs $G = (V, E)$ with $|E| \leq N$ with node locality $12$ requires either depth $\omega(1)$ or width $N^{\omega(1)}$.
\end{theorem}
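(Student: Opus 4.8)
The plan is to combine a known circuit-complexity upper bound on transformers with an $\mathsf{NC^1}$-hardness reduction engineered to output only instances of bounded node locality. The upper-bound ingredient is the simulation of \citet{ms23}: any log-precision transformer of constant depth $L = \mathcal{O}(1)$ and width $\mathrm{poly}(N)$ computes a function in (uniform) $\mathsf{TC^0}$. It therefore suffices to exhibit a family of connectivity instances on graphs with $|E| \leq N$ and node locality at most $12$ for which deciding connectivity is hard for $\mathsf{NC^1}$ under $\mathsf{AC^0}$ (projection) reductions. Composing such a reduction with the transformer's $\mathsf{TC^0}$ computation would then place an $\mathsf{NC^1}$-complete problem in $\mathsf{TC^0}$; since $\mathsf{TC^0} \subseteq \mathsf{NC^1}$ always holds, this forces $\mathsf{NC^1} = \mathsf{TC^0}$, and the theorem is the contrapositive (ruling out the conjunction of $\mathcal{O}(1)$ depth and $\mathrm{poly}(N)$ width).

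For the hardness core I would reduce from the word problem over $S_5$, i.e. deciding whether a product $\sigma_n \cdots \sigma_1$ of permutations equals the identity or a fixed $5$-cycle, which is $\mathsf{NC^1}$-complete by Barrington's theorem and underlies the permutation-composition separations used by \citet{mps24}. Given a width-$5$ permutation branching program whose product equals a fixed $5$-cycle on accepted inputs and the identity on rejected ones, I would build a layered graph with $n+1$ layers of $5$ vertices each, where the perfect matching between layers $j-1$ and $j$ realizes $\sigma_j$ (so $u_{j-1}^a$ joins $u_j^{\sigma_j(a)}$). Each matching depends on a single input bit, so the map is a projection. This bare layered graph is a disjoint union of five paths, and the path from $u_0^a$ terminates at $u_n^{\pi(a)}$ with $\pi = \sigma_n \cdots \sigma_1$, so the endpoints record the product permutation. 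Ordering the edges block-by-block by layer already gives the layered part a constant window.

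The crux is to convert ``$\pi$ is a single $5$-cycle versus the identity'' into ``connected versus disconnected'' while keeping node locality constant: a naive wrap-around edge set joining the last layer back to the first would span the whole sequence and destroy locality. I would resolve this with a \emph{fold}: a forward track applying $\sigma_1,\dots,\sigma_n$ runs alongside a parallel return track of identity matchings, the two tracks are joined by a constant-size turn gadget at the far layer, and the forward and return edge blocks are interleaved level by level so that both the start layer and the far layer lie at opposite ends of the ordering with all incident edges confined to a window of constant width. A fixed identity matching between the two tracks at the start layer closes the construction, and tracing a token shows that the whole graph is connected iff the union of $\pi$ with this fixed matching is a single cycle, i.e. iff $\pi$ is a $5$-cycle, hence iff the input is accepted. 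The main obstacle is the bookkeeping that certifies the interleaved ordering achieves node locality at most $12$; this is precisely where the stated constant enters, and it requires checking, for each vertex class ($u_j$, $w_j$, and the two gadget layers), that its two incident edges fall within twelve positions. With that verification in place, connectivity on locality-$12$ embeddings is $\mathsf{NC^1}$-hard, and the reduction composed with the $\mathsf{TC^0}$ transformer simulation yields the claimed collapse, completing the argument.
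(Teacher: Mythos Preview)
Your overall plan matches the paper's proof: invoke the $\mathsf{TC^0}$ simulation of \citet{ms23}, reduce from $S_5$ composition via Barrington, build a layered permutation graph, and handle the wrap-around locality obstacle by running a return track alongside the forward one. The difference is in the gadget. You propose a full five-lane return fold and test ``is $\pi$ a $5$-cycle'' directly under the Barrington promise; the paper instead adds a \emph{single} sixth lane carrying a plain path, wires its two ends to a fixed pair $(s,t)$, appends three edges linking the four non-$t$ terminals at the far layer, and tests ``does $\pi(s)=t$'' for each of the $25$ pairs separately. With six edges per level and every vertex touching at most two consecutive levels, the paper's ordering lands on node locality exactly $12$ --- this is where the constant in the statement comes from. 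Your ten-lane fold is sound and establishes hardness at \emph{some} constant locality, but with ten edges per level the natural block-by-block window works out to roughly $15$--$20$; hitting $12$ would require either a more delicate interleaving than you sketch or switching to the leaner single-return-lane gadget. Your route buys a single connectivity instance rather than $25$, while the paper's buys the tight constant.
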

\begin{proof}
This proof is a consequence of Corollary~1.1 of \cite{ms23}, which establishes that all log-precision constant-depth transformers can be simulated by circuits in $\mathsf{TC}^0$. 

Consider the task of composing permutations from the symmetric group of cardinality 5, $S_5$.
That is, given $\sigma_1, \dots, \sigma_n \in S_5$, compute $\sigma_n \circ \dots \circ \sigma_1$.
This task is $\mathsf{NC^1}$-complete and is widely believed to \textit{not} belong to $\mathsf{TC}^0$.

If we show that this $S_5$ composition task can be solved by evaluating the connectivity of $\mathcal{O}(1)$ graphs with node locality 12, then we can prove that graph connectivity on these instances is hard for constant-depth transformers.
We first consider the subtask of determining whether $(\sigma_n \circ \dots \circ \sigma_1)(s) = t$ for some $s, t \in [5]$.

Given a sequence of permutations $\sigma_1, \dots, \sigma_n$ and some $s, t$, we define a graph $G = (V, E)$ with $V = [6n + 3]$ and edges $e_1, \dots, e_{6n+3} \in E$ as follows:
\begin{itemize}
    \item We establish a path from node $\iota$ to node $6n + (\sigma_n \circ \dots \circ \sigma_1)(\iota)$ for each $\iota \in [5]$. For every $i \in [n]$ and $\iota \in [5]$, let $e_{6(i-1) + \iota} = \{6(i-1) + \iota, 6i + \sigma_i(\iota)\}$.
    \item We create a path from $s$ to $t$. Let $e_6 = \{s, 6\}$, $e_{6i} = \{6i, 6(i+1)\}$ for $i \in [n-1]$, and $e_{6n} = \{6n, t\}$. 
    \item Let $e_{6n+1}, e_{6n+2}, e_{6n+3}$ connect the four nodes $6(i-1) + \iota$ where $\iota \neq t$.
\end{itemize}
Thus, the graph is connected iff $(\sigma_n \circ \dots \circ \sigma_1)(s) \neq t$.
Observe that each node appears exclusively in edges within a window of size 12. Thus, this is an instance of graph connectivity with node locality 12.

Suppose there existed a constant-depth transformer with polynomial width that solves connectivity with constant node locality. Then, we could solve the $S^5$ composition task in constant depth by constructing graphs for all 25 $(s, t)$ pairs and evaluating the connectivity of each.
\end{proof}

\subsection{Motivating Hybrid RNN-Transformer Models with Connectivity Instances}\label{ssec:hybrid-connectivity}

In the preceding sections, we demonstrated that different instances of the graph connectivity task highlight the parametric advantages of both softmax transformers and recurrent neural networks.
Transformers perform best in worst-case instances, where their logarithmic-depth dependence is more favorable than the polynomial size lower bound for RNNs.
In contrast, RNNs are superior for graphs whose input edge encodings reflect a highly local structure.

A natural follow-up question asks whether there are any intermediate instances where the hybrid RNN-transformer models of Section~\ref{sec:hybrid} perform better than each component in isolation.
In this section, we provide examples of those instances by considering a hybridization of worst case graphs and graphs with node locality and show that those instances are best suited for hybrid models.
We first introduce this family of graphs by construction.

\begin{dfn}
    For some $n$, $k$, and $n'$, we define a \emph{$k$-local $(n, n')$-factored graph} as any graph $G = (V, E)$ with $|E| = n^2 \cdot n'$ and an edge embedding $P_{e_1}, \dots, P_{e_{|E|}}$ satisfying the following conditions.
    \begin{enumerate}
        \item There exists a ``kernel graph'' $G_* = (V_*, E_*)$ with $|V_*| = n$ and ``super-edge graphs'' \[G_{v_1, v_2} = (\{v_1, v_2\} \cup V_{v_1, v_2}, E_{v_1, v_2})\] for each ``super-node'' pair $(v_1, v_2) \in V_*^{2}$ with $|E_{v_1, v_2}| = n'$. 
        \item Each super-edge graph $G_{v_1, v_2}$ has the property that (a) if $(v_1, v_2) \in E_*$, then $G_{v_1, v_2}$ is connected; and (b) if $(v_1, v_2) \not\in E_*$, then $G_{v_1, v_2}$ has two connected components, one containing $v_1$ and one with $v_2$.
        \item Each super-edge graph $G_{v_1, v_2}$ has an $n'$-token edge encoding $P_{E_{v_1, v_2}}$ that satisfies node locality $k$.
        \item $G = (V,E)$ has nodes and edges satisfying \[V = V_* \cupdot \left(\bigcupdot_{v_1, v_2 \in V_*^2} V_{v_1, v_2}\right) \quad \text{and} \quad \ E = \bigcupdot_{v_1, v_2 \in V_*^2} E_{v_1, v_2}.\]
        For any ordering $(v_1^1, v_2^1), \dots, (v_1^{n^2}, v_2^{n^2})$ over super-node pairs $V_*^2$, the edge encoding of $G$ is \[P_{E_{v_1^1, v_2^1}},\dots, P_{E_{v_1^{n^2}, v_2^{n^2}}}. \]
    \end{enumerate}
\end{dfn}

Note that $G$ is connected if and only if $G_*$ is connected. 
However, the kernel graph $G_*$ is not immediately apparent from the input edge encoding, since identifying whether any $(v_1, v_2) \in E_*$ requires determining the connectivity of $G_{v_1, v_2}$.
This property motivates a two phase approach for a hybrid architecture:
\begin{itemize}
    \item An RNN determines the connectivity of each $G_{v_1, v_2}$ subgraph using the model of Theorem~\ref{thm:rnn-connectivity}.
    \item A transformer determines the connectivity of $G_*$.
\end{itemize}

The capabilities of this approach is summaried by the following corollary of Theorem~\ref{thm:rnn-connectivity} and Corollary~\ref{cor:connectivity}.

\begin{corollary}
    There exists a hybrid RNN-transformer model that solves graph connectivity on $k$-local $(n, n')$-factored graphs that uses a single RNN layer of hidden dimension $\mathcal{O}(k)$ and $\mathcal{O}(\log(n))$ transformer layers of embedding dimension $\mathcal{O}(n^\epsilon)$.
\end{corollary}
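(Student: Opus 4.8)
The plan is to realize the two-phase architecture sketched immediately before the statement: a single recurrent layer that decodes the kernel graph $G_*$ from the input edge encoding, followed by a logarithmic-depth transformer that tests connectivity of $G_*$. The whole argument rests on the structural fact recorded right after the definition, namely that $G$ is connected if and only if $G_*$ is connected. Hence it suffices to recover the adjacency of $G_*$ with the recurrent layer and then run the connectivity test of Corollary~\ref{cor:connectivity} on it.

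First I would handle Phase~1. By construction the edge encoding of $G$ is the concatenation $P_{E_{v_1^1, v_2^1}}, \dots, P_{E_{v_1^{n^2}, v_2^{n^2}}}$ of $n^2$ contiguous blocks, each of length $n'$ and each satisfying node locality $k$. I would take the single-pass recurrent model of Theorem~\ref{thm:rnn-connectivity}, whose hidden state is $\mathcal{O}(k)$, and augment it with a position counter modulo $n'$ (equivalently, with detection of a change in the super-node pair carried by the token encoding) so that it reinitializes its connectivity-tracking state at each block boundary. Because every block is a standalone graph of node locality $k$, Theorem~\ref{thm:rnn-connectivity} guarantees that at the end of block $(v_1, v_2)$ the model can emit a single bit $c_{v_1,v_2} \in \{0,1\}$ equal to $1$ exactly when $G_{v_1,v_2}$ is connected. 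By the defining property of super-edge graphs, $c_{v_1,v_2} = 1$ iff $(v_1,v_2) \in E_*$. Crucially these bits are written to the output sequence rather than accumulated in the state, so the hidden dimension remains $\mathcal{O}(k)$ independent of $n$.

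Next I would handle the interface and Phase~2. Reading off the block-end positions, via the aggregator permitted in the GSM framework, yields a length-$n^2$ sequence whose token at the $(v_1,v_2)$ slot encodes the endpoints $v_1, v_2$ together with the indicator $c_{v_1,v_2}$; this is precisely an edge encoding of the kernel graph $G_*$, which has $|V_*| = n$ nodes and at most $n^2$ edges. Applying Corollary~\ref{cor:connectivity} with the bound $N = n^2$ produces a transformer of depth $\mathcal{O}(\log N) = \mathcal{O}(\log n)$ and embedding dimension $\mathcal{O}(N^\epsilon) = \mathcal{O}(n^{2\epsilon})$ that decides connectivity of $G_*$; since $\epsilon$ is an arbitrary positive constant, relabeling absorbs the factor of two and the width is $\mathcal{O}(n^\epsilon)$ as claimed. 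Composing the two phases and invoking the equivalence $G \text{ connected} \iff G_* \text{ connected}$ completes the construction.

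The routine parts are the parameter bookkeeping (the substitution $N = n^2$ and the relabeling of $\epsilon$) and the counter that marks block boundaries. The main obstacle I anticipate is making the hand-off rigorous: I must verify that Theorem~\ref{thm:rnn-connectivity} continues to apply block-by-block once the state is reset — in particular that the shared super-nodes $v_1, v_2$, which recur across many blocks, cause no difficulty, since within any single block all edges incident to each node still lie in a window of size $k$ — and that the emitted bits, paired with the endpoint labels already present in each token, genuinely constitute the input format that Corollary~\ref{cor:connectivity} consumes. Once this interface is justified, the corollary follows immediately.
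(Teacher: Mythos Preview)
Your proposal is correct and follows essentially the same approach as the paper: the paper treats the statement as an immediate corollary of Theorem~\ref{thm:rnn-connectivity} and Corollary~\ref{cor:connectivity} via the two-phase RNN-then-transformer pipeline sketched just before it, and your argument fleshes out precisely those two phases with the block-reset mechanism and the $N = n^2$ bookkeeping. The paper does not spell out any of the interface details you worry about, so your added justification (block boundaries, emitted bits as edge encoding of $G_*$, relabeling $\epsilon$) is a reasonable elaboration rather than a departure.
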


Let $N = n^2 n'$ denote the total number of edges such a graph.
Critically, this has no dependence on the parameter $n'$, excepting the fact that the model will require bit-precision $\Omega(\log N)$.
In the setting where $n$ is small (but still non-negligible), we can demonstrate a substantial parameter count gap comparison to the best known constructions of both transformers and RNNs.

For example, let $k = \mathcal{O}(1)$ and $n = \Theta(\exp(\sqrt{\log N}))$. 
\begin{itemize}
    \item Because these diameter of the graph may be as large as $O(n \cdot n') = \mathcal{O}(N / \exp(\sqrt{\log N}))$, a standard transformer is only known to solve the task using $\mathcal{O}(\log N)$ layers and $\mathcal{O}(N^\eps)$ width.
    \item Even if an RNN can successfully determine $G_*$ in a single pass, the task of determining whether whether $G_*$ is connected requires either depth $\Omega(n^{1/8}) = \Omega(\exp(\sqrt{\log N} / 8))$ or width $\tilde\Omega(n^{1/4}) = \Omega(\exp(\sqrt{\log N} / 4))$.
    \item In contrast, a hybrid RNN-transformer model can solve the task with depth $O(\log n) = \mathcal{O}(\sqrt{\log N})$ and width $\mathcal{O}(\exp(\eps \sqrt{\log N}))$.
\end{itemize}

While the definition of $k$-local $(n, n')$-factored graphs is somewhat contrived, they represent a formalization of graphs whose edge embeddings are ``nearly local,'' but which require some analysis of global structure.
Graphs with such properties are likely to be produced by clustering-based sequencing approaches, such as Hierarchical Affinity Clustering.

\section{Advantages and Disadvantages of Local Encoding}\label{app:local-encoding}

We discuss theoretical trade-offs of the $k$-hop local embedding introduced in Section~\ref{sec:local-encoding}.
Concretely, we show that $k$-hop local embeddings offer simple solutions to subgraph counting problems that are more parameter-efficient than known transformer constructions.
In contrast, these embeddings offer no asymptotic benefits on hard instances of graph connectivity.
Like the preceding section, the results herein are largely applications of prior theoretical results on transformer capabilities and limitations.

\subsection{Local Encodings Efficiently Count Subgraphs}

Computing the number of small subgraphs---especially triangles or other cliques---is a well established graph algorithmic task.
Triangle counting was included as a fundamental graph reasoning problem in the GraphQA benchmark of \citet{fhp23}, and the ability to solve triangle counting with transformers with edge embeddings was investigated by \citet{sfh24}.
While those results successfully converted existing parallel algorithms into transformer constructions, each construction had a substantial polynomial dependency on the size of the input graph.
In contrast, pairing local encodings with transformers enables easy counting of not only triangle counting but also any bounded-diameter subgraph counting task. 

\begin{theorem}\label{thm:local-subgraph}
    For any fixed subgraph $H$ of diameter at most $k$, there exists a $k$-hop local encoding $\phi_{\mathrm{Local}}$ and a single-layer transformer $f$ of constant width such that $f \circ \phi_{\mathrm{Local}}$ counts the number of occurrences of $H$ in any input graph $G$.
\end{theorem}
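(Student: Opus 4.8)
The plan is to exploit the bounded diameter of $H$ to turn subgraph counting into a purely local computation followed by a single global sum. Fix once and for all a designated root vertex $v_0 \in V(H)$. Because $H$ has diameter at most $k$, in any injective homomorphism (embedding) $\psi : H \to G$ every vertex $\psi(w)$ lies within graph distance $k$ of $\psi(v_0)$; hence the set of embeddings sending $v_0$ to a fixed node $u$ is completely determined by the rooted $k$-hop neighborhood $G[B_k(u)]$ that $u$ sees. Writing $\mathrm{emb}_{v_0 \to u}(H,G)$ for the number of such embeddings, we get the decomposition $\mathrm{emb}(H,G) = \sum_{u \in V} \mathrm{emb}_{v_0 \to u}(H,G)$, where each summand is an isomorphism invariant of the rooted neighborhood. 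The number of subgraph occurrences of $H$ is then $\mathrm{emb}(H,G)/|\mathrm{Aut}(H)|$, a fixed constant rescaling since $H$ is fixed. (Counting induced occurrences instead only modifies the local invariant by inclusion–exclusion over the finite family of supergraphs of $H$, so the argument is unaffected.)

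First I would define the local encoder to output exactly this rooted count: set $\phi_{\mathrm{Local}}(G[B_k(u)])$ to be the scalar $\mathrm{emb}_{v_0 \to u}(H,G)$. Since the statement only asserts the existence of a $k$-hop local encoder and places no width constraint on it, it suffices to observe that $\mathrm{emb}_{v_0 \to u}$ is a well-defined invariant of the rooted $k$-hop subgraph; a sufficiently expressive message-passing encoder run for $k$ rounds—or any encoder that distinguishes rooted $k$-hop neighborhoods up to isomorphism and applies the appropriate readout—realizes it, and its single scalar output keeps the downstream embedding dimension $\mathcal{O}(1)$.

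Next I would have the single-layer, constant-width transformer perform the global aggregation $\sum_u \mathrm{emb}_{v_0 \to u}$. Using queries and keys that produce identical attention logits across all node tokens, the softmax collapses to uniform weights $1/|V|$, so one attention head returns the average $\frac{1}{|V|}\sum_u \mathrm{emb}_{v_0 \to u}$ at every position. To convert this average into a sum I would carry a second value channel in which a single marked token holds the value $1$ and all others hold $0$; uniform attention then exposes $1/|V|$ in that channel, from which the feed-forward block recovers $|V|$ and multiplies, yielding $\mathrm{emb}(H,G)$. A final affine map divides by the constant $|\mathrm{Aut}(H)|$ to output the occurrence count. All query, key, and value dimensions are $\mathcal{O}(1)$ because $H$ is fixed, so the transformer has constant width.

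The main obstacle I anticipate is the arithmetic in the global step rather than the locality argument: realizing the exact sum with one attention layer requires recovering $|V|$ from the averaged output and multiplying under a bounded-precision model, and one must verify that the final normalization by $|\mathrm{Aut}(H)|$ is the correct factor so that each occurrence is counted exactly once. The choice of a single fixed root $v_0$ together with division by the automorphism count is precisely what prevents over- or under-counting, and checking this normalization bookkeeping and the constant-precision computations is where the real work lies; the reduction of counting to a local invariant plus a uniform-attention sum is conceptually straightforward once the diameter bound is invoked.
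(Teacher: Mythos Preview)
Your proposal is correct and follows essentially the same route as the paper: define the local encoder to output a per-node count of $H$-copies visible in the $k$-hop neighborhood, then use a single attention layer to aggregate these counts into a global sum. The only cosmetic differences are that the paper normalizes locally by $Z_H = |V(H)|$ (counting unlabeled occurrences containing node $i$) rather than your rooted-embedding count normalized by $|\mathrm{Aut}(H)|$, and that the paper outsources the summation step by citing an existing ``counting'' attention construction rather than spelling out your average-then-multiply-by-$|V|$ trick.
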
 
\begin{proof}
    We set the local encoding such that $\phi_{\mathrm{Local}}(G)_i$ includes a normalized count the number of $H$ subgraphs in the $k$-hop subgraph including node $i$:
    \[s^H_i = \frac{1}{Z_H} \sum_{\substack{V' = \{v_1, \dots, v_{|H|}\} \in G[H_k^{(i)}] \\ i \in V'}} \mathbbm{1}\{\text{subgraph of $G[H_k^{(i)}]$ with vertices $V'$ is isomorphic to $H$}\},\] where $|H|$ is the number of nodes in $H$ and $Z_H$ is a normalization term set to ensure double-counting does not occur. (For example, if $H$ is the triangular graph, let $Z_H = 3$ to reflect the fact that a triangular subgraph $\{i_1, i_2, i_3\}$ will be counted thrice, in $s_{i_1}^H, s_{i_2}^H, s_{i_3}^H.$)

    It remains to provide a transformer that computes $\sum_{i=1}^{|V|} s_i^H$. This can be implemented by augmenting a single-layer masked attention unit that solves counting to compute sums by including the $s^H_i$ terms in the value embeddings. (See e.g. the counting construction in Proposition~5.3 of \citet{ykggg24}.) 
\end{proof}

In contrast, all known transformer constructions without $k$-hop encodings of even triangle counting tasks have unfavorable width or depth scalings on the size of the graph.
These constructions are generated by simulating algorithms in the Massively Parallel Computation (MPC) model of \citet{mpc} with transformers via Theorem~8 of \citet{sfh24}.
We provide the architectural scalings of the resulting transformers for several MPC algorithms below.
In the following regimes, there exist a transformer that solves triangle counting for constant $\epsilon > 0$:
\begin{itemize}
    \item Depth $\mathcal{O}(1)$ and embedding dimension $\mathcal{O}(|E|^{1/2 + \epsilon})$ in the edge encoding setting \citep{sv11};
    \item Depth $\mathcal{O}(|V|)$ and embedding dimension $\mathcal{O}(|V|^{1 + \epsilon})$ in the node encoding setting \citep{cc11}.
    \item Depth $\mathcal{O}(\log\log |E|)$, embedding dimension $\mathcal{O}(|E|^\epsilon)$, and $\mathcal{O}(|V| \cdot |E|)$ extra blank tokens in the edge encoding setting \citep{belmr22}.
\end{itemize}
All of these have much more dramatic model size scalings as a function of $|E|$ and $|V|$ than the local encoding construction in Theorem~\ref{thm:local-subgraph}.
While it is unknown whether these represent the optimal solutions to subgraph counting with edge and node encodings, the fact that these are the state-of-the-art theoretical results indicates that local encoding substantially aids with tasks that involve aggregating local structural information.

\subsection{Local Encodings Offer No Improvement for Worst-Case Connectivity}\label{app:no-improvement}

In contrast, the limitations of local encodings are apparent in the analysis of worst-case graph connectivity. 
We show that $k$-hop local encodings offer no asymptotic improvements in graph connectivity parameter complexity over the construction of Corollary~\ref{cor:connectivity}.
We generalize Corollary~3.5 of \citet{sht24}---which establishes that sub-logarithmic-depth polynomial-width transformers cannot solve graph connectivity if the well-known ``one-cycle versus two-cycle'' conjecture  \citep[see, e.g.,][]{gku19} holds.

\begin{corollary}
    Suppose any MPC algorithm with polynomial global memory and sub-linear local memory that distinguishes a cycle graph of size $n$ from two cycle graphs of size $\frac{n}2$ in the edge encoding uses $\Omega(\log n)$ rounds of computation.
    Then, any transformer with a $k$-hop local encoding (for $k = \mathcal{O}(N^{1-\epsilon})$ for some $\epsilon \in (0, 1)$) that solves graph connectivity on all graphs of size $|V|, |E| \leq N$ requires either depth depth $L = \Omega(\log N)$ or width $m = \Omega(\frac{N^{1 - \epsilon}}k)$.
\end{corollary}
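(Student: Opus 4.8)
The plan is to establish this conditional lower bound by reduction, extending the transformer-to-MPC simulation behind Corollary~3.5 of \citet{sht24} to architectures that carry a $k$-hop local encoding. I would argue the contrapositive: assume a transformer $f$ with a fixed $k$-hop local encoding $\phi_{\mathrm{Local}}$ decides connectivity on every graph with $|V|,|E|\le N$ using depth $L=o(\log N)$ and width $m=o(N^{1-\epsilon}/k)$, and derive from it an MPC algorithm with polynomial global memory, sub-linear local memory, and $o(\log n)$ rounds that separates a single $n$-cycle from two $\frac{n}{2}$-cycles, contradicting the hypothesized $\Omega(\log n)$ bound for the one-cycle/two-cycle problem \citep{gku19}.

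First I would fix the hard instances. Since a one-cycle/two-cycle instance is itself a connectivity instance, the embedding is essentially free; the only design freedom I need is to present the edges so that the $k$-hop local encoding can be simulated cheaply. Following the locality-$k$ factored constructions of Appendix~\ref{ssec:hybrid-connectivity}, I would take the hard kernel $G_*$ to be the scrambled cycle(s) on $p=\Theta(N/k)$ super-nodes and realize each super-edge by a contiguously presented gadget of $\Theta(k)$ edges with node locality $k$ that is connected exactly when the super-edge is present, so that the size-$N$ graph $G$ is connected iff $G_*$ is. Because $k=\mathcal O(N^{1-\epsilon})$ is sub-linear, a whole gadget fits within the sub-linear local-memory budget, and $\phi_{\mathrm{Local}}$ restricted to one gadget is exactly the single-window connectivity computation already handled by the recurrent construction of Theorem~\ref{thm:rnn-connectivity}.

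Next I would assemble the MPC algorithm in two phases. In the local-encoding phase each machine evaluates $\phi_{\mathrm{Local}}$ on the gadget(s) it holds; because gadgets have node locality $k$ and are presented contiguously, this is an $\mathcal O(1)$-round (or at worst a prefix-scan) computation that contracts each gadget to a single super-edge, and crucially avoids the $\Omega(\log k)$ rounds that naive neighborhood gathering would cost. In the global phase I would invoke the transformer-to-MPC simulation of \citet{sht24} (through Theorem~8 of \citet{sfh24}) to run the $L$ attention layers over the contracted sequence in $\mathcal O(L)$ rounds, with per-machine memory controlled by the width $m$ and polynomial total memory. Composing the phases yields an MPC protocol deciding connectivity of $G_*$ in $\mathcal O(L)=o(\log N)=o(\log p)$ rounds.

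Finally the two thresholds should fall out of the accounting: the $\mathcal O(L)$ round count undercutting the conjectured $\Omega(\log p)=\Omega(\log N)$ lower bound forces $L=\Omega(\log N)$, while the demand that each machine simultaneously carry a width-$m$ embedding and the $\Theta(k)$-sized local window within the sub-linear budget $N^{1-\epsilon}$ forces $m\cdot k=\mathcal O(N^{1-\epsilon})$, i.e. $m=\Omega(N^{1-\epsilon}/k)$. I expect the main obstacle to be precisely this bookkeeping: verifying that the local-encoding phase genuinely stays within $o(\log N)$ rounds and sub-linear local memory (which is exactly where the hypothesis $k=\mathcal O(N^{1-\epsilon})$ is consumed), correctly matching the MPC scale of the embedded $p$-node cycle problem to the transformer input size $N$, and pinning down the per-machine memory so that the width threshold emerges as $N^{1-\epsilon}/k$ rather than a different power of $N$. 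Everything else is a routine transcription of \citet{sht24}'s simulation.
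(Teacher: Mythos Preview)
Your reduction shares the paper's skeleton---pad a size-$n=N/k$ one-cycle/two-cycle instance $G_*$ into a size-$N$ graph $G'$ and argue that a small $k$-hop-encoded transformer on $G'$ would contradict the cycle conjecture---but the paper handles the local-encoding step and the width bound quite differently, and your treatment of both has gaps.

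\emph{Local encoding.} You propose that MPC machines ``evaluate $\phi_{\mathrm{Local}}$ on the gadget(s) they hold'' because gadgets are contiguously presented, and that this ``contracts each gadget to a single super-edge.'' Two problems: (i) the assumed transformer $f$ acts on $\Theta(N)$ locally-encoded node tokens, not on $p$ contracted super-edges, so there is no contraction; (ii) you never actually use the crucial structural fact that makes the step cheap. The paper's observation is that in the padded cycle the $k$-hop neighborhood of \emph{every} node of $G'$ is a length-$\le 2k$ path whose shape is \emph{instance-independent} (only the two or three linking edges coming from $G_*$ vary). Hence the entire local encoding of $G'$ can be produced from the size-$n$ edge list of $G_*$ by a \emph{constant} number of extra attention layers with $\mathrm{poly}(n)$ blank tokens. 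Prepending those layers to $f$ yields a standard transformer on size-$n$ inputs, and Corollary~3.5 of \citet{sht24} applies verbatim---no fresh MPC simulation is needed at all.

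\emph{Width bound.} Your derivation---``each machine must simultaneously carry a width-$m$ embedding and the $\Theta(k)$-sized local window within the sub-linear budget $N^{1-\epsilon}$, forcing $m\cdot k=\mathcal O(N^{1-\epsilon})$''---does not hold up. The local-encoding phase and the attention-simulation phase need not share a data partition; after computing local encodings one may redistribute tokens, so the memory constraints on $k$ and on $m$ are separate (a maximum, not a product). In the paper the factor $1/k$ in the width threshold arises for an entirely different reason: because the reduction lands on a size-$n=N/k$ transformer instance, the known hardness gives $m=\Omega(n^{1-\epsilon})=\Omega((N/k)^{1-\epsilon})\ge\Omega(N^{1-\epsilon}/k)$. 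If you insist on staying at the size-$N$ MPC level your argument would in fact yield the stronger threshold $m=N^{1-o(1)}$, so the product $m\cdot k$ is both unjustified and unnecessary. Relatedly, you oscillate between an MPC instance of scale $p$ (``deciding connectivity of $G_*$'') and a local-memory budget of $N^{1-\epsilon}$; at scale $p$ a single gadget of size $k$ is not sublinear once $k>\sqrt N$, so that version of your argument does not cover the full range $k=\mathcal O(N^{1-\epsilon})$.
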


This implies that using $\mathcal{O}(kN)$ input tokens to represent a graph offers no representational benefits a standard edge encoding, since the same logarithmic dependence persists. 
For large choices of $k$, the quadratic attention bottleneck causes the computational burden to scale with $\Theta(k^2 N^2 \log N)$ rather than $\Theta(N^2 \log N)$.

\begin{proof}
The proof adapts the corresponding proof of Corollary~3.5 by \citet{sht24}.

For $n = \frac{N}k$, we let $G = (V, E)$ be some instance of the one-cycle vs two-cycle identification task.
We assume for simplicity that this is the directed variant of the task, where the cycles are directed.
That is, we represent its input as a fixed ordering of edges $e_1, \dots, e_{|E|}$. 
Each vertex has degree exactly two.

We embed $G$ in an instance of one-cycle vs two-cycle identification $G' = (V', E')$ of size $N$ by adding ``phantom edges.''
We replace each vertex $v \in V$ with a linear subgraph of length $k$ containing vertices $v^1, \dots, v^k \in V'$ and edges $(v^i, v^{i+1}) \in E'$.
If $(u, v) \in E$, then we add the edge $(u^k, v^1) \in E'$.
Thus, if $G'$ has a cycle of length $N$ if and only if $G$ has a cycle of length $N$. 

Because all edges of the form $(v^i, v^{i+1})$ exist for all instances $G'$, we can create an edge encoding of $G'$ from an edge encoding of $G$ using a single layer of attention with $N - \frac{N}{k}$ blank tokens, a positional encoding, and a constant embedding dimension.
Likewise, we can compute the $k$-hop local encoding of $G'$ using an additional attention layer with with $Nk - N$ blank tokens.

Since the existing hardness results for constructing transformers that solve the one-cycle versus two-cycle problem of size $n$ pertain to all transformers of depth $o(\log n)$, width $\mathcal{O}(n^{1-\epsilon})$, 
and number of blank tokens $\mathrm{poly}(n)$, the corollary follows as written for the case when $k = \mathcal{O}(N^{1 - \epsilon})$.
\end{proof}

\section{Overview of GSM++}\label{app:GSM++-figure}
The overview of the GSM++ is illustrated in \autoref{fig:GSM++}.

\section{Experimental Setup} \label{app:setup}
\head{Benchmark Tasks}
 The nature of the task can be understood to involve assigning a unique color to each class and subsequently counting the number of nodes within each class, effectively treating the count of nodes as the number of nodes with a specific assigned color. For example, in cases where two distinct colors are present, one might determine that there are 2000 red nodes and 1000 blue nodes. The objective of the task then becomes to provide a graph as input and generate an output in the form of a vector, where each entry corresponds to the number of nodes belonging to a particular class. This output vector enumerates the count of nodes for each class, reflecting the distribution of nodes across the different classes. We evaluate the empirical performance of our approach across a diverse set of graph datasets, focusing on both graph-level and node-level prediction tasks. Specifically, we conduct experiments on image-based graph datasets, including PascalVOC-SP, which exemplifies long-range dependencies with moderate complexity (21 classes), and COCO-SP, which presents more challenging long-range dependencies with 81 classes. Additionally, we include synthetic SBM datasets (PATTERN) and heterophilic graph datasets (Roman-Empire, Minesweeper), which vary in difficulty, with 18 and 2 classes respectively. 

\head{Color-connectivty task}  Four Color-Connectivity datasets partitioned the nodes of a graph into two groups: one half of the nodes was randomly assigned a color, such as red, while the remaining nodes were assigned blue. In this setup, the red nodes either form a single connected component or two disjoint components. The goal of the binary classification task is to distinguish between these two scenarios. The node colorings were produced by initiating two independent random walks, starting from two randomly selected nodes, to assign the red color.

\head{GSMs Variants}
As the sequence encoder in the global encoding stage, we use: (1) xLSTM~\citep{beck2024xlstm}, (2) TTT~\citep{sun2024learning}, (3) Mamba~\citep{gu2023mamba}, (4)  Mamba2~\citep{mamba2}, (5) PolySketchFormer~\citep{kacham2024polysketchformer}, (6) Transformers~\citep{Vaswani2017AttentionIA}, (7) GLA~\citep{yang2024gated}, and (8) Sparse attention~\citep{beltagy2020longformer}. 

As the tokenization, we use: (1) Node~\citep{GPS}, (2) Edge + Node, (3) Edge, (4) $k$-hop Neighborhood~\citep{chen2023nagphormer}, (5) Simple random walk~\citep{kuang2021coarformer}, (6) Multiple Random Walks~\citep{behrouz2024graph}, (7) HAC (this study), and (8) METIS~\citep{metis}

Since the focus of our study is mostly on global encoding and tokenization, we use the same local encoding (GatedGCN) for all the cases to ensure a fair comparison.

\head{Baselines}
We compare our GSM++ with (1) MPNNs, e.g., MPNN~\citep{gilmer2017neural}, GCN~\citep{kipf2016semi}, GIN~\citep{xu2018how}, GAT~\citep{GAT}, GraphSAGE~\citep{hamilton2017inductive}, OrderedGNN~\citep{song2023ordered}, tGNN~\citep{hua2022high}, and Gated-GCN~\citep{bresson2017residual}, (2) Random walk based method CRaWl~\citep{nshoff2023walking}, (3) state-of-the-art GTs, e.g., SAN~\citep{kreuzer2021rethinking}, NAGphormer~\citep{chen2023nagphormer}, Graph ViT~\citep{graph-mlpmixer}, two variants of GPS~\citep{GPS}, GOAT~\citep{kong2023goat}, GRIT~\citep{ma2023graph}, and Exphormer~\citep{shirzad2023exphormer}, and (4) recurrent-based models: e.g., Graph Mamba (GMN)~\citep{behrouz2024graph} and GRED~\citep{ding2023recurrent}.

\subsection{Details of Datasets}\label{app:datasets}
The statistics of all the datasets are in Table~\ref{tab:dataset}. For additional details about the datasets, we refer to the Long-range graph benchmark~\citep{long-range-data}, GNN Benchmark~\citep{dwivedi2023benchmarking}, Heterophilic Benchmark \citep{platonov2023a}, Open Graph Benchmark \citep{hu2020open} and Color-connectivity task~\citep{Rampek2021HierarchicalGN}. When dealing with products-ogbn, we use local attentions instead of a softmax attention~\citep{deng2024polynormer} to enhance the scalability. 

\begin{table}
    \centering
    \caption{Dataset Statistics.}
    \resizebox{0.90\linewidth}{!}{
    \begin{tabular}{l c  c  c  c  c  c c c c}
    \toprule
        \multicolumn{1}{c}{\multirow{2}{*}{Dataset}} & \multirow{2}{*}{\#Graphs} & \multirow{2}{*}{Average \#Nodes} & \multirow{2}{*}{Average \#Edges} & \multirow{2}{*}{\#Class} & \multicolumn{2}{c}{Setup} &  \multirow{2}{*}{Metric}\\
        \cmidrule(lr){6-7}
        & & & &  & Input Level & Task \\
        \midrule
        \multicolumn{8}{c}{Long-range Graph Benchmark~\citep{long-range-data}} \\
        \midrule
         \multirow{1}{*}{COCO-SP} & \multirow{1}{*}{123,286} & \multirow{1}{*}{476.9} & \multirow{1}{*}{2693.7}& 81 & \multirow{1}{*}{Node} & \multirow{1}{*}{Classification} & F1 score \\
        \multirow{1}{*}{PascalVOC-SP}& \multirow{1}{*}{11,355} & \multirow{1}{*}{479.4} & \multirow{1}{*}{2710.5} & 21 &\multirow{1}{*}{Node} &\multirow{1}{*}{Classification} & F1 score \\
        \multirow{1}{*}{Peptides-Func}& \multirow{1}{*}{15,535} & \multirow{1}{*}{150.9} & \multirow{1}{*}{307.3} & 10 &\multirow{1}{*}{Graph} &\multirow{1}{*}{Classification} & Average Precision\\
        \multirow{1}{*}{Peptides-Struct}& \multirow{1}{*}{15,535} & \multirow{1}{*}{150.9} & \multirow{1}{*}{307.3} & 11 (regression) &\multirow{1}{*}{Graph} &\multirow{1}{*}{Regression} & Mean Absolute Error\\
        \midrule
        \multicolumn{8}{c}{GNN Benchmark~\citep{dwivedi2023benchmarking}}\\
        \midrule
         \multirow{1}{*}{Pattern} & \multirow{1}{*}{14,000} & \multirow{1}{*}{118.9} & \multirow{1}{*}{3,039.3} & 2 & \multirow{1}{*}{Node} &\multirow{1}{*}{Classification} & Accuracy \\
        \multirow{1}{*}{MNIST} & 70,000 & 70.6 & 564.5 & 10 & Graph & Classification & Accuracy \\
         \multirow{1}{*}{CIFAR10} &\multirow{1}{*}{60,000} & \multirow{1}{*}{117.6} & \multirow{1}{*}{941.1} & 10 &\multirow{1}{*}{Graph} &\multirow{1}{*}{Classification} & Accuracy \\
        \multirow{1}{*}{MalNet-Tiny} & \multirow{1}{*}{5,000} & \multirow{1}{*}{1,410.3} & \multirow{1}{*}{2,859.9} & 5 &\multirow{1}{*}{Graph} &\multirow{1}{*}{Classification}& Accuracy \\ 
        \midrule
        \multicolumn{8}{c}{Heterophilic Benchmark \citep{platonov2023a}}\\
        \midrule
        \multirow{1}{*}{Roman-empire} & 1 & 22,662 & 32,927 &  18 & Node & Classification & Accuracy\\
        \multirow{1}{*}{Amazon-ratings} &\multirow{1}{*}{1} & \multirow{1}{*}{24,492} & \multirow{1}{*}{93,050} & \multirow{1}{*}{5} &\multirow{1}{*}{Node} & Classification & Accuracy  \\
        \multirow{1}{*}{Minesweeper} & 1 & \multirow{1}{*}{10,000} & \multirow{1}{*}{39,402} & \multirow{1}{*}{2} &\multirow{1}{*}{Node} & Classification & ROC AUC\\
        \multirow{1}{*}{Tolokers}&  \multirow{1}{*}{1} & \multirow{1}{*}{11,758} & \multirow{1}{*}{519,000} & \multirow{1}{*}{2} &\multirow{1}{*}{Node} & Classification & ROC AUC\\
        \midrule
        \multicolumn{8}{c}{Very Large Dataset~\citep{hu2020open}}\\
        \midrule
        arXiv-ogbn &  \multirow{1}{*}{1} & \multirow{1}{*}{169,343} & \multirow{1}{*}{1,166,243} & \multirow{1}{*}{40} &\multirow{1}{*}{Node} & Classification & Accuracy\\
        products-ogbn &  \multirow{1}{*}{1} & \multirow{1}{*}{2,449,029} & \multirow{1}{*}{61,859,140} & \multirow{1}{*}{47} &\multirow{1}{*}{Node} & Classification & Accuracy\\
         \midrule
         \multicolumn{8}{c}{Color-connectivty task~\citep{Rampek2021HierarchicalGN}}\\
        \midrule
        C-C 16x16 grid &  \multirow{1}{*}{15,000} & \multirow{1}{*}{256} & \multirow{1}{*}{480} & \multirow{1}{*}{2} &\multirow{1}{*}{Node} & Classification & Accuracy\\
        C-C 32x32 grid &  \multirow{1}{*}{15,000} & \multirow{1}{*}{1,024} & \multirow{1}{*}{1,984} & \multirow{1}{*}{2} &\multirow{1}{*}{Node} & Classification & Accuracy\\
        C-C Euroroad &  \multirow{1}{*}{15,000} & \multirow{1}{*}{1,174} & \multirow{1}{*}{1,417} & \multirow{1}{*}{2} &\multirow{1}{*}{Node} & Classification & Accuracy\\
        C-C Minnesota &  \multirow{1}{*}{6,000} & \multirow{1}{*}{2,642} & \multirow{1}{*}{3,304} & \multirow{1}{*}{2} &\multirow{1}{*}{Node} & Classification & Accuracy\\
    \toprule
    \end{tabular}
    \label{tab:dataset}
    }
\end{table}

\begin{figure*}
    \centering
    \includegraphics[width=0.9\linewidth]{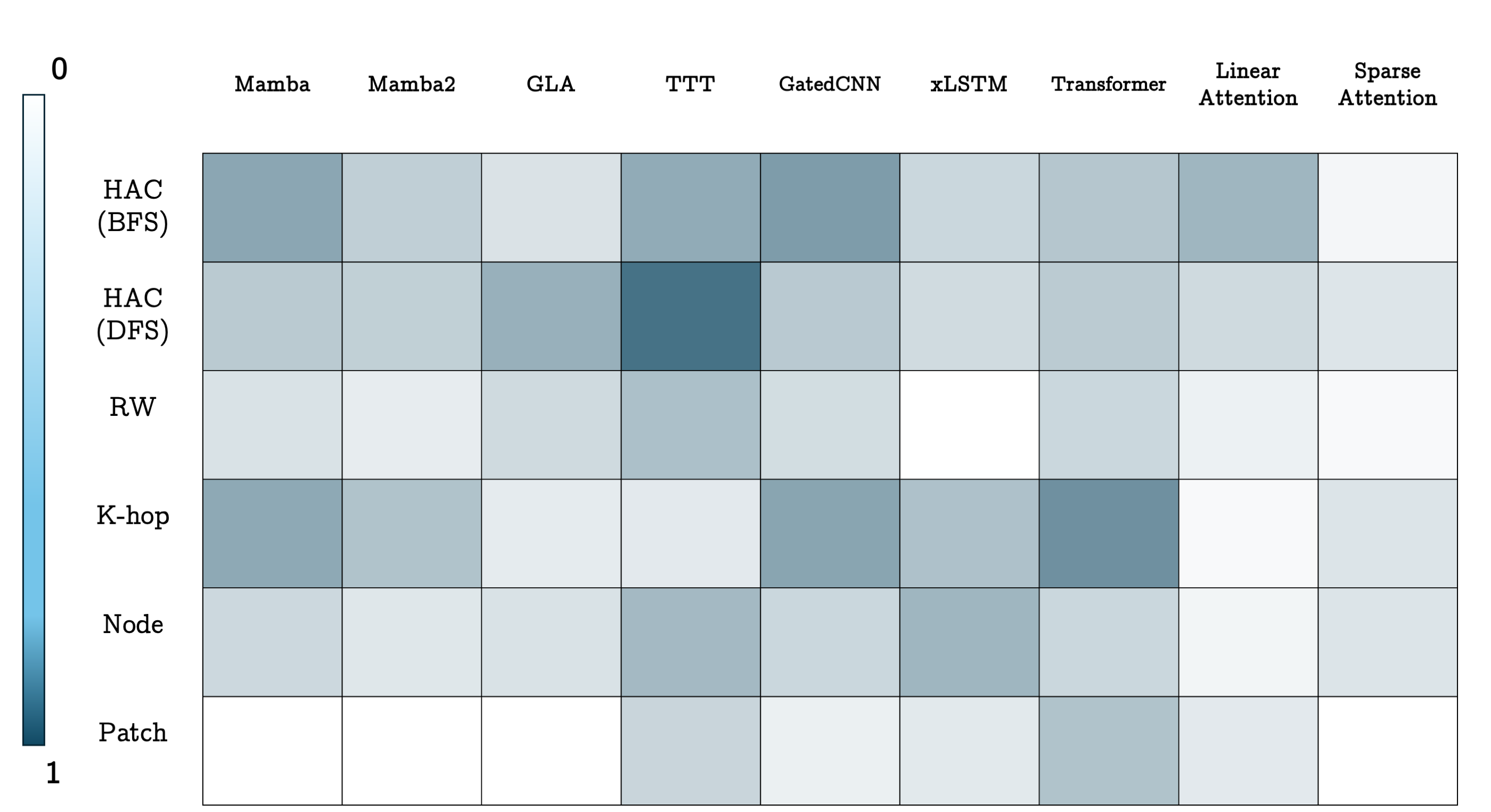}
    \caption{Normalized score of different combination of tokenization and global encoder (sequence models). Even TTT + HAC is in Top-3 only in 3/7 datasets.}.
    \label{fig:all}
\end{figure*}

\begin{table*}[t]
\begin{minipage}[c]{0.49\textwidth}
    \centering
\captionof{table}{\small Heterophilic datasets~\citep{platonov2023a}. The \first{first}, \second{second}, and \third{third} results are highlighted.}
    \resizebox{\linewidth}{!}{
    \begin{tabular}{l  c  c  c  c }
    \toprule
        \multirow{2}{*}{\textbf{Model}} & \textbf{Roman-empire}  & \textbf{Amazon-ratings} & \textbf{Minesweeper}\\
         & Accuracy $\uparrow$& Accuracy $\uparrow$ & ROC AUC $\uparrow$\\
         \midrule
         \midrule
         GCN & $0.7369_{\pm 0.0074}$ & $0.4870_{\pm 0.0063}$ & $0.8975_{\pm 0.0052}$  \\
         GraphSAGE &  $0.8574_{\pm 0.0067}$  & \third{$\mathbf{0.5363_{\pm 0.0039}}$}   & \second{$\mathbf{0.9351_{\pm 0.0057}}$}   \\
         GAT & $0.7973_{\pm 0.0039}$  & $0.5270_{\pm 0.0062}$   & \first{$\mathbf{0.9391_{\pm 0.0035}}$}   \\
         OrderedGNN   &  $0.7768_{\pm 0.0039}$  & $0.4729_{\pm 0.0065}$   & $0.8058_{\pm 0.0108}$  \\
         tGNN   &  $0.7995_{\pm 0.0075}$  & $0.4821_{\pm 0.0053}$   & \third{$\mathbf{0.9193_{\pm 0.0077}}$}   \\
         Gated-GCN & $0.7446_{\pm 0.0054}$ & $0.4300_{\pm 0.0032}$ & $0.8754_{\pm 0.0122}$ \\
         \midrule
         NAGphormer & $0.7434_{\pm 0.0077}$ & $0.5126_{\pm 0.0072}$ & $0.8419_{\pm 0.0066}$ \\
         GPS & $0.8200_{\pm 0.0061}$ & {$0.5310_{\pm 0.0042}$} & {$0.9063_{\pm 0.0067}$} \\
         Exphormer & {$0.8903_{\pm 0.0037}$} & {$0.5351_{\pm 0.0046}$} & {$0.9074_{\pm 0.0053}$} \\
         NodeFormer   &  $0.6449_{\pm 0.0073}$  & $0.4386_{\pm 0.0035}$   & $0.8671_{\pm 0.0088}$  \\
         DIFFormer   &  $0.7910_{\pm 0.0032}$  & $0.4784_{\pm 0.0065}$   & $0.9089_{\pm 0.0058}$  \\
         GOAT & $0.7159_{\pm 0.0125}$ & $0.4461_{\pm 0.0050}$ & $0.8109_{\pm 0.0102}$ \\
         GMN  & $0.8219_{\pm 0.0012}$ & $0.5327_{\pm 0.0030}$ & $0.8992_{\pm 0.0063}$  \\
         \midrule
         GSM++ (BFS) & \third{$\mathbf{0.9003_{\pm 0.0087}}$} & \second{$\mathbf{0.5381_{\pm 0.0035}}$} & $0.9109_{\pm 0.0098}$ \\
         GSM++ (DFS) & \second{$\mathbf{0.9124_{\pm 0.0023}}$} & $0.5361_{\pm 0.0029}$ & $0.9145_{\pm 0.0036}$ \\
         GSM++ (MoT) & \first{$\mathbf{0.9177_{\pm 0.0040}}$} & \first{$\mathbf{0.5390_{\pm 0.0104}}$} & {${0.9149_{\pm 0.0111}}^\dagger$} \\
    \toprule
    \multicolumn{4}{l}{$\dagger$ }GSM++ (all variants) achieve the best three results among all graph $\:\:\:\:\:\:\:$ \\
    \multicolumn{4}{l}{$\:\:\:$ sequence models.}
    \end{tabular}
    }
\label{tab:heterophilic2}
\end{minipage}~\hfill
\begin{minipage}[c]{0.48\textwidth}
    \centering
\captionof{table}{\small Long-Range Datasets~\citep{long-range-data}. The \first{first}, \second{second}, and \third{third} results are highlighted.}
    \resizebox{\linewidth}{!}{
    \begin{tabular}{l  c  c  c  }
    \toprule
        \multirow{2}{*}{\textbf{Model}} & \textbf{COCO-SP}  & \textbf{PascalVOC-SP} & \textbf{Peptides-Func}  \\
         & F1 score $\uparrow$&  F1 score $\uparrow$ & AP $\uparrow$\\
         \midrule
         \midrule
         GCN & $0.0841_{\pm 0.0010}$ & $0.1268_{\pm 0.0060}$ & $0.5930_{\pm 0.0023}$  \\
         GIN & $0.1339_{\pm 0.0044}$ & $0.1265_{\pm 0.0076}$ & $0.5498_{\pm 0.0079}$  \\
         Gated-GCN & $0.2641_{\pm 0.0045}$ & $0.2873_{\pm 0.0219}$ & $0.5864_{\pm 0.0077}$  \\
         GAT & $0.1296_{\pm 0.0028}$ &  $0.1753_{\pm 0.0329}$ &  $0.5308_{\pm 0.0019}$\\
         MixHop & - &  $0.2506_{\pm 0.0133}$ &  $0.6843_{\pm 0.0049}$\\
         DIGL & - &  $0.2921_{\pm 0.0038}$ &  $0.6830_{\pm 0.0026}$\\
         SPN & - &  $0.2056_{\pm 0.0338}$ &  $0.6926_{\pm 0.0247}$\\
         \midrule
         SAN+LapPE & $0.2592_{\pm 0.0158}$ & $0.3230_{\pm 0.0039}$ & $0.6384_{\pm 0.0121}$  \\
         NAGphormer & $0.3458_{\pm 0.0070}$ & $0.4006_{\pm 0.0061}$ & -  \\
         Graph ViT & - & - & {$0.6855_{\pm 0.0049}$} \\
         GPS & \third{$\mathbf{0.3774_{\pm 0.0150}}$} & $0.3689_{\pm 0.0131}$ & $0.6575_{\pm 0.0049}$ \\
         Exphormer & $0.3430_{\pm 0.0108}$ & $0.3975_{\pm 0.0037}$ & $0.6527_{\pm 0.0043}$  \\
         NodeFormer & $0.3275_{\pm 0.0241}$ & $0.4015_{\pm 0.0082}$ & -   \\
         DIFFormer & $0.3620_{\pm 0.0012}$ & $0.3988_{\pm 0.0045}$ & -   \\
         GRIT & - & - & $0.6988_{\pm 0.0082}$\\
         GRED & - & - & \second{$\mathbf{0.7085_{\pm 0.0027}}$}\\
         GMN & $0.3618_{\pm 0.0053}$ & \third{$\mathbf{0.4169_{\pm 0.0103}}$} & {$0.6860_{\pm 0.0012}$}  \\
         \midrule
          GSM++ (BFS) & \second{$\mathbf{0.3789_{\pm 0.0160}}$} & $0.4128_{\pm 0.0027}$ & $0.6991_{\pm 0.0008}$ \\
          GSM++ (DFS) &$0.3769_{\pm 0.0027}$ & \second{$\mathbf{0.4174_{\pm 0.0031}}$} & \third{$\mathbf{0.7019_{\pm 0.0084}}$} \\
          GSM++ (MoT) & \first{$\mathbf{0.3801_{\pm 0.0122}}$} & \first{$\mathbf{0.4193_{\pm 0.0075}}$} & \first{$\mathbf{0.7092_{\pm 0.0076}}$} \\
    \toprule
    \end{tabular}
    }
    \label{tab:LRGD2}
\end{minipage}
\vskip -15pt
\end{table*}

\section{Additional Experimental Results}\label{app:experiments}
\subsection{Benchmark Datasets}
We report the results of GSM++ and baselines on herophilic and long-range benchmark datasets in \textcolor{c1}{Tables}~\ref{tab:heterophilic2} and \ref{tab:LRGD2}. GSM++ variants (i.e., tokenization with BFS, DFS, or MoT) achieve the best results among other graph sequence models, where GSM++(MoT) achieves the best results in 5/6 datasets. These results validate the effectiveness of the architecture design of our model.

\subsection{Which GSM Is More Effective In Practice?}
To answer this question, we perform an extensive evaluation with all the combinations of 9 different sequence models and 6 types of tokenizers over 7 datasets of Citeseer, Cora, Computer, CIFAR10, Photo, PATTERN, and Peptides-Func from \citet{long-range-data, dwivedi2023benchmarking, chen2023nagphormer}. Due to the number of cases ($9 \times 6$ = 54 models with $54 \times 7 = 378$ experiments), we visualize the rank of the model (higher is better), instead of reporting them in a table. The normalized results are reported in \autoref{fig:all}. These results indicate that there is no model that significantly outperforms others in most cases, validating our theoretical results that each of the sequence models as well as the types of tokenization has their own advantages and disadvantages. Accordingly, we need to understand the spacial traits of these models and use them properly based on the dataset and the task. Following our results, we \emph{conjecture} that the no free lunch theorem applies for the Graph2Sequence problem.

\subsection{Efficiency for Large Datasets}\label{app:efficiency-large-graphs}
In this section, we compare the training time, memory usage, and performance of the variants of GSM++ with other efficient graph sequence models on large graphs. The results are reported in \autoref{tab:efficiency-arxiv-ogbn}. With respect to scalability, all variants of GSM++ can scale to these large graphs. With respect to the performance, GSM++ variants achieve all first three places (except the second place on products-ogbn dataset), which shows the effectiveness of this architecture design. 

These results further shows the effectiveness and efficiency of MoT approach. Since this method uses a router, it is more memory efficient than GSM++ with BFS traverse, and it's training time is competitive with GSM++ with DFS traverse. Notably, these efficiency results are achieved byb GSM++ with MoT while it ouperforms all the baselines in both datasets.

\begin{table}[ht]
    \centering
    \caption{Efficiency evaluation on large graphs. The \first{first}, \second{second}, and \third{third} results for each metric are highlighted. OOM: Out of memory.}
    \label{tab:efficiency-arxiv-ogbn}
    \resizebox{\linewidth}{!}{
    \begin{tabular}{l c c c c c c c c c c}
    \toprule
        \multirow{2}{*}{Model} & \multirow{2}{*}{GatedGCN} & \multirow{2}{*}{NAGphormer} & \multirow{2}{*}{GPS} & \multirow{2}{*}{Exphormer} & \multirow{2}{*}{GOAT} & \multirow{2}{*}{GRIT} & \multirow{2}{*}{GMN} & \multicolumn{3}{c}{GSM++}\\
        \cmidrule(lr){9-11}
        & & & & & & & & BFS & DFS & MoT\\
        \midrule
        \multicolumn{11}{c}{arXiv-ogbn} \\
        \midrule
        Performance             & 0.7141 & 0.7013 &  OOM & 0.7228 & 0.7196 & OOM & {0.7248} & \second{0.7297} & \third{0.7261} & \first{0.7301}\\
        Memory Usage (GB)       & 11.87  & \third{6.81}   & OOM  & 37.01  & 13.12 & OOM & \second{5.63} & 24.8 & \first{4.7} & 14.9 \\
        Training Time/Epoch (s) & \first{1.94}   & 5.96   & OOM  & 2.15   & 8.69 & OOM & \second{1.78} & 2.33 & \third{1.95} & 4.16 \\
        \midrule
        \multicolumn{11}{c}{products-ogbn} \\
        \midrule
        Performance             & 0.0000 & 0.0000 &  OOM & OOM & \second{0.8200} & OOM & OOM & {0.8071} & \third{0.8080} & \first{0.8213}\\
        Memory Usage (GB)      & \third{11.13} & \second{10.04} &  OOM & OOM & 12.06 & OOM & OOM & {38.14} & \first{9.15} & {11.96}\\
        Training Time/Epoch (s) & \first{1.92} & 12.08 &  OOM & OOM & 29.50 & OOM & OOM & \second{6.97} & {12.19} & \third{11.87}\\
    \toprule
    \end{tabular}
    }
\end{table}

\newpage
\subsection{Efficiency of HAC Tokenization}\label{app:HAC-efficiency}
\begin{wrapfigure}{r}{0.3\textwidth}
  \centering
  \vspace{-8ex}
    \includegraphics[width=\linewidth]{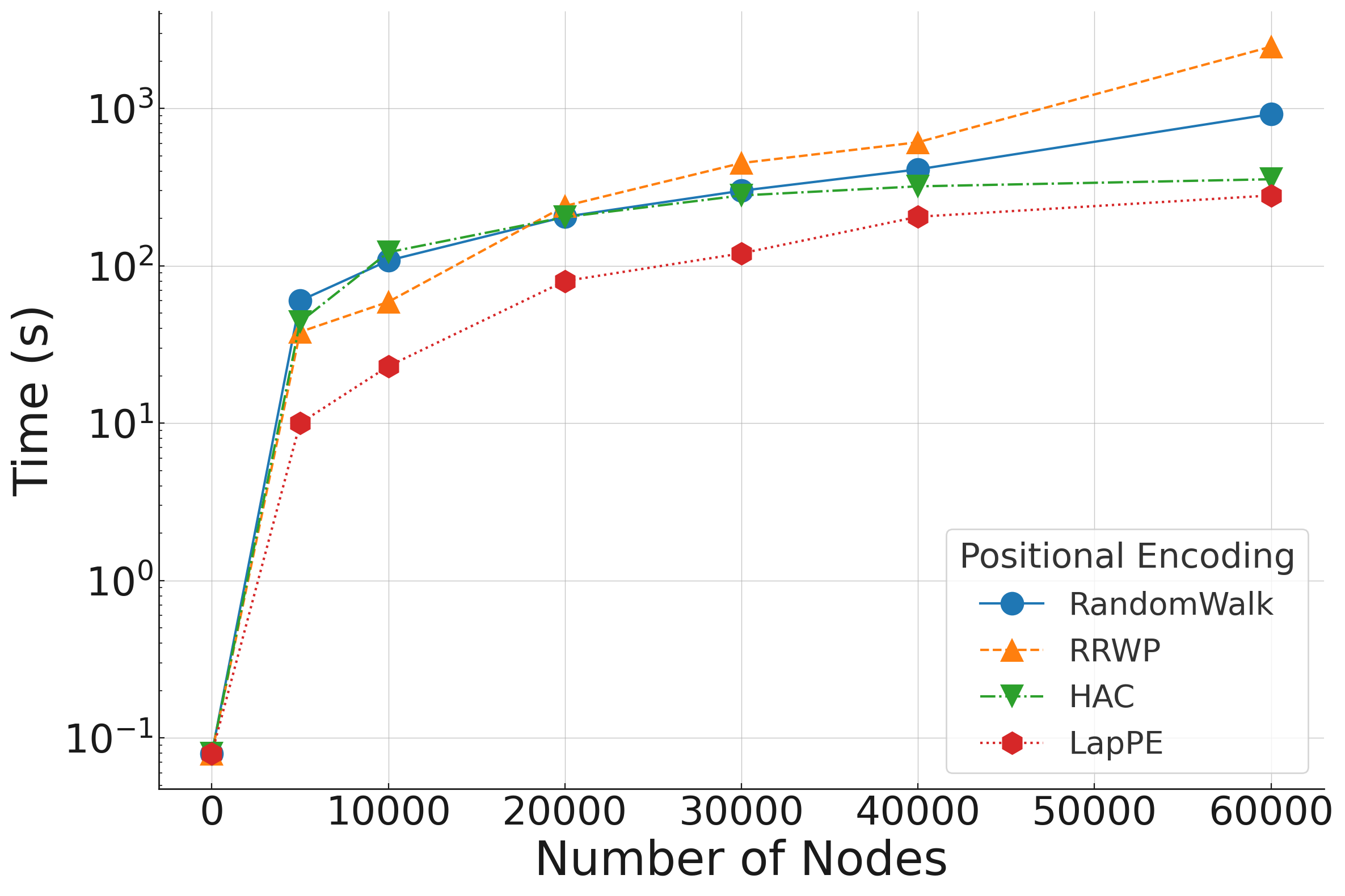}
    \caption{The effect of number of nodes on the preprocessing time for the construction of positional encodings.}
    \vspace{-2ex}
    \label{fig:PE-efficiency}
\end{wrapfigure}
In this section, we evaluate the efficiency of the HAC tokenization and compare its computing time with other commonly used positional encodings (PEs) in the literature~\citep{GPS, behrouz2024graph, ma2023graph}. Please note that the construction of positional encoding is a one-time cost, which can be done as a preprocessing before training, and so cannot significantly affect the total training time. We compare the computing time of HAC~\citep{hac} with random-walk-based PE~\citep{behrouz2024graph}, Laplacian-based PE~\citep{GPS}, and Relative Random Walk PE~\citep{ma2023graph}. The results are reported in \autoref{fig:PE-efficiency}. HAC's computing time is competitive with other PEs's and scales more smoothly with the number of nodes. That is, the main efficiency gain of HAC is when we are dealing with large graphs. Furthermore, note that in practice, HAC is highly parallelizable and can scale to graphs with billions of nodes and trillion of edges in less than one hour~\citep{hac}.

\section{Time Complexity of GSM++}\label{app:time-complexity}
In this section, we analyze the time complexity of GSM++ and compare it with state-of-the-art efficient models. We let $n$ be the number of tokens, $d_{\text{in}}$ be the dimension of the feature vectors (or PE), $d_{\text{ssm}}$ be the first dimension of the SSM layers' output (or the input dimension of the transformer layer), $C$ be the number of channels. and $B$, be the batch size. Given the fact that the input of the transformer block has the dimension of $d_{\text{ssm}}$, the complexity of the training for transformer block is $\mathcal{O}\left( d_{\text{ssm}}^2 \right)$. The input of the SSM blocks has the dimension of the $n \times d_{\text{in}}$ and so given the fact that Mamba has a linear-time training~\citep{gu2023mamba}, the training time for SSM layers is $\mathcal{O}\left( n \times d_{\text{in}}^2 \right)$. Therefore, the training time cost of GSM++ is  $\mathcal{O}\left( d_{\text{in}}^2 \times n +  d_{\text{ssm}}^2\right)$, which is linear with respect to the graph size $n$. For the BFS traverse, GSM++ (BFS), we have $n = |V|$ and so the time complexity is $\mathcal{O}\left( d_{\text{in}}^2 \times |V| +  d_{\text{ssm}}^2\right)$ . In the case of DFS traverse, $n$ is the number of tokens, which is at most $\log\left(|V|\right)$. Therefore, for GSM++ (DFS) the time complexity is $\mathcal{O}\left( d_{\text{in}}^2 \times \log\left(|V|\right) |V| +  d_{\text{ssm}}^2 |V|\right)$, which is sub-quadratic. In practice, $\log\left(|V|\right) \ll 11$ and so one can argue GSM++ (DFS) also has a linear time complexity. Finally, note that for the MoT, we concatenate the outputs of two different tokenization and so it requires a projection from $2 \times d_{\text{in}}$ to $d_{\text{in}}$, which requires $\mathcal{O}\left( 2d_{\text{in}}^2 \right)$ additional parameters. In practice, $d_{\text{in}} \approx 100$ and so this results in about 10,000 additional parameters, which is negligible.

\end{document}